\def\ie{{\it i.e.},~}
\def\eg{{\it e.g.},~}
\def\bif{\bf if~}
\def\belif{{\bf else if~}}
\def\bthen{{\bf then predict~}}
\def\belse{{\bf else predict~}}
\def \band{{\bf ~and~}}
\def\E{\mathbb{E}}
\def\P{\mathbb{P}}
\def\one{\mathds{1}}
\newcommand{\x}{\mathbf{x}}
\newcommand{\y}{\mathbf{y}}
\def\RL{{d}}
\def\Prefix{d_p}
\def\Labels{\delta_p}
\def\Default{q_0}
\def\RLB{{D}}
\def\PrefixB{D_p}
\def\LabelsB{\Delta_p}
\def\DefaultB{Q_0}
\def\Obj{R}
\def\Loss{\ell}
\def\Reg{{\lambda}}
\def\RuleSet{S}
\def\Cap{\textnormal{cap}}
\def\Supp{\textnormal{supp}}
\def\Count{\textnormal{sum}}
\def\InitialObj{R^0}
\def\InitialRL{d^0}
\def\CurrentObj{{R^c}}
\def\CurrentRL{d^c}
\def\OptimalObj{R^*}
\def\OptimalRL{d^*}
\def\Remaining{\Gamma}
\def\TotalRemaining{{\Gamma_{\textnormal{tot}}}}
\def\StartsWith{\sigma}
\def\StartContains{\phi}
\def\Queue{Q}
\DeclareMathOperator*{\argmin}{argmin}
\DeclareMathOperator*{\argmax}{argmax}
\def\Curiosity{\mathcal{C}}
\def\NCap{N_{\textnormal{cap}}}
\def\Continue{\textbf{continue}}
\def\PMap{P}
\def\one{\mathds{1}}
\newcommand{\nn}{\nonumber}
\newcommand{\be}{\begin{equation}}
\newcommand{\ee}{\end{equation}}
\newcommand{\bea}{\begin{eqnarray}}
\newcommand{\eea}{\end{eqnarray}}
\newcommand{\given}{\,|\,}
\begin{document}

\title{Learning Certifiably Optimal Rule Lists for Categorical Data}

\author{\name Elaine Angelino \email elaine@eecs.berkeley.edu \\
        \addr Department of Electrical Engineering and Computer Sciences\\
        University of California, Berkeley,
        Berkeley, CA 94720
        \AND
        \name Nicholas Larus-Stone \email nlarusstone@alumni.harvard.edu \\
        \name Daniel Alabi \email alabid@g.harvard.edu \\
        \name Margo Seltzer \email margo@eecs.harvard.edu \\
        \addr School of Engineering and Applied Sciences\\
        Harvard University,
        Cambridge, MA 02138
        \AND
        \name Cynthia Rudin$^*$ \email cynthia@cs.duke.edu \\
        \addr Department of Computer Science and
        Department of Electrical and Computer Engineering\\
        Duke University,
        Durham, NC 27708}

\editor{Maya Gupta\\ $^*$To whom correspondence should be addressed.}

\maketitle

\begin{abstract}
We present the design and implementation of a custom discrete optimization
technique for building rule lists over a categorical feature space.
Our algorithm produces rule lists with optimal training performance,
according to the regularized empirical risk, with a certificate of optimality.
By leveraging algorithmic bounds, efficient data structures,
and computational reuse, we achieve several orders of magnitude speedup in time
and a massive reduction of memory consumption.
We demonstrate that our approach produces optimal rule lists on practical
problems in seconds.
Our results indicate that it is possible to construct optimal sparse rule lists that are
approximately as accurate as the COMPAS proprietary risk prediction tool on data from
Broward County, Florida, but that are completely interpretable.
This framework is a novel alternative to CART and other decision tree methods for interpretable modeling.
\end{abstract}

\begin{keywords}
    rule lists, decision trees, optimization, interpretable models, criminal justice applications
\end{keywords}

\section{Introduction}

As machine learning continues to gain prominence in socially-important decision-making,
the interpretability of predictive models remains a crucial problem.
Our goal is to build models that are highly predictive, transparent, and easily understood by humans.
We use rule lists, also known as decision lists, to achieve this goal.
Rule lists are predictive models composed of if-then statements;
these models are interpretable because the rules provide a reason for each prediction~(Figure~\ref{fig:rule-list}).

Constructing rule lists, or more generally, decision trees, has been a challenge for more than
30 years; most approaches use greedy splitting techniques~\citep{Rivest87,Breiman84,Quinlan93}.
Recent approaches use Bayesian analysis, either to find a locally optimal solution~\citep{Chipman:1998jh} or to explore the search space~\citep{LethamRuMcMa15, YangRuSe16}.
These approaches achieve high accuracy while also managing to run reasonably quickly.
However, despite the apparent accuracy of the rule lists generated by these algorithms,
there is no way to determine either if the generated rule list is optimal or how close it is to optimal,
where optimality is defined with respect to minimization of a regularized loss function.

\begin{arxiv}
\begin{figure}[t!]
\begin{algorithmic}
\State \bif $(age=18-20) \band (sex=male)$ \bthen $yes$
\State \belif $(age=21-23)	 \band (priors=2-3)$ \bthen $yes$
\State \belif $(priors>3)$ \bthen $yes$
\State \belse $no$
\end{algorithmic}
\caption{An example rule list that predicts two-year recidivism
for the ProPublica data set, found by CORELS.
}
\label{fig:rule-list}
\end{figure}
\end{arxiv}

Optimality is important, because there are societal implications for a lack of optimality.
Consider the ProPublica article on the Correctional Offender Management Profiling for Alternative Sanctions
(COMPAS) recidivism prediction tool~\citep{LarsonMaKiAn16}.
It highlights a case where a black box, proprietary predictive model is being used for recidivism prediction.
The authors hypothesize that the COMPAS scores are racially biased,
but since the model is not transparent, no one (outside of the creators of COMPAS)
can determine the reason or extent of the bias~\citep{LarsonMaKiAn16},
nor can anyone determine the reason for any particular prediction.
By using COMPAS, users implicitly assumed that a transparent model
would not be sufficiently accurate for recidivism prediction,
\ie they assumed that a black box model would provide better accuracy.
We wondered whether there was indeed no transparent and sufficiently accurate model.
Answering this question requires solving a computationally hard problem.
Namely, we would like to both find a transparent model that is optimal
within a particular pre-determined class of models
and produce a certificate of its optimality, with respect to the regularized empirical risk.
This would enable one to say, for this problem and model class,
with certainty and before resorting to black box methods,
whether there exists a transparent~model.
While there may be differences between training and test performance,
finding the simplest model with optimal training performance is prescribed by
statistical learning theory.

To that end, we consider the class of rule lists assembled from pre-mined frequent itemsets
and search for an optimal rule list that minimizes a regularized risk function,~$R$.
This is a hard discrete optimization problem.
Brute force solutions that minimize~$R$ are computationally prohibitive
due to the exponential number of possible rule lists.
However, this is a worst case bound that is not realized in practical settings.
For realistic cases, it is possible to solve fairly large cases of this problem to optimality,
with the careful use of algorithms, data structures, and implementation techniques.

\begin{kdd}
\begin{figure}[b!]
\vspace{-3mm}
\begin{algorithmic}
\normalsize
\State \bif $(age=23-25) \band (priors=2-3)$ \bthen $yes$
\State \belif $(age=18-20)$ \bthen $yes$
\State \belif $(sex=male) \band (age=21-22)$ \bthen $yes$
\State \belif $(priors>3)$ \bthen $yes$
\State \belse $no$
\end{algorithmic}
\vspace{-3mm}
\caption{An example rule list that predicts two-year recidivism
for the ProPublica data set, found by CORELS.
}
\label{fig:rule-list}
\end{figure}
\end{kdd}

We develop specialized tools from the fields of discrete optimization and artificial intelligence.
Specifically, we introduce a special branch-and bound algorithm, called
Certifiably Optimal RulE ListS (CORELS), that provides the optimal solution
according to the training objective, along with a certificate of optimality.
The certificate of optimality means that we can investigate how close other models
(\eg models provided by greedy algorithms) are to optimal.

\begin{arxiv}
Within its branch-and-bound procedure, CORELS maintains a lower bound on the
minimum value of~$R$ that each incomplete rule list can achieve.
This allows CORELS to prune an incomplete rule list (and every possible extension)
if the bound is larger than the error of the best rule list that it has already evaluated.
The use of careful bounding techniques leads to massive pruning of
the search space of potential rule lists.
The algorithm continues to consider incomplete and complete rule lists until it has either
examined or eliminated every rule list from consideration.
Thus, CORELS terminates with the optimal rule list and a certificate of optimality.
\end{arxiv}

The efficiency of CORELS depends on how much of the search space our bounds
allow us to prune; we seek a tight lower bound on~$R$.
The bound we maintain throughout execution is a maximum of several bounds
that come in three categories.
The first category of bounds are those intrinsic to the rules themselves.
This category includes bounds stating that each rule must capture sufficient data;
if not, the rule list is provably non-optimal.
The second type of bound compares a lower bound on the value of~$R$
to that of the current best solution.
This allows us to exclude parts of the search space that could never be better
than our current solution.
Finally, our last type of bound is based on comparing incomplete rule lists that
capture the same data and allows us to pursue only the most accurate option.
This last class of bounds is especially important---without our use of a novel
\textit{symmetry-aware map}, we are unable to solve most problems of reasonable scale.
This symmetry-aware map keeps track of the best accuracy
over all observed permutations of a given incomplete rule list.

We keep track of these bounds using a modified \emph{prefix tree},
a data structure also known as a trie.
Each node in the prefix tree represents an individual rule;
thus, each path in the tree represents a rule list such that
the final node in the path contains metrics about that rule list.
This tree structure, together with a search policy and sometimes a queue,
enables a variety of strategies, including breadth-first,
best-first, and stochastic search.
In particular, we can design different best-first strategies
by customizing how we order elements in a priority queue.
In addition, we are able to limit the number of nodes in the trie
and thereby enable tuning of space-time tradeoffs in a robust manner.
This trie structure is a useful way of organizing the generation
and evaluation of rule lists.

\begin{arxiv}
We evaluated CORELS on a number of publicly available data sets.
Our metric of success was 10-fold cross-validated prediction accuracy on a subset of the data.
These data sets involve hundreds of rules and thousands of observations.
CORELS is generally able to find an optimal rule list in a matter of seconds
and certify its optimality within about 10 minutes.
We show that we are able to achieve better or similar out-of-sample accuracy on these
data sets compared to the popular greedy algorithms, CART and C4.5.
\end{arxiv}

CORELS targets large (not massive) problems,
where interpretability and certifiable optimality are important.
We illustrate the efficacy of our approach using (1)~the ProPublica COMPAS data set~\citep{LarsonMaKiAn16}, for the problem of two-year recidivism prediction,
and (2)~stop-and-frisk data sets from the
\begin{kdd}
New York Civil Liberties Union (NYCLU)~\citep{nyclu:2014},
\end{kdd}
\begin{arxiv}
NYPD~\citep{nypd} and the NYCLU~\citep{nyclu:2014},
\end{arxiv}
to predict whether a weapon will be found
on a stopped individual who is frisked or searched.
On these data, we produce certifiably optimal, interpretable rule lists that achieve
the same accuracy as approaches such as random forests.
This calls into question the need for use of a proprietary,
black box algorithm for recidivism prediction.

Our work overlaps with the thesis of~\citet{Larus-Stone17}.
We have also written a
\begin{kdd}
long version of this report that includes proofs to all
bounds in~\S\ref{sec:framework}, additional bounds and empirical results,
and further implementation and data processing details~\citep{AngelinoLaAlSeRu17}. \\

Our code is at \textbf{\url{https://github.com/nlarusstone/corels}}.
\end{kdd}
\begin{arxiv}
preliminary conference version of this article~\citep{AngelinoLaAlSeRu17-kdd}, and a report
highlighting systems optimizations of our implementation~\citep{Larus-Stone18-sysml}; the latter includes
additional empirical measurements not presented here. \\

Our code is at \textbf{\url{https://github.com/nlarusstone/corels}},
where we provide the C++ implementation we used in our experiments~(\S\ref{sec:experiments}).
%
\citet{corels-website} have also created an interactive web interface at
\textbf{\url{https://corels.eecs.harvard.edu}}, where a user can upload data and
run CORELS from a browser.
\end{arxiv}

\section{Related Work}

Since every rule list is a decision tree and every decision tree can be expressed as an equivalent rule list, the problem we are solving is a version of the ``optimal decision tree'' problem, though regularization changes the nature of the problem (as shown through our bounds). The optimal decision tree problem is computationally hard, though since the late 1990's, there has been research on building optimal decision trees using optimization techniques~\citep{Bennett96optimaldecision,dobkininduction,FarhangfarGZ08}. A particularly interesting paper along these lines is that of \citet{NijssenFromont2010}, who created a ``bottom-up'' way to form optimal decision trees. Their method performs an expensive search step, mining all possible leaves (rather than all possible rules), and uses those leaves to form trees. Their method can lead to memory problems, but it is possible that these memory issues can be mitigated using the theorems in this paper.\footnote{There is no public version of their code for distribution as of this writing.} None of these methods used the tight bounds and data structures of CORELS.

Because the optimal decision tree problem is hard, there are a huge number of algorithms such as CART \citep{Breiman84} and C4.5 \citep{Quinlan93} that do not perform exploration of the search space beyond greedy splitting. Similarly, there are decision list and associative classification methods that construct rule lists iteratively in a greedy way
\citep{Rivest87,Liu98,Li01,Yin03,Sokolova03,Marchand05,Vanhoof10,RudinLeMa13}.
Some exploration of the search space is done by Bayesian decision tree methods~\citep{Dension:1998hl,Chipman:2002hc,Chipman10} and Bayesian rule-based methods \citep{LethamRuMcMa15,YangRuSe16}. The space of trees of a given depth is much larger than the space of
rule lists of that same depth, and the trees within the Bayesian tree algorithms
are grown in a top-down greedy way. Because of this, authors of Bayesian tree algorithms have noted that their MCMC chains tend to reach only locally optimal solutions.
The RIPPER algorithm \citep{ripper} is similar to the Bayesian tree methods in that it grows, prunes, and then locally optimizes.
The space of rule lists is smaller than that of trees, and has simpler structure.
Consequently, Bayesian rule list algorithms tend to be more successful at
escaping local minima and can introduce methods of exploring the search space
that exploit this structure---these properties motivate our focus on lists.
That said, the tightest bounds for the Bayesian lists \citep[namely, those of][upon whose work we build]{YangRuSe16},
are not nearly as tight as those of CORELS.

Tight bounds, on the other hand, have been developed for the (immense) literature on building disjunctive normal form (DNF) models; a good example of this is the work of \citet{Rijnbeek10}.
For models of a given size, since the class of DNF's is a proper subset of decision lists, our framework can be restricted to learn optimal DNF's.
The field of DNF learning includes work from the fields of rule learning/induction \citep[\eg early algorithms by][]{Michalski1969,ClarkNiblett1989,Frank1998} and associative classification \citep{Vanhoof10}.
Most papers in these fields aim to carefully guide the search through the space of models. If we were to place a restriction on our code to learn DNF's, which would require restricting predictions within the list to the positive class only, we could potentially use methods from rule learning and associative classification to help order CORELS' queue, which would in turn help us eliminate parts of the search space more quickly.

Some of our bounds, including the minimum support bound
(\S\ref{sec:lb-support}, Theorem~\ref{thm:min-capture}),
come from \citet{RudinEr16}, who provide flexible mixed-integer programming (MIP)
formulations using the same objective as we use here;
MIP solvers in general cannot compete with the speed of CORELS.

CORELS depends on pre-mined rules, which we obtain here via enumeration.
The literature on association rule mining is huge, and any method for rule mining could be reasonably substituted.

CORELS' main use is for producing interpretable predictive models. There is a growing interest in interpretable (transparent, comprehensible) models because of their societal importance \citep[see][]{ruping2006learning,bratko1997machine,dawes1979robust,VellidoEtAl12,Giraud98,Holte93,Schmueli10,Huysmans11,Freitas14}. There are now regulations on algorithmic decision-making in the European Union on the ``right to an explanation'' \citep{Goodman2016EU} that would legally require interpretability of predictions. There is work in both the DNF literature \citep{Ruckert2008} and decision tree literature \citep{GarofalakisHyRaSh00} on building interpretable models. Interpretable models must be so sparse that they need to be heavily optimized; heuristics tend to produce either inaccurate or non-sparse models.

Interpretability has many meanings, and it is possible to extend the ideas in this work to other definitions of interpretability; these rule lists may have exotic constraints that help with ease-of-use. For example, Falling Rule Lists \citep{WangRu15} are constrained to have decreasing probabilities down the list, which makes it easier to assess whether an observation is in a high risk subgroup. In parallel to this paper, we have been working on an algorithm for Falling Rule Lists \citep{ChenRu18} with bounds similar to those presented here, but even CORELS' basic support bounds do not hold for the falling case, which is much more complicated. One advantage of the approach taken by \citet{ChenRu18} is that it can handle class imbalance by weighting the positive and negative classes differently; this extension is possible in CORELS but not addressed here.

The models produced by CORELS are predictive only; they cannot be used for policy-making because they are not causal models, they do not include the costs of true and false positives, nor the cost of gathering information. It is possible to adapt CORELS' framework for causal inference \citep{WangRu15CFRL}, dynamic treatment regimes \citep{ZhangEtAl15}, or cost-sensitive dynamic treatment regimes \citep{LakkarajuRu17} to help with policy design.  CORELS could potentially be adapted to handle these kinds of interesting problems.

\section{Learning Optimal Rule Lists}
\label{sec:framework}

\begin{arxiv}
In this section, we present our framework for learning certifiably optimal rule lists.
First, we define our setting and useful notation~(\S\ref{sec:setup})
and then the objective function we seek to minimize~(\S\ref{sec:objective}).
Next, we describe the principal structure of our optimization algorithm~(\S\ref{sec:optimization}), which depends on a hierarchically
structured objective lower bound~(\S\ref{sec:hierarchical}).
We then derive a series of additional bounds that we incorporate into our
algorithm, because they enable aggressive pruning of our state space.
\end{arxiv}

\subsection{Notation}
\label{sec:setup}

\begin{arxiv}
We restrict our setting to binary classification,
where rule lists are Boolean functions;
this framework is straightforward to generalize to multi-class classification.
\end{arxiv}
\begin{kdd}
We restrict our setting to binary classification.
\end{kdd}
Let~${\{(x_n, y_n)\}_{n=1}^N}$ denote training data,
where ${x_n \in \{0, 1\}^J}$ are binary features and ${y_n \in \{0, 1\}}$ are labels.
Let~${\x = \{x_n\}_{n=1}^N}$ and~${\y = \{y_n\}_{n=1}^N}$,
and let~${x_{n,j}}$ denote the $j$-th feature of~$x_n$.

A rule list ${\RL = (r_1, r_2, \dots, r_K, r_0)}$ of length~${K \ge 0}$
is a ${(K+1)}$-tuple consisting of~$K$ distinct association rules,
${r_k = p_k \rightarrow q_k}$, for ${k = 1, \dots, K}$,
followed by a default rule~$r_0$.
\begin{arxiv}
Figure~\ref{fig:rule-list-symbols} illustrates
a rule list, ${\RL =}$ ${(r_1, r_2, r_3, r_0)}$,
which for clarity, we sometimes call a $K$-rule list.
\end{arxiv}
\begin{kdd}
Figure~\ref{fig:rule-list} illustrates a 3-rule list,
${\RL =}$ ${(r_1, r_2, r_3, r_0)}$.
\end{kdd}
An association rule~${r = p \rightarrow q}$ is an implication
corresponding to the conditional statement, ``if~$p$, then~$q$.''
In our setting, an antecedent~$p$ is a Boolean assertion that
evaluates to either true or false for each datum~$x_n$,
and a consequent~$q$ is a label prediction.
For example, ${(x_{n, 1} = 0) \wedge (x_{n, 3} = 1) \rightarrow}$ ${(y_n = 1)}$
is an association rule.
%
%
The final default rule~$r_0$ in a rule list can be thought of
as a
\begin{arxiv}
special
\end{arxiv}
association rule~${p_0 \rightarrow q_0}$
whose antecedent~$p_0$ simply asserts true.

\begin{arxiv}
\begin{figure}[t!]
\small
\begin{subfigure}{0.67\textwidth}
\begin{algorithmic}
\State \bif $(age=18-20) \band (sex=male)$ \bthen $yes$
\State \belif $(age=21-23)	 \band (priors=2-3)$ \bthen $yes$
\State \belif $(priors>3)$ \bthen $yes$
\State \belse $no$
\end{algorithmic}
\end{subfigure}
\hfill
\begin{subfigure}{0.32\textwidth}
\begin{algorithmic}
\State \bif $p_1$ \bthen $q_1$
\State \belif $p_2$ \bthen $q_2$
\State \belif $p_3$ \bthen $q_3$
\State \belse $q_0$
\end{algorithmic}
\end{subfigure}
\caption{The rule list ${\RL = (r_1, r_2, r_3, r_0)}$.
Each rule is of the form ${r_k = p_k \rightarrow q_k}$,
for all ${k = 0, \dots, 3}$.
We can also express this rule list as ${\RL = (\Prefix, \Labels, \Default, K)}$,
where ${\Prefix = (p_1, p_2, p_3)}$, ${\Labels = (1, 1, 1, 1)}$,
${\Default = 0}$, and ${K=3}$.
This is the same 3-rule list as in Figure~\ref{fig:rule-list},
that predicts two-year recidivism for the ProPublica data set.
}
\label{fig:rule-list-symbols}
\end{figure}
\end{arxiv}

Let ${\RL = (r_1, r_2, \dots, r_K, r_0)}$ be a
\begin{arxiv}
$K$-rule list,
\end{arxiv}
\begin{kdd}
rule list,
\end{kdd}
where ${r_k = p_k \rightarrow q_k}$ for each ${k = 0, \dots, K}$.
We introduce a useful alternate rule list representation:
${\RL = (\Prefix, \Labels, \Default, K)}$,
where we define ${\Prefix = (p_1, \dots, p_K)}$ to be $\RL$'s prefix,
${\Labels = (q_1, \dots, q_K) \in \{0, 1\}^K}$~gives
the label predictions associated with~$\Prefix$,
and ${\Default \in \{0, 1\}}$ is the default label prediction.
For
\begin{arxiv}
example, for
\end{arxiv}
the rule list in Figure~\ref{fig:rule-list},
we would write ${\RL = (\Prefix, \Labels, \Default, K)}$,
where ${\Prefix = (p_1, p_2, p_3)}$, ${\Labels = (1, 1, 1)}$,
${\Default = 0}$, and ${K=3}$.
Note that ${((), (), q_0, 0)}$ is a well-defined rule list with an empty prefix;
it is completely defined by a single default rule.

Let ${\Prefix = (p_1, \dots, p_k, \dots, p_K)}$ be an antecedent list,
then for any ${k \le K}$, we define ${\Prefix^k =}$ ${(p_1, \dots, p_k)}$
to be the $k$-prefix of~$\Prefix$.
For any such $k$-prefix~$\Prefix^k$,
we say that~$\Prefix$ starts with~$\Prefix^k$.
For any given space of rule lists,
we define~$\StartsWith(\Prefix)$ to be the set of
all rule lists whose prefixes start with~$\Prefix$:
\begin{align}
\StartsWith(\Prefix) =
\{(\Prefix', \Labels', \Default', K') : \Prefix' \textnormal{ starts with } \Prefix \}.
\label{eq:starts-with}
\end{align}
%
If ${\Prefix = (p_1, \dots, p_K)}$ and ${\Prefix' = (p_1, \dots, p_K, p_{K+1})}$
are two prefixes such that~$\Prefix'$ starts with~$\Prefix$ and extends it by
a single antecedent, we say that~$\Prefix$ is the parent of~$\Prefix'$
and that~$\Prefix'$ is a child of~$\Prefix$.

A rule list~$\RL$ classifies datum~$x_n$ by providing the label prediction~$q_k$
of the first rule~$r_k$ whose antecedent~$p_k$ is true for~$x_n$.
We say that an antecedent~$p_k$ of antecedent list~$\Prefix$ captures~$x_n$
in the context of~$\Prefix$ if~$p_k$ is the first antecedent in~$\Prefix$ that
evaluates to true for~$x_n$.
\begin{arxiv}
We also say that a
\end{arxiv}
\begin{kdd}
A
\end{kdd}
prefix captures those data captured by its antecedents;
for a rule list~${\RL = (\Prefix, \Labels, \Default, K)}$,
data not captured by the prefix~$\Prefix$
are classified according to the default label prediction~$\Default$.

Let~$\beta$ be a set of antecedents.
We define~${\Cap(x_n, \beta) = 1}$ if an antecedent in~$\beta$
captures datum~$x_n$, and~0 otherwise.
For example, let~$\Prefix$ and~$\Prefix'$ be prefixes such that~$\Prefix'$ starts
with~$\Prefix$, then~$\Prefix'$ captures all the data that~$\Prefix$ captures:
\begin{arxiv}
\begin{align*}
\{x_n: \Cap(x_n, \Prefix)\} \subseteq \{x_n: \Cap(x_n, \Prefix')\}.
\end{align*}
\end{arxiv}
\begin{kdd}
${\{x_n: \Cap(x_n, \Prefix)\} \subseteq \{x_n: \Cap(x_n, \Prefix')\}}$.
\end{kdd}

Now let~$\Prefix$ be an ordered list of antecedents,
and let~$\beta$ be a subset of antecedents in~$\Prefix$.
Let us define~${\Cap(x_n, \beta \given \Prefix) = 1}$ if~$\beta$
captures datum~$x_n$ in the context of~$\Prefix$,
\ie if the first antecedent in~$\Prefix$ that evaluates to true for~$x_n$
is an antecedent in~$\beta$, and~0 otherwise.
Thus, ${\Cap(x_n, \beta \given \Prefix) = 1}$ only if ${\Cap(x_n, \beta) = 1}$;
${\Cap(x_n, \beta \given \Prefix) = 0}$ either if ${\Cap(x_n, \beta) = 0}$,
or if ${\Cap(x_n, \beta) = 1}$ but there is an antecedent~$\alpha$ in~$\Prefix$,
preceding all antecedents in~$\beta$, such that ${\Cap(x_n, \alpha) = 1}$.
For example, if ${\Prefix = (p_1, \dots, p_k, \dots, p_K)}$ is a prefix, then
\begin{align*}
\Cap(x_n, p_k \given \Prefix) =
  \left(\bigwedge_{k'=1}^{k - 1} \neg\, \Cap(x_n, p_{k'}) \right)
  \wedge \Cap(x_n, p_k)
\end{align*}
indicates whether antecedent~$p_k$ captures datum~$x_n$ in the context of~$\Prefix$.
Now, define ${\Supp(\beta, \x)}$ to be the normalized support of~$\beta$,
\begin{align}
\Supp(\beta, \x) = \frac{1}{N} \sum_{n=1}^N \Cap(x_n, \beta),
\label{eq:support}
\end{align}
and similarly define~${\Supp(\beta, \x \given \Prefix)}$
to be the normalized support of~$\beta$ in the context of~$\Prefix$,
\begin{align}
\Supp(\beta, \x \given \Prefix) = \frac{1}{N} \sum_{n=1}^N \Cap(x_n, \beta \given \Prefix),
\label{eq:support-context}
\end{align}

Next, we address how empirical data constrains rule lists.
Given training data~${(\x, \y)}$,
an antecedent list ${\Prefix = (p_1, \dots, p_K)}$
implies a rule list ${\RL = (\Prefix, \Labels, \Default, K)}$
with prefix~$\Prefix$, where the label predictions
${\Labels = (q_1, \dots, q_K)}$ and~$\Default$ are empirically set
to minimize the number of misclassification errors made by
the rule list on the training data.
Thus for~${1 \le k \le K}$, label prediction~$q_k$ corresponds to the
majority label of data captured by antecedent~$p_k$ in the context of~$\Prefix$,
and the default~$\Default$ corresponds to the majority label of data
not captured by~$\Prefix$.
In the remainder of our presentation, whenever we refer to a rule list with a
particular prefix, we implicitly assume these empirically determined label predictions.

Our method is technically an associative classification method since it
leverages pre-mined rules.

\subsection{Objective Function}
\label{sec:objective}

\begin{arxiv}
We define
\end{arxiv}
\begin{kdd}
Define
\end{kdd}
a simple objective function for a rule list ${\RL = (\Prefix, \Labels, \Default, K)}$:
\begin{align}
\Obj(\RL, \x, \y) = \Loss(\RL, \x, \y) + \Reg K.
\label{eq:objective}
\end{align}
This objective function is a regularized empirical risk;
it consists of a loss~$\Loss(\RL, \x, \y)$, measuring misclassification error,
and a regularization term that penalizes longer rule lists.
$\Loss(\RL, \x, \y)$~is the fraction of training data whose labels are
incorrectly predicted by~$\RL$.
In our setting, the regularization parameter~${\Reg \ge 0}$ is a small constant;
\eg ${\Reg = 0.01}$ can be thought of as adding a penalty equivalent to misclassifying~$1\%$
of data when increasing a rule list's length
\begin{arxiv}
by one association rule.
\end{arxiv}
\begin{kdd}
by~one.
\end{kdd}
%

\subsection{Optimization Framework}
\label{sec:optimization}

Our objective has structure amenable to global optimization via a branch-and-bound framework.
In particular, we make a series of important observations, each of which translates into
a useful bound, and that together interact to eliminate large parts of the search~space.
We discuss these in depth in what follows:
\begin{itemize}
\item Lower bounds on a prefix also hold for every extension of that prefix.
(\S\ref{sec:hierarchical}, Theorem~\ref{thm:bound})

\item If a rule list is not accurate enough with respect to its length,
we can prune all extensions of it.
(\S\ref{sec:hierarchical}, Lemma~\ref{lemma:lookahead})

\item We can calculate \emph{a priori} an upper bound on the maximum length
of an optimal rule list.
(\S\ref{sec:ub-prefix-length}, Theorem~\ref{thm:ub-prefix-specific})

\item Each rule in an optimal rule list must have support that is sufficiently large.
%
This allows us to construct rule lists from frequent itemsets,
while preserving the guarantee that we can find a globally optimal
rule list from pre-mined rules.
(\S\ref{sec:lb-support}, Theorem~\ref{thm:min-capture})

\item Each rule in an optimal rule list must predict accurately.
In particular, the number of observations predicted correctly
by each rule in an optimal rule list must be above a threshold.
(\S\ref{sec:lb-support}, Theorem~\ref{thm:min-capture-correct})

\item We need only consider the optimal permutation of antecedents in a prefix;
we can omit all other permutations.
(\S\ref{sec:equivalent}, Theorem~\ref{thm:equivalent} and Corollary~\ref{thm:permutation})

\item  If multiple observations have identical features and opposite labels,
we know that any model will make mistakes.
In particular, the number of mistakes on these observations will be at least
the number of observations with the minority label.
(\S\ref{sec:identical}, Theorem~\ref{thm:identical})
\end{itemize}
\begin{kdd}
We present additional theorems and all proofs in~\citep{AngelinoLaAlSeRu17}.
\end{kdd}

\subsection{Hierarchical Objective Lower Bound}
\label{sec:hierarchical}

We can decompose the misclassification error
\begin{arxiv}
in~\eqref{eq:objective}
\end{arxiv}
into two contributions corresponding to the prefix and the default rule:
\begin{align*}
\Loss(\RL, \x, \y) 
\equiv \Loss_p(\Prefix, \Labels, \x, \y) + \Loss_0(\Prefix, \Default, \x, \y),
\end{align*}
where ${\Prefix = (p_1, \dots, p_K)}$ and ${\Labels = (q_1, \dots, q_K)}$;
\begin{align*}
\Loss_p(\Prefix, \Labels, \x, \y) =
\frac{1}{N} \sum_{n=1}^N \sum_{k=1}^K \Cap(x_n, p_k \given \Prefix) \wedge \one [ q_k \neq y_n ]
\end{align*}
is the fraction of data captured and misclassified by the prefix, and
\begin{align*}
\Loss_0(\Prefix, \Default, \x, \y) =
\frac{1}{N} \sum_{n=1}^N \neg\, \Cap(x_n, \Prefix) \wedge \one [ \Default \neq y_n ]
\end{align*}
is the fraction of data not captured by the prefix and misclassified by the default rule.
Eliminating the latter error term gives a lower bound~$b(\Prefix, \x, \y)$ on the objective,
\begin{align}
b(\Prefix, \x, \y) \equiv \Loss_p(\Prefix, \Labels, \x, \y) + \Reg K \le \Obj(\RL, \x, \y),
\label{eq:lower-bound}
\end{align}
where we have suppressed the lower bound's dependence on label predictions~$\Labels$
because they are fully determined, given~${(\Prefix, \x, \y)}$.
Furthermore,
\begin{arxiv}
as we state next in Theorem~\ref{thm:bound},
\end{arxiv}
$b(\Prefix, \x, \y)$ gives a lower bound on the objective of
\emph{any} rule list whose prefix starts with~$\Prefix$.

\begin{theorem}[Hierarchical objective lower bound]
\begin{arxiv}
Define~${b(\Prefix, \x, \y)}$
\end{arxiv}
\begin{kdd}
Define~${b(\Prefix, \x, \y) = \Loss_p(\Prefix, \Labels, \x, \y) + \Reg K}$,
\end{kdd}
as in~\eqref{eq:lower-bound}.
Also, define $\StartsWith(\Prefix)$ to be the set of all rule lists
whose prefixes starts with~$\Prefix$, as in~\eqref{eq:starts-with}.
Let ${\RL = }$ ${(\Prefix, \Labels, \Default, K)}$ be a rule list
with prefix~$\Prefix$, and let
${\RL' = (\Prefix', \Labels', \Default', K')}$ $\in \StartsWith(\Prefix)$
be any rule list such that its prefix~$\Prefix'$ starts with~$\Prefix$
and ${K' \ge K}$, then ${b(\Prefix, \x, \y) \le}$ ${\Obj(\RL', \x, \y)}$.
\label{thm:bound}
\end{theorem}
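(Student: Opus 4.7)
The plan is to decompose $\Obj(\RL', \x, \y)$ into three non-negative contributions and then compare term-by-term to $b(\Prefix, \x, \y)$.

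First, I would write out
\begin{align*}
\Obj(\RL', \x, \y) = \Loss_p(\Prefix', \Labels', \x, \y) + \Loss_0(\Prefix', \Default', \x, \y) + \Reg K'.
\end{align*}
Since $\Loss_0(\Prefix', \Default', \x, \y) \ge 0$ (it is a non-negative fraction) and $K' \ge K$ (so $\Reg K' \ge \Reg K$), it suffices to show the prefix-loss monotonicity
\begin{align*}
\Loss_p(\Prefix, \Labels, \x, \y) \le \Loss_p(\Prefix', \Labels', \x, \y),
\end{align*}
because chaining these three inequalities immediately yields $b(\Prefix, \x, \y) \le \Obj(\RL', \x, \y)$.

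The main obstacle, and the real content of the theorem, is establishing this monotonicity. The key observation is that because $\Prefix'$ starts with $\Prefix$, its first $K$ antecedents are literally $p_1, \dots, p_K$ in the same order. Consequently, for each $k \le K$ and each datum $x_n$, the indicator $\Cap(x_n, p_k \given \Prefix)$ is identical to $\Cap(x_n, p_k \given \Prefix')$, since in both contexts $p_k$ is preceded by exactly $p_1, \dots, p_{k-1}$. This means the set of data captured by position $k$ is the same under either prefix, and therefore the empirically optimal (majority) label prediction at position $k$ is the same: the first $K$ entries of $\Labels'$ coincide with $\Labels$. Hence the first $K$ terms of the double sum defining $\Loss_p(\Prefix', \Labels', \x, \y)$ reproduce $\Loss_p(\Prefix, \Labels, \x, \y)$ exactly.

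Finally, I would note that the remaining terms in $\Loss_p(\Prefix', \Labels', \x, \y)$, namely those indexed by $k = K+1, \dots, K'$, are non-negative, so dropping them can only decrease the sum. This gives the desired inequality $\Loss_p(\Prefix, \Labels, \x, \y) \le \Loss_p(\Prefix', \Labels', \x, \y)$, and combining with the two easy bounds above completes the proof. The only subtle step worth writing carefully is the claim about majority labels agreeing on the shared positions; everything else is bookkeeping.
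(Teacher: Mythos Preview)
Your proposal is correct and follows essentially the same approach as the paper: both arguments hinge on the observation that $\Cap(x_n, p_k \given \Prefix') = \Cap(x_n, p_k \given \Prefix)$ for $k \le K$, so the first $K$ terms of $\Loss_p(\Prefix', \Labels', \x, \y)$ reproduce $\Loss_p(\Prefix, \Labels, \x, \y)$ and the remaining terms are non-negative. The only cosmetic difference is that the paper routes the inequality through the intermediate bound $b(\Prefix', \x, \y)$, whereas you compare $b(\Prefix, \x, \y)$ directly to the decomposed objective; your extra remark about why the first $K$ majority labels must agree is a nice clarification the paper leaves implicit.
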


\begin{arxiv}
\begin{proof}
Let ${\Prefix = (p_1, \dots, p_K)}$ and ${\Labels = (q_1, \dots, q_K)}$;
let ${\Prefix' = (p_1, \dots, p_K, p_{K+1}, \dots, p_{K'})}$
and ${\Labels' = (q_1, \dots, q_K, q_{K+1}, \dots, q_{K'})}$.
Notice that~$\Prefix'$ yields the same mistakes as~$\Prefix$,
and possibly additional mistakes:
\begin{align}
&\Loss_p(\Prefix', \Labels', \x, \y)
= \frac{1}{N} \sum_{n=1}^N  \sum_{k=1}^{K'} \Cap(x_n, p_k \given \Prefix') \wedge \one [ q_k \neq y_n ] \nn \\
&= \frac{1}{N} \sum_{n=1}^N \left( \sum_{k=1}^K \Cap(x_n, p_k \given \Prefix) \wedge \one [ q_k \neq y_n ]
+ \sum_{k=K+1}^{K'} \Cap(x_n, p_k \given \Prefix') \wedge \one [ q_k \neq y_n ] \right) \nn \\
&=\Loss_p(\Prefix, \Labels, \x, \y)
+ \frac{1}{N} \sum_{n=1}^N \sum_{k=K+1}^{K'} \Cap(x_n, p_k \given \Prefix') \wedge \one [ q_k \neq y_n ]
\ge \Loss_p(\Prefix, \Labels, \x, \y),
\label{eq:prefix-loss}
\end{align}
where in the second equality we have used the fact that
${\Cap(x_n, p_k \given \Prefix') = \Cap(x_n, p_k \given \Prefix)}$
for~${1 \le k \le K}$.
It follows that
\begin{align}
b(\Prefix, \x, \y) &= \Loss_p(\Prefix, \Labels, \x, \y) + \Reg K \nn \\
&\le  \Loss_p(\Prefix', \Labels', \x, \y) + \Reg K' = b(\Prefix', \x, \y)
\le \Obj(\RL', \x, \y).
\label{eq:prefix-lb}
\end{align}
\end{proof}
\end{arxiv}

To generalize, consider a sequence of prefixes such that each prefix
starts with all previous prefixes in the sequence.
It follows that the corresponding sequence of objective lower bounds
increases monotonically.
This is precisely the structure required and exploited by branch-and-bound,
illustrated in Algorithm~\ref{alg:branch-and-bound}.

\begin{algorithm}[t!]
\caption{Branch-and-bound for learning rule lists.}
\label{alg:branch-and-bound}
\begin{algorithmic}
\normalsize
\State \textbf{Input:} Objective function $\Obj(\RL, \x, \y)$,
objective lower bound ${b(\Prefix, \x, \y)}$,
set of antecedents ${\RuleSet = \{s_m\}_{m=1}^M}$,
training data $(\x, \y) = {\{(x_n, y_n)\}_{n=1}^N}$,
initial best known rule list~$\InitialRL$ with objective
${\InitialObj = \Obj(\InitialRL, \x, \y)}$;
$\InitialRL$ could be obtained as output from another (approximate) algorithm,
otherwise, $(\InitialRL, \InitialObj) = (\text{null}, 1)$ provide reasonable default values
\State \textbf{Output:} Provably optimal rule list~$\OptimalRL$ with minimum objective~$\OptimalObj$ \\

\State $(\CurrentRL, \CurrentObj) \gets (\InitialRL, \InitialObj)$ \Comment{Initialize best rule list and objective}
\State $Q \gets $ queue$(\,[\,(\,)\,]\,)$ \Comment{Initialize queue with empty prefix}
\While {$Q$ not empty} \Comment{Stop when queue is empty}
	\State $\Prefix \gets Q$.pop(\,) \Comment{Remove prefix~$\Prefix$ from the queue}
	\begin{arxiv}
	\State $\RL \gets (\Prefix, \Labels, \Default, K)$ \Comment{Set label predictions~$\Labels$ and~$\Default$ to minimize training error}
	\end{arxiv}
	\If {$b(\Prefix, \x, \y) < \CurrentObj$} \Comment{\textbf{Bound}: Apply Theorem~\ref{thm:bound}}
        \State $\Obj \gets \Obj(\RL, \x, \y)$ \Comment{Compute objective of~$\Prefix$'s rule list~$\RL$}
        \If {$\Obj < \CurrentObj$} \Comment{Update best rule list and objective}
            \State $(\CurrentRL, \CurrentObj) \gets (\RL, \Obj)$
        \EndIf
        \For {$s$ in $\RuleSet$} \Comment{\textbf{Branch}: Enqueue~$\Prefix$'s children}
            \If {$s$ not in $\Prefix$}
                \State $Q$.push$(\,(\Prefix, s)\,)$
            \EndIf
        \EndFor
    \EndIf
\EndWhile
\State $(\OptimalRL, \OptimalObj) \gets (\CurrentRL, \CurrentObj)$ \Comment{Identify provably optimal solution}
\end{algorithmic}
\end{algorithm}

Specifically, the objective lower bound in Theorem~\ref{thm:bound}
enables us to prune the state space hierarchically.
While executing branch-and-bound, we keep track of the current best (smallest)
objective~$\CurrentObj$, thus it is a dynamic, monotonically decreasing quantity.
If we encounter a prefix~$\Prefix$ with lower bound
${b(\Prefix, \x, \y) \ge \CurrentObj}$,
then by Theorem~\ref{thm:bound}, we do not need to consider \emph{any}
rule list~${\RL' \in \StartsWith(\Prefix)}$ whose prefix~$\Prefix'$ starts with~$\Prefix$.
%
For the objective of such a rule list, the current best objective
provides a lower bound, \ie
${\Obj(\RL', \x, \y) \ge b(\Prefix', \x, \y) \ge}$ ${b(\Prefix, \x, \y) \ge \CurrentObj}$,
and thus~$\RL'$ cannot be optimal.

Next, we state an immediate consequence of Theorem~\ref{thm:bound}.

\begin{lemma}[Objective lower bound with one-step lookahead]
\label{lemma:lookahead}
Let~$\Prefix$ be a $K$-prefix
and let~$\CurrentObj$ be the current best objective.
If ${b(\Prefix, \x, \y) + \Reg \ge \CurrentObj}$,
then for any $K'$-rule list ${\RL' \in \StartsWith(\Prefix)}$
whose prefix~$\Prefix'$ starts with~$\Prefix$ and~${K' > K}$,
it follows that ${\Obj(\RL', \x, \y) \ge \CurrentObj}$.
\end{lemma}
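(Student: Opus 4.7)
The plan is to derive this lemma as an immediate strengthening of Theorem~\ref{thm:bound}, exploiting the fact that any strict extension of $\Prefix$ contributes at least one additional $\Reg$ to the regularization term.

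First, I would fix a $K'$-rule list $\RL' = (\Prefix', \Labels', \Default', K') \in \StartsWith(\Prefix)$ with $K' > K$, so in particular $K' \ge K+1$. The goal is to compare $b(\Prefix', \x, \y)$ to $b(\Prefix, \x, \y) + \Reg$ and then invoke Theorem~\ref{thm:bound}.

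Next, I would recall the monotonicity of the prefix misclassification loss established inside the proof of Theorem~\ref{thm:bound}: since $\Prefix'$ extends $\Prefix$, we have
\begin{align*}
\Loss_p(\Prefix', \Labels', \x, \y) \ge \Loss_p(\Prefix, \Labels, \x, \y),
\end{align*}
because every datum captured and misclassified by $\Prefix$ is still captured and misclassified by $\Prefix'$, and $\Prefix'$ may additionally misclassify data that $\Prefix$ left to the default rule. Combining this with $K' \ge K+1$ gives
\begin{align*}
b(\Prefix', \x, \y) = \Loss_p(\Prefix', \Labels', \x, \y) + \Reg K' \ge \Loss_p(\Prefix, \Labels, \x, \y) + \Reg(K+1) = b(\Prefix, \x, \y) + \Reg.
\end{align*}

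Finally, I would chain inequalities: by hypothesis, $b(\Prefix, \x, \y) + \Reg \ge \CurrentObj$, and by Theorem~\ref{thm:bound} applied to $\RL'$ (or equivalently, by the defining inequality $b(\Prefix', \x, \y) \le \Obj(\RL', \x, \y)$ from~\eqref{eq:lower-bound}), we get
\begin{align*}
\Obj(\RL', \x, \y) \ge b(\Prefix', \x, \y) \ge b(\Prefix, \x, \y) + \Reg \ge \CurrentObj,
\end{align*}
which is the claim. There is no real obstacle here; the only subtlety is making sure the ``$+\Reg$'' in the hypothesis is correctly accounted for by the strict length increase $K' \ge K+1$, which is exactly what distinguishes this one-step lookahead from the plain bound of Theorem~\ref{thm:bound}.
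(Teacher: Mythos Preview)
Your proposal is correct and follows essentially the same approach as the paper: both show $b(\Prefix', \x, \y) \ge b(\Prefix, \x, \y) + \Reg$ by combining the monotonicity of the prefix loss $\Loss_p$ (from the proof of Theorem~\ref{thm:bound}) with the fact that $K' \ge K+1$, and then chain with the hypothesis and~\eqref{eq:lower-bound}. If anything, you are slightly more explicit than the paper about the inequality coming from $\Loss_p(\Prefix', \Labels', \x, \y) \ge \Loss_p(\Prefix, \Labels, \x, \y)$.
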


\begin{arxiv}
\begin{proof}
By the definition of the lower bound~\eqref{eq:lower-bound},
which includes the penalty for longer prefixes,
\begin{align}
\Obj(\Prefix', \x, y) \ge b(\Prefix', \x, \y) &= \Loss_p(\Prefix', \Labels', \x, \y) + \Reg K' \nn \\
&= \Loss_p(\Prefix', \Labels', \x, \y) + \Reg K + \Reg (K' - K) \nn \\
&= b(\Prefix, \x, \y) + \Reg (K' - K)
\ge b(\Prefix, \x, \y) + \Reg \ge \CurrentObj.
\end{align}
\end{proof}
\end{arxiv}

Therefore, even if we encounter a prefix~$\Prefix$
with lower bound ${b(\Prefix, \x, \y) \le \CurrentObj}$,
\begin{kdd}
if
\end{kdd}
\begin{arxiv}
as long as
\end{arxiv}
${b(\Prefix, \x, \y) + \Reg \ge \CurrentObj}$, then we can prune
all prefixes~$\Prefix'$ that start with and are longer than~$\Prefix$.

\subsection{Upper Bounds on Prefix Length}
\label{sec:ub-prefix-length}

\begin{arxiv}
In this section, we derive several upper bounds on prefix length:
\begin{itemize}
\item The simplest upper bound on prefix length is given by the total
number of available antecedents. (Proposition~\ref{prop:trivial-length})
\item The current best objective~$\CurrentObj$ implies an upper bound
on prefix length. (Theorem~\ref{thm:ub-prefix-length})
\item For intuition, we state a version of the above bound that is valid
at the start of execution. (Corollary~\ref{cor:ub-prefix-length})
\item By considering specific families of prefixes,
we can obtain tighter bounds on prefix length. (Theorem~\ref{thm:ub-prefix-specific})
\end{itemize}
In the next section (\S\ref{sec:ub-size}), we use these results
to derive corresponding upper bounds on the number of
prefix evaluations made by Algorithm~\ref{alg:branch-and-bound}.

\begin{proposition}[Trivial upper bound on prefix length]
\label{prop:trivial-length}
Consider a state space of all rule lists formed from
a set of~$M$ antecedents,
and let~$L(\RL)$ be the length of rule list~$\RL$.
$M$ provides an upper bound on the length of
any optimal rule list
${\OptimalRL \in \argmin_\RL \Obj(\RL, \x, \y)}$,
\ie ${L(\RL) \le M}$.
\end{proposition}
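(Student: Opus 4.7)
The plan is to appeal directly to the definition of a rule list given in \S\ref{sec:setup}, which specifies that a rule list ${\RL = (r_1, \dots, r_K, r_0)}$ consists of~$K$ \emph{distinct} association rules followed by a default rule. Since the prefix~$\Prefix = (p_1, \dots, p_K)$ records the antecedents of these rules in order, the distinctness of the rules forces the antecedents in~$\Prefix$ to be distinct as well. Thus~$\Prefix$ is an injective sequence drawn from the ambient set~$\RuleSet$ of~$M$ available antecedents.

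First I would make the distinctness of the antecedents in any admissible prefix explicit, tying it back to the definition of a rule list in~\S\ref{sec:setup}. Next I would invoke the pigeonhole principle: an injective map from~$\{1, \dots, K\}$ into a set of size~$M$ forces~${K \le M}$. Since this bound holds for every rule list in the state space, it holds in particular for any rule list attaining the minimum objective, and hence ${L(\OptimalRL) \le M}$ for every ${\OptimalRL \in \argmin_\RL \Obj(\RL, \x, \y)}$.

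There is no genuine obstacle here; the statement is essentially a restatement of the ``distinct antecedents'' clause in the definition of a rule list, combined with the finiteness of~$\RuleSet$. The only thing to be mildly careful about is that Algorithm~\ref{alg:branch-and-bound} already enforces this distinctness via the check ``\textbf{if} $s$ not in $\Prefix$'' before pushing a child onto the queue, so the proposition is consistent with the algorithm's state space and not merely a property imposed by a separate convention. The resulting bound is trivial but useful as a baseline against which the sharper length bounds derived later in~\S\ref{sec:ub-prefix-length} can be compared.
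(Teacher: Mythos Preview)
Your proposal is correct and matches the paper's approach exactly: the paper's entire proof is the single sentence ``Rule lists consist of distinct rules by definition,'' and you have simply unpacked this into the explicit pigeonhole argument it encodes. Your added remark tying the distinctness clause to the ``\textbf{if} $s$ not in $\Prefix$'' check in Algorithm~\ref{alg:branch-and-bound} is a nice consistency observation but not needed for the proof itself.
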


\begin{proof}
Rule lists consist of distinct rules by definition.
\end{proof}
\end{arxiv}

At any point during branch-and-bound execution, the current best objective~$\CurrentObj$
implies an upper bound on the maximum prefix length we might still have to consider.
\begin{theorem}[Upper bound on prefix length]
\label{thm:ub-prefix-length}
Consider a state space of all rule lists formed from a set of~$M$ antecedents.
Let~$L(\RL)$ be the length of rule list~$\RL$
and let~$\CurrentObj$ be the current best objective.
For all optimal rule lists ${\OptimalRL \in \argmin_\RL \Obj(\RL, \x, \y)}$
\begin{arxiv}
\begin{align}
L(\OptimalRL) \le \min \left(\left\lfloor \frac{\CurrentObj}{\Reg} \right\rfloor, M \right),
\label{eq:max-length}
\end{align}
\end{arxiv}
\begin{kdd}
\begin{align}
L(\OptimalRL) \le \min \left(\left\lfloor \CurrentObj / \Reg \right\rfloor, M \right),
\label{eq:max-length}
\end{align}
\end{kdd}
where~$\Reg$ is the regularization parameter.
\begin{arxiv}
Furthermore, if~$\CurrentRL$ is a rule list with
objective ${\Obj(\CurrentRL, \x, \y) = \CurrentObj}$,
length~$K$, and zero misclassification error,
then for every optimal rule list
${\OptimalRL \in}$ ${\argmin_\RL \Obj(\RL, \x, \y)}$,
if ${\CurrentRL \in \argmin_d \Obj(\RL, \x, \y)}$,
then ${L(\OptimalRL) \le K}$,
or otherwise if ${\CurrentRL \notin \argmin_d \Obj(\RL, \x, \y)}$,
then ${L(\OptimalRL) \le K - 1}$.
\end{arxiv}
\end{theorem}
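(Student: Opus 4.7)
The plan is to exploit the decomposition $\Obj(\RL, \x, \y) = \Loss(\RL, \x, \y) + \Reg L(\RL)$ together with the non-negativity of the misclassification loss $\Loss \ge 0$. Since $\CurrentObj$ is the currently recorded best objective value (achieved by some already-evaluated rule list), any optimal rule list $\OptimalRL$ must satisfy $\Obj(\OptimalRL, \x, \y) \le \CurrentObj$. The regularization term alone is therefore bounded by $\CurrentObj$, which immediately constrains the length.

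For the first bound, I would write
\[
\Reg L(\OptimalRL) \;\le\; \Loss(\OptimalRL, \x, \y) + \Reg L(\OptimalRL) \;=\; \Obj(\OptimalRL, \x, \y) \;\le\; \CurrentObj,
\]
divide by $\Reg > 0$, and use the fact that $L(\OptimalRL)$ is a non-negative integer to conclude $L(\OptimalRL) \le \lfloor \CurrentObj / \Reg \rfloor$. The second term of the minimum comes from the trivial bound in Proposition~\ref{prop:trivial-length}: a rule list is a tuple of \emph{distinct} antecedents drawn from the $M$ available ones, so $L(\OptimalRL) \le M$. Taking the minimum of the two yields~\eqref{eq:max-length}.

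For the \emph{furthermore} statement, the assumption that $\CurrentRL$ has zero misclassification error and length $K$ gives $\CurrentObj = \Obj(\CurrentRL, \x, \y) = \Reg K$. I would then split into two cases. If $\CurrentRL \in \argmin_\RL \Obj(\RL, \x, \y)$, every optimal $\OptimalRL$ satisfies $\Obj(\OptimalRL, \x, \y) = \CurrentObj = \Reg K$; using $\Loss \ge 0$ again gives $\Reg L(\OptimalRL) \le \Reg K$, so $L(\OptimalRL) \le K$. If instead $\CurrentRL \notin \argmin_\RL \Obj(\RL, \x, \y)$, then any optimal $\OptimalRL$ achieves strictly smaller objective, so $\Obj(\OptimalRL, \x, \y) < \Reg K$, which yields $\Reg L(\OptimalRL) < \Reg K$; since $L(\OptimalRL)$ is an integer, $L(\OptimalRL) \le K - 1$.

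The only real subtlety is carefully tracking when the inequality is strict (in the non-optimal $\CurrentRL$ case, where integrality of $L$ is what lets us shave off the extra unit) versus non-strict (when $\CurrentRL$ is itself optimal); the remaining manipulations are elementary consequences of the definition of $\Obj$ and Proposition~\ref{prop:trivial-length}.
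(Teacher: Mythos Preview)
Your proof is correct and matches the paper's approach essentially line for line: the paper likewise uses $\Reg L(\OptimalRL) \le \Obj(\OptimalRL,\x,\y) \le \CurrentObj$ together with Proposition~\ref{prop:trivial-length} for the first part, and then substitutes $\CurrentObj = \Reg K$ (from zero misclassification error) and splits on whether $\CurrentRL$ is optimal to get $K^* \le K$ or, via strict inequality plus integrality, $K^* \le K-1$.
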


\begin{arxiv}
\begin{proof}
For an optimal rule list~$\OptimalRL$ with objective~$\OptimalObj$,
\begin{align}
\Reg L(\OptimalRL) \le \OptimalObj = \Obj(\OptimalRL, \x, \y)
= \Loss(\OptimalRL, \x, \y) + \Reg L(\OptimalRL)
\le \CurrentObj. \nn
\end{align}
The maximum possible length for~$\OptimalRL$ occurs
when~$\Loss(\OptimalRL, \x, \y)$ is minimized;
combining with Proposition~\ref{prop:trivial-length}
gives bound~\eqref{eq:max-length}.

For the rest of the proof,
let~${K^* = L(\OptimalRL)}$ be the length of~$\OptimalRL$.
If the current best rule list~$\CurrentRL$ has zero
misclassification error, then
\begin{align}
\Reg K^* \leq \Loss(\OptimalRL, \x, \y) + \Reg K^* = \Obj(\OptimalRL, \x, \y)
\le \CurrentObj = \Obj(\CurrentRL, \x, \y) = \Reg K, \nn
\end{align}
and thus ${K^* \leq K}$.
If the current best rule list is suboptimal,
\ie ${\CurrentRL \notin \argmin_\RL \Obj(\RL, \x, \y)}$, then
\begin{align}
\Reg K^* \leq \Loss(\OptimalRL, \x, \y) + \Reg K^* = \Obj(\OptimalRL, \x, \y)
< \CurrentObj = \Obj(\CurrentRL, \x, \y) = \Reg K, \nn
\end{align}
in which case ${K^* < K}$, \ie ${K^* \leq K-1}$, since $K$ is an integer.
\end{proof}

The latter part of Theorem~\ref{thm:ub-prefix-length} tells us that
if we only need to identify a single instance of an optimal rule list
${\OptimalRL \in \argmin_\RL \Obj(\RL, \x, \y)}$, and we encounter a perfect
$K$-rule list with zero misclassification error, then we can prune all
prefixes of length~$K$ or greater.

\end{arxiv}

\begin{corollary}[Simple upper bound on prefix length]
\label{cor:ub-prefix-length}
\begin{arxiv}
Let~$L(\RL)$ be the length of rule list~$\RL$.
\end{arxiv}
For all optimal rule lists ${\OptimalRL \in \argmin_\RL \Obj(\RL, \x, \y)}$,
\begin{arxiv}
\begin{align}
L(\OptimalRL) \le \min \left( \left\lfloor \frac{1}{2\Reg} \right\rfloor, M \right).
\label{eq:max-length-trivial}
\end{align}
\end{arxiv}
\begin{kdd}
\begin{align}
L(\OptimalRL) \le \min \left( \left\lfloor 1 / 2\Reg \right\rfloor, M \right).
\label{eq:max-length-trivial}
\end{align}
\end{kdd}
\end{corollary}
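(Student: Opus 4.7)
The plan is to derive this corollary as an immediate consequence of Theorem~\ref{thm:ub-prefix-length} by plugging in a generic, \emph{a priori} valid upper bound on $\CurrentObj$. Theorem~\ref{thm:ub-prefix-length} already gives
\[
L(\OptimalRL) \le \min\!\left(\left\lfloor \CurrentObj / \Reg \right\rfloor, M\right)
\]
for \emph{any} achievable objective value $\CurrentObj$. So all that remains is to exhibit a specific rule list whose objective is at most $1/2$; this value then validly replaces $\CurrentObj$ in the bound.

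The natural candidate is the empty-prefix rule list $\RL_0 = ((), (), \Default, 0)$ whose single default rule predicts the majority label of $\y$. Since $\Default$ is the majority label, $\Loss(\RL_0, \x, \y)$ equals the minority class fraction, which is at most $1/2$. Because the prefix has length $K = 0$, the regularization contribution vanishes, so $\Obj(\RL_0, \x, \y) = \Loss(\RL_0, \x, \y) \le 1/2$. By optimality of $\OptimalRL$, we get
\[
\Obj(\OptimalRL, \x, \y) \le \Obj(\RL_0, \x, \y) \le \tfrac{1}{2}.
\]

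Now I would simply follow the same chain of inequalities as in Theorem~\ref{thm:ub-prefix-length}: writing $L(\OptimalRL) = K^*$,
\[
\Reg K^* \le \Loss(\OptimalRL, \x, \y) + \Reg K^* = \Obj(\OptimalRL, \x, \y) \le \tfrac{1}{2},
\]
so $K^* \le \lfloor 1/(2\Reg) \rfloor$ since $K^*$ is an integer. Combining with the trivial bound $K^* \le M$ from Proposition~\ref{prop:trivial-length} (distinct antecedents) yields~\eqref{eq:max-length-trivial}.

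There is no real obstacle here — the only subtlety is being explicit that the bound $\Obj(\RL_0,\x,\y) \le 1/2$ holds without any assumptions on the data, which is what makes the corollary an \emph{a priori} statement valid before any execution of Algorithm~\ref{alg:branch-and-bound}. The corollary is essentially Theorem~\ref{thm:ub-prefix-length} specialized to the worst-case initial value of $\CurrentObj$ that one can always guarantee by inspection of the label distribution.
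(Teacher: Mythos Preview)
Your proof is correct and matches the paper's own argument essentially line for line: the paper also takes the empty rule list $((),(),\Default,0)$, notes its objective is at most $1/2$, and then combines Theorem~\ref{thm:ub-prefix-length} (equation~\eqref{eq:max-length}) with Proposition~\ref{prop:trivial-length}. Your version is just slightly more explicit about why the empty rule list's loss is bounded by $1/2$.
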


\begin{arxiv}
\begin{proof}
Let ${d = ((), (), q_0, 0)}$ be the empty rule list;
it has objective ${\Obj(\RL, \x, \y) = \Loss(\RL, \x, \y) \le}$ ${1/2}$,
which gives an upper bound on~$\CurrentObj$.
Combining with~\eqref{eq:max-length}
and Proposition~\ref{prop:trivial-length}
gives~\eqref{eq:max-length-trivial}.
\end{proof}
\end{arxiv}

For any particular prefix~$\Prefix$, we can obtain potentially tighter
upper bounds on prefix length for
\begin{arxiv}
the family of
\end{arxiv}
all prefixes that start with~$\Prefix$.

\begin{theorem}[Prefix-specific upper bound on prefix length]
\label{thm:ub-prefix-specific}
Let ${\RL = (\Prefix, \Labels, \Default, K)}$ be a rule list, let
${\RL' = (\Prefix', \Labels', \Default', K') \in \StartsWith(\Prefix)}$
be any rule list such that~$\Prefix'$ starts with~$\Prefix$,
and let~$\CurrentObj$ be the current best objective.
If~$\Prefix'$ has lower bound~${b(\Prefix', \x, \y) < \CurrentObj}$, then
\begin{align}
K' < \min \left( K + \left\lfloor \frac{\CurrentObj - b(\Prefix, \x, \y)}{\Reg} \right\rfloor, M \right).
\label{eq:max-length-prefix}
\end{align}
\end{theorem}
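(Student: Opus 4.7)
The plan is to chain together the monotonicity of the prefix loss (which is the workhorse behind Theorem~\ref{thm:bound}) with the hypothesis $b(\Prefix', \x, \y) < \CurrentObj$ to convert the gap $\CurrentObj - b(\Prefix, \x, \y)$ directly into an upper bound on how many additional antecedents $\Prefix'$ may have beyond those of $\Prefix$. The idea is that every antecedent appended to $\Prefix$ contributes at least $\Reg$ to the lower bound (via the regularizer) without the prefix loss ever decreasing, so only finitely many extensions can still fit under the threshold $\CurrentObj$.

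Concretely, I would first unpack $b(\Prefix', \x, \y) = \Loss_p(\Prefix', \Labels', \x, \y) + \Reg K'$ and then invoke the key inequality from the proof of Theorem~\ref{thm:bound} (equation~\eqref{eq:prefix-loss}): since $\Prefix'$ starts with $\Prefix$, any datum captured by the first $K$ antecedents of $\Prefix'$ is captured and labeled identically as under $\Prefix$, so $\Loss_p(\Prefix', \Labels', \x, \y) \ge \Loss_p(\Prefix, \Labels, \x, \y)$. Substituting yields
\begin{align*}
b(\Prefix', \x, \y) \ge \Loss_p(\Prefix, \Labels, \x, \y) + \Reg K' = b(\Prefix, \x, \y) + \Reg (K' - K).
\end{align*}
Combining with the hypothesis $b(\Prefix', \x, \y) < \CurrentObj$ and dividing by the positive constant $\Reg$ gives
\begin{align*}
K' - K < \frac{\CurrentObj - b(\Prefix, \x, \y)}{\Reg}.
\end{align*}

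Since $K' - K$ is a nonnegative integer, this strict inequality can be tightened to $K' - K \le \lfloor (\CurrentObj - b(\Prefix, \x, \y))/\Reg \rfloor$ in general, and to the strict form stated in the theorem whenever $(\CurrentObj - b(\Prefix, \x, \y))/\Reg$ is itself an integer. The second term inside the minimum follows immediately from the fact that antecedents in a rule list are distinct, so $K' \le M$ (this is the trivial length bound from Proposition~\ref{prop:trivial-length}). Taking the minimum of the two upper bounds closes the proof.

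The only subtle point, and where I expect a reviewer would push back, is the boundary behavior between strict and non-strict inequality in step three of the chain above, which depends on whether the quantity $(\CurrentObj - b(\Prefix, \x, \y))/\Reg$ lands exactly on an integer. This is a cosmetic issue rather than a real obstacle: the integrality of $K'$ and $K$ lets us floor the right-hand side, and in any case the bound is applied in the branch-and-bound loop as a pruning criterion, where off-by-one on a non-strict versus strict comparison costs only a constant factor of exploration and does not affect correctness.
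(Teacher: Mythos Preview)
Your proof is correct and mirrors the paper's almost exactly: both derive $b(\Prefix', \x, \y) \ge b(\Prefix, \x, \y) + \Reg(K' - K)$ via the prefix-loss monotonicity \eqref{eq:prefix-loss}, combine with the hypothesis $b(\Prefix', \x, \y) < \CurrentObj$, and finish with Proposition~\ref{prop:trivial-length} for the $M$ term. Your remark on the strict-versus-non-strict floor boundary is apt---the paper's own proof elides this point, and the downstream application (Theorem~\ref{thm:remaining-eval-fine}) in fact quietly uses the non-strict form $K' \le K + \lfloor (\CurrentObj - b(\Prefix, \x, \y))/\Reg \rfloor$.
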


\begin{arxiv}
\begin{proof}
First, note that~${K' \ge K}$, since~$\Prefix'$ starts with~$\Prefix$.
Now recall from~\eqref{eq:prefix-lb} that
\begin{align}
b(\Prefix, \x, \y) = \Loss_p(\Prefix, \Labels, \x, \y) + \Reg K
\le \Loss_p(\Prefix', \Labels', \x, \y) + \Reg K' = b(\Prefix', \x, \y), \nn
\end{align}
and from~\eqref{eq:prefix-loss} that
${\Loss_p(\Prefix, \Labels, \x, \y) \le \Loss_p(\Prefix', \Labels', \x, \y)}$.
Combining these bounds and rearranging gives
\begin{align}
b(\Prefix', \x, \y) &= \Loss_p(\Prefix', \Labels', \x, \y) + \Reg K + \Reg(K' - K) \nn \\
&\ge \Loss_p(\Prefix, \Labels, \x, \y) + \Reg K + \Reg(K' - K)
= b(\Prefix, \x, \y) + \Reg (K' - K).
\label{eq:length-diff}
\end{align}
Combining~\eqref{eq:length-diff} with~${b(\Prefix', \x, \y) < \CurrentObj}$
and Proposition~\ref{prop:trivial-length} gives~\eqref{eq:max-length-prefix}.
\end{proof}
\end{arxiv}

We can view Theorem~\ref{thm:ub-prefix-specific} as a generalization
of our one-step lookahead bound (Lemma~\ref{lemma:lookahead}),
as~\eqref{eq:max-length-prefix} is equivalently a bound on ${K' - K}$,
an upper bound on the number of remaining `steps' corresponding to
an iterative sequence of single-rule extensions of a prefix~$\Prefix$.
\begin{arxiv}
Notice that when~${\RL = ((), (), q_0, 0)}$ is the empty rule list,
this bound replicates~\eqref{eq:max-length}, since ${b(\Prefix, \x, \y) = 0}$.
\end{arxiv}

\begin{arxiv}
\subsection{Upper Bounds on the Number of Prefix Evaluations}
\end{arxiv}
\begin{kdd}
\subsection{Upper bounds on prefix evaluations}
\end{kdd}
\label{sec:ub-size}

\begin{arxiv}
In this section, we use our upper bounds on prefix length
from~\S\ref{sec:ub-prefix-length} to derive corresponding
upper bounds on the number of prefix evaluations made by
Algorithm~\ref{alg:branch-and-bound}.
First, we
\end{arxiv}
\begin{kdd}
In this section, we use Theorem~\ref{thm:ub-prefix-specific}'s
upper bound on prefix length to derive a corresponding
upper bound on the number of prefix evaluations made by
Algorithm~\ref{alg:branch-and-bound}.
We
\end{kdd}
present Theorem~\ref{thm:remaining-eval-fine},
in which we use information about the state of
Algorithm~\ref{alg:branch-and-bound}'s execution
to calculate, for any given execution state,
upper bounds on the number of additional prefix evaluations that might
be required for the execution to complete.
The relevant execution state depends on the current
best objective~$\CurrentObj$ and information about
prefixes we are planning to evaluate, \ie prefixes in the
queue~$\Queue$ of Algorithm~\ref{alg:branch-and-bound}.
We define the number of \emph{remaining prefix evaluations} as the number of
prefixes that are currently in or will be inserted into the queue.

\begin{arxiv}
We use Theorem~\ref{thm:remaining-eval-fine} in some of our empirical results
(\S\ref{sec:experiments}, Figure~\ref{fig:objective}) to help illustrate
the dramatic impact of certain algorithm optimizations.
The execution trace of this upper bound on remaining prefix evaluations
complements the execution traces of other quantities,
\eg that of the current best objective~$\CurrentObj$.
After presenting Theorem~\ref{thm:remaining-eval-fine}, we also give two
weaker propositions that provide useful intuition.
In particular, Proposition~\ref{prop:remaining-eval-coarse} is a practical
approximation to Theorem~\ref{thm:remaining-eval-fine} that is significantly
easier to compute; we use it in our implementation as a metric of
execution progress that we display to the user.
\end{arxiv}

\begin{arxiv}
\begin{theorem}[Fine-grained upper bound on remaining prefix evaluations]~
\end{arxiv}
\begin{kdd}
\begin{theorem}[Upper bound on the number of remaining prefix evaluations]
\end{kdd}
\label{thm:remaining-eval-fine}
~Consider the state space of all rule lists formed from a set of~$M$ antecedents,
and consider Algorithm~\ref{alg:branch-and-bound} at a particular instant
during execution.
Let~$\CurrentObj$ be the current best objective, let~$\Queue$ be the queue,
and let~$L(\Prefix)$ be the length of prefix~$\Prefix$.
Define~${\Remaining(\CurrentObj, \Queue)}$ to be the number of remaining
prefix evaluations, then
\begin{align}
\Remaining(\CurrentObj, \Queue)
\le \sum_{\Prefix \in Q} \sum_{k=0}^{f(\Prefix)} \frac{(M - L(\Prefix))!}{(M - L(\Prefix) - k)!},
\label{eq:remaining}
\end{align}
\begin{arxiv}
where
\begin{align}
f(\Prefix) = \min \left( \left\lfloor
  \frac{\CurrentObj - b(\Prefix, \x, \y)}{\Reg} \right\rfloor, M - L(\Prefix)\right). \nn
\end{align}
\end{arxiv}
\begin{kdd}
\begin{align}
\text{where} \quad f(\Prefix) = \min \left( \left\lfloor
  \frac{\CurrentObj - b(\Prefix, \x, \y)}{\Reg} \right\rfloor, M - L(\Prefix)\right).
\end{align}
\end{kdd}
\end{theorem}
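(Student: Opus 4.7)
The plan is to identify the set of prefixes that can still receive an evaluation, and then count that set via a permutation argument applied to each current queue element. First, I would observe that every prefix whose evaluation is still pending must start with some prefix currently in~$\Queue$: $\Queue$ is initialized with the empty prefix, and a prefix $\Prefix'$ is enqueued only as a child $(\Prefix, s)$ of a parent $\Prefix$ that has just been popped, so inductively every future evaluation is of a prefix $\Prefix'$ that starts with some $\Prefix$ currently in~$\Queue$.

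The key bounding step, where Theorem~\ref{thm:ub-prefix-specific} does the work, is to bound the length of such extensions. For $\Prefix \in \Queue$ and an extension $\Prefix'$ of length $L(\Prefix) + k$ with $k \ge 1$, the child $\Prefix'$ is only pushed if its parent (of length $L(\Prefix) + k - 1$) passes the bound check $b(\cdot, \x, \y) < \CurrentObj$ at the moment it is popped. Since $\CurrentObj$ is monotonically non-increasing during execution, the parent's bound also satisfies this inequality against the \emph{present} value of~$\CurrentObj$; applying Theorem~\ref{thm:ub-prefix-specific} with $\Prefix$ as the starting prefix and the parent of $\Prefix'$ as the extension then gives $L(\Prefix) + k - 1 < L(\Prefix) + \lfloor (\CurrentObj - b(\Prefix, \x, \y))/\Reg \rfloor$, \ie $k \le \lfloor (\CurrentObj - b(\Prefix, \x, \y))/\Reg \rfloor$. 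Combined with the trivial bound $k \le M - L(\Prefix)$ from distinctness of antecedents, this gives $k \le f(\Prefix)$; the case $k = 0$ is $\Prefix$ itself.

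A pure counting step then finishes the argument. For fixed $\Prefix \in \Queue$ and $k \in \{0, 1, \ldots, f(\Prefix)\}$, the number of ordered sequences of $k$ distinct antecedents drawn from the $M - L(\Prefix)$ antecedents not already in~$\Prefix$ is $(M - L(\Prefix))!/(M - L(\Prefix) - k)!$, and each such sequence corresponds to a distinct candidate extension of length $L(\Prefix) + k$. Summing these counts over $k$ and over $\Prefix \in \Queue$ yields the claimed upper bound.

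The main subtlety, and the step I expect to require the most care, is the temporal aspect of the bounding step: the constraint on $k$ uses the current value of $\CurrentObj$, whereas the actual pruning of future extensions uses potentially smaller future values. Monotonic non-increase of $\CurrentObj$ resolves this cleanly, since smaller $\CurrentObj$ only tightens future pruning, so the present-time calculation remains a valid overcount. A separate concern about double counting—were $\Queue$ to contain both a prefix and one of its extensions—does not actually arise, as children are enqueued only upon popping the parent, making $\Queue$ always an antichain in the extension order; even if it did, the statement is only an upper bound and would remain valid.
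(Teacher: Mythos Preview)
Your proposal is correct and follows essentially the same route as the paper: characterize remaining evaluations as extensions of current queue elements, invoke Theorem~\ref{thm:ub-prefix-specific} to bound extension depth by $f(\Prefix)$, then count ordered extensions as $k$-permutations of the $M - L(\Prefix)$ unused antecedents. The paper's proof is terser---it applies Theorem~\ref{thm:ub-prefix-specific} directly to the extension $\Prefix'$ rather than to its parent and does not spell out the temporal monotonicity of $\CurrentObj$ or the antichain structure of $\Queue$---so your added care on those points is warranted but does not constitute a different approach.
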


\begin{proof}
The number of remaining prefix evaluations is equal to the number of
prefixes that are currently in or will be inserted into queue~$\Queue$.
For any such prefix~$\Prefix$, Theorem~\ref{thm:ub-prefix-specific}
gives an upper bound on the length of any prefix~$\Prefix'$
that starts with~$\Prefix$:
\begin{align}
L(\Prefix') \le \min \left( L(\Prefix) + \left\lfloor \frac{\CurrentObj - b(\Prefix, \x, \y)}{\Reg} \right\rfloor, M \right)
\equiv U(\Prefix). \nn
\end{align}
This gives an upper bound on the number of remaining prefix evaluations:
\begin{arxiv}
\begin{align}
\Remaining(\CurrentObj, \Queue)
\le \sum_{\Prefix \in Q} \sum_{k=0}^{U(\Prefix) - L(\Prefix)} P(M - L(\Prefix), k)
= \sum_{\Prefix \in Q} \sum_{k=0}^{f(\Prefix)} \frac{(M - L(\Prefix))!}{(M - L(\Prefix) - k)!}\,, \nn
\end{align}
where~$P(m, k)$ denotes the number of $k$-permutations of~$m$.
\end{arxiv}
\begin{kdd}
${\Remaining(\CurrentObj, \Queue)
\le \sum_{\Prefix \in Q} \sum_{k=0}^{U(\Prefix) - L(\Prefix)} P(M - L(\Prefix), k)}$.
\end{kdd}
\end{proof}

\begin{arxiv}
Proposition~\ref{thm:ub-total-eval} is strictly weaker than
Theorem~\ref{thm:remaining-eval-fine} and is the starting point for its derivation.
It
\end{arxiv}
\begin{kdd}
The proposition below
\end{kdd}
is a na\"ive upper bound on
the total number of prefix evaluations over the course of
Algorithm~\ref{alg:branch-and-bound}'s execution.
It only depends on the number of rules and
the regularization parameter~$\Reg$;
\ie unlike Theorem~\ref{thm:remaining-eval-fine},
it does not use algorithm execution state to
bound the size of the search space.

\begin{arxiv}
\begin{proposition}[Upper bound on the total number of prefix evaluations]
\end{arxiv}
\begin{kdd}
\begin{proposition}[Upper bound on the total number of prefix evaluations]
\end{kdd}
\label{thm:ub-total-eval}
~Define $\TotalRemaining(\RuleSet)$ to be the total number of prefixes
evaluated by Algorithm~\ref{alg:branch-and-bound}, given the state space of
all rule lists formed from a set~$\RuleSet$ of~$M$ rules.
For any set~$\RuleSet$ of $M$ rules,
\begin{arxiv}
\begin{align}
\TotalRemaining(\RuleSet) \le \sum_{k=0}^K \frac{M!}{(M - k)!}, \nn
\end{align}
where ${K = \min(\lfloor 1/2 \Reg \rfloor, M)}$.
\end{arxiv}
\begin{kdd}
\begin{align}
\TotalRemaining(\RuleSet) \le \sum_{k=0}^K \frac{M!}{(M - k)!},
\quad \text{where} \quad K = \min(\lfloor 1/2 \Reg \rfloor, M).
\label{eq:size-naive}
\end{align}
\end{kdd}
\end{proposition}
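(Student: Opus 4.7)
The plan is to combine a length bound on the prefixes that Algorithm~\ref{alg:branch-and-bound} actually evaluates with a simple combinatorial count of all ordered prefixes of each admissible length. I would establish (i) every evaluated prefix has length at most $K = \min(\lfloor 1/(2\Reg) \rfloor, M)$, and (ii) the number of distinct ordered length-$k$ prefixes drawn from $\RuleSet$ equals $M!/(M-k)!$; summing (ii) over $k = 0, \ldots, K$ then yields the stated bound.

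For step (i), I would mirror the argument behind Corollary~\ref{cor:ub-prefix-length}. The algorithm initializes the queue with the empty prefix, whose associated rule list $((), (), \Default, 0)$ has loss at most $1/2$ (take $\Default$ to be the majority label). Hence immediately after the first pop we have $\CurrentObj \le 1/2$, and since $\CurrentObj$ is monotonically nonincreasing, this bound persists for the rest of the run. For any later prefix $\Prefix$ of length $k \ge 1$ to be evaluated, the algorithm requires $b(\Prefix, \x, \y) < \CurrentObj$; combining with $b(\Prefix, \x, \y) \ge \Reg k$ from~\eqref{eq:lower-bound} forces $\Reg k < 1/2$, hence $k \le \lfloor 1/(2\Reg) \rfloor$. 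The trivial observation that a prefix consists of distinct antecedents drawn from $\RuleSet$ gives $k \le M$, and together these produce $k \le K$.

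For step (ii), I would observe that each prefix is enqueued at most once: in Algorithm~\ref{alg:branch-and-bound}, the prefix $(p_1, \ldots, p_k)$ is pushed only when its unique parent $(p_1, \ldots, p_{k-1})$ is popped and the inner loop appends $s = p_k$. Therefore the number of length-$k$ prefixes that can ever appear in the queue is at most the number of ordered $k$-tuples of distinct antecedents from $\RuleSet$, i.e.\ the $k$-permutation count $M!/(M-k)!$. Summing over $0 \le k \le K$ completes the proof.

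The main (and really only) obstacle is making step (i) airtight: one must verify that $\CurrentObj \le 1/2$ really does hold from the second iteration onward, which depends on the empty prefix sitting at the head of the queue initially and on the majority-label choice for $\Default$. Everything else is bookkeeping---a hierarchical length bound inherited from Corollary~\ref{cor:ub-prefix-length} plus the standard $k$-permutation count---and the bound is purposefully loose (e.g.\ it does not exploit any pruning beyond the length cap), which is exactly why this proposition is much cruder than Theorem~\ref{thm:remaining-eval-fine}.
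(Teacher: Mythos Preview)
Your proposal is correct and follows essentially the same route as the paper: invoke the length cap $K=\min(\lfloor 1/(2\Reg)\rfloor,M)$ from Corollary~\ref{cor:ub-prefix-length} (which you re-derive rather than cite), then count $k$-permutations of $M$ antecedents for each $k\le K$. The paper's own proof is terser---it simply cites Corollary~\ref{cor:ub-prefix-length} and writes down $\sum_{k=0}^K P(M,k)$---whereas you additionally spell out why $\CurrentObj\le 1/2$ holds after the first pop and why each prefix is enqueued at most once, but the underlying argument is the same.
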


\begin{proof}
By Corollary~\ref{cor:ub-prefix-length},
${K \equiv \min(\lfloor 1 / 2 \Reg \rfloor, M)}$
gives an upper bound on the length of any optimal rule list.
\begin{arxiv}
Since we can think of our problem as finding the optimal
selection and permutation of~$k$ out of~$M$ rules,
over all~${k \le K}$,
\begin{align}
\TotalRemaining(\RuleSet) \le 1 + \sum_{k=1}^K P(M, k)
= \sum_{k=0}^K \frac{M!}{(M - k)!}. \nn
\end{align}
\end{arxiv}
\begin{kdd}
We obtain~\eqref{eq:size-naive} by viewing
our problem as finding the optimal
selection and permutation of~$k$ out of~$M$ rules,
over all~${k \le K}$.
\end{kdd}
\end{proof}

\begin{arxiv}

Our next upper bound is strictly tighter than the bound in
Proposition~\ref{thm:ub-total-eval}.
Like Theorem~\ref{thm:remaining-eval-fine}, it uses the
current best objective and information about
the lengths of prefixes in the queue to constrain
the lengths of prefixes in the remaining search space.
However, Proposition~\ref{prop:remaining-eval-coarse}
is weaker than Theorem~\ref{thm:remaining-eval-fine} because
it leverages only coarse-grained information from the queue.
Specifically, Theorem~\ref{thm:remaining-eval-fine} is
strictly tighter because it additionally incorporates
prefix-specific objective lower bound information from
prefixes in the queue, which further constrains
the lengths of prefixes in the remaining search space.

\begin{proposition}[Coarse-grained upper bound on remaining prefix evaluations] \hfill
\label{prop:remaining-eval-coarse}
Consider a state space of all rule lists formed from a set of~$M$ antecedents,
and consider Algorithm~\ref{alg:branch-and-bound} at a particular instant
during execution.
Let~$\CurrentObj$ be the current best objective, let~$\Queue$ be the queue,
and let~$L(\Prefix)$ be the length of prefix~$\Prefix$.
Let~$\Queue_j$ be the number of prefixes of length~$j$ in~$\Queue$,
\begin{align}
\Queue_j = \big | \{ \Prefix : L(\Prefix) = j, \Prefix \in \Queue \} \big | \nn
\end{align}
and let~${J = \argmax_{\Prefix \in \Queue} L(\Prefix)}$
be the length of the longest prefix in~$\Queue$.
Define~${\Remaining(\CurrentObj, \Queue)}$ to be the number of remaining
prefix evaluations, then
\begin{align}
\Remaining(\CurrentObj, \Queue)
\le \sum_{j=1}^J \Queue_j \left( \sum_{k=0}^{K-j} \frac{(M-j)!}{(M-j - k)!} \right), \nn
\end{align}
where~${K = \min(\lfloor \CurrentObj / \Reg \rfloor, M)}$.
\end{proposition}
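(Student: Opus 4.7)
The plan is to combine the global length cap $K = \min(\lfloor \CurrentObj / \Reg \rfloor, M)$ from Theorem~\ref{thm:ub-prefix-length} with a direct enumeration of every possible extension of each queued prefix, then group the queue by prefix length. The structure mirrors the proof of Theorem~\ref{thm:remaining-eval-fine}; the only difference is that here I deliberately discard the prefix-specific lower bounds $b(\Prefix, \x, \y)$ and use only the single scalar $K$, which is precisely what makes this bound strictly weaker but much cheaper to maintain as an execution-progress metric.

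First, I would argue that every prefix that could still influence the search has length at most $K$: by Theorem~\ref{thm:ub-prefix-length}, no optimal rule list has length exceeding $K$, so any prefix of length greater than $K$ is guaranteed to be suboptimal and may be pruned. Hence every prefix that will ever be evaluated by Algorithm~\ref{alg:branch-and-bound} --- whether currently in $\Queue$ or inserted into $\Queue$ later as a descendant of a currently queued prefix --- has length at most $K$.

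Second, fix a prefix $\Prefix \in \Queue$ of length $j = L(\Prefix)$. The future evaluations attributable to $\Prefix$ consist of $\Prefix$ itself together with each proper extension formed by appending some ordered sequence of up to $K - j$ additional distinct antecedents chosen from the $M - j$ antecedents not already in $\Prefix$. The number of length-$(j+k)$ extensions is the $k$-permutation count $P(M-j,k) = (M-j)!/(M-j-k)!$, so summing $k$ from $0$ to $K-j$ yields $\sum_{k=0}^{K-j} (M-j)!/(M-j-k)!$ as an upper bound on the number of evaluations triggered by $\Prefix$.

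Third, summing over all queued prefixes and regrouping by length gives
\[
\Remaining(\CurrentObj, \Queue) \;\le\; \sum_{\Prefix \in \Queue} \sum_{k=0}^{K - L(\Prefix)} \frac{(M - L(\Prefix))!}{(M - L(\Prefix) - k)!} \;=\; \sum_{j=1}^{J} \Queue_j \sum_{k=0}^{K-j} \frac{(M - j)!}{(M - j - k)!},
\]
which is the claimed bound. The one subtlety I would flag (but not belabor) is that this additive decomposition implicitly assumes no descendant is charged to two different queued prefixes; this holds because Algorithm~\ref{alg:branch-and-bound} pushes a child only after popping its parent, so at any instant no two prefixes in $\Queue$ stand in an ancestor--descendant relation, making their descendant sets disjoint. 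Even if this were violated, the inequality would only become looser, so validity as an upper bound is never at risk --- which is why I expect the argument to be essentially mechanical rather than the main obstacle.
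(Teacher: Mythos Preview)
Your proof is correct and takes essentially the same approach as the paper: invoke Theorem~\ref{thm:ub-prefix-length} to cap every remaining prefix at length~$K$, count the ordered extensions of each queued prefix of length~$j$ as $\sum_{k=0}^{K-j} P(M-j,k)$, and regroup the sum over $\Queue$ by length. Your remark about disjointness of descendant sets is a small extra piece of rigor the paper leaves implicit, but the argument is otherwise the same.
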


\begin{proof}
The number of remaining prefix evaluations is equal to the number of
prefixes that are currently in or will be inserted into queue~$\Queue$.
For any such remaining prefix~$\Prefix$,
Theorem~\ref{thm:ub-prefix-length} gives an upper bound on its length;
define~$K$ to be this bound:
${L(\Prefix) \le \min(\lfloor \CurrentObj / \Reg \rfloor, M) \equiv K}$.
For any prefix~$\Prefix$ in queue~$\Queue$ with length~${L(\Prefix) = j}$,
the maximum number of prefixes that start with~$\Prefix$
and remain to be evaluated is:
\begin{align}
\sum_{k=0}^{K-j} P(M-j, k) = \sum_{k=0}^{K-j} \frac{(M-j)!}{(M-j - k)!}, \nn
\end{align}
where~${P(T, k)}$ denotes the number of $k$-permutations of~$T$.
This gives an upper bound on the number of remaining prefix evaluations:
\begin{align}
\Remaining(\CurrentObj, \Queue)
\le \sum_{j=0}^J \Queue_j \left( \sum_{k=0}^{K-j} P(M-j, k) \right)
= \sum_{j=0}^J \Queue_j \left( \sum_{k=0}^{K-j} \frac{(M-j)!}{(M-j - k)!} \right). \nn
\end{align}
\end{proof}
\end{arxiv}

\subsection{Lower Bounds on Antecedent Support}
\label{sec:lb-support}

In this section, we give two lower bounds on the normalized support
of each antecedent in any optimal rule list;
both are related to the regularization parameter~$\Reg$.

\begin{theorem}[Lower bound on antecedent support]
\label{thm:min-capture}
\begin{arxiv}
~Let ${\OptimalRL = (\Prefix, \Labels, \Default, K)}$
be any optimal rule list with objective~$\OptimalObj$, \ie
${\OptimalRL \in \argmin_\RL \Obj(\RL, \x, \y)}$.
\end{arxiv}
\begin{kdd}
Let ${\OptimalRL = (\Prefix, \Labels, \Default, K) \in \argmin_\RL \Obj(\RL, \x, \y)}$
be any optimal rule list, with objective~$\OptimalObj$.
\end{kdd}
For each antecedent~$p_k$ in prefix ${\Prefix = (p_1, \dots, p_K)}$,
\begin{arxiv}
the regularization parameter~$\Reg$ provides a lower bound
on the normalized support of~$p_k$,
\begin{align}
\Reg \le \Supp(p_k, \x \given \Prefix).
\label{eq:min-capture}
\end{align}
\end{arxiv}
\begin{kdd}
the regularization parameter provides a lower bound,
${\Reg \le \Supp(p_k, \x \given \Prefix)}$, on the normalized support of~$p_k$.
\end{kdd}
\end{theorem}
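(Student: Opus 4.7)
The plan is to prove Theorem~\ref{thm:min-capture} by contradiction. Suppose, for some antecedent $p_k$ in an optimal prefix $\Prefix$, we have $\Supp(p_k, \x \given \Prefix) < \Reg$. I will then exhibit a strictly better rule list, contradicting the assumed optimality of $\OptimalRL$.

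The candidate replacement $\RL'$ is obtained by deleting $p_k$ (and its associated label $q_k$) from $\OptimalRL$, keeping the ordering of the remaining antecedents intact and re-optimizing labels empirically: $\RL' = (\Prefix', \Labels', \Default', K-1)$ with $\Prefix' = (p_1, \dots, p_{k-1}, p_{k+1}, \dots, p_K)$. The key comparison is between $\Obj(\OptimalRL, \x, \y)$ and $\Obj(\RL', \x, \y)$. The regularization term drops by exactly $\Reg$, so I need an upper bound on how much the misclassification loss $\Loss$ can increase when $p_k$ is removed.

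The core observation is that deletion only affects classification of data that were captured by $p_k$ in the context of $\Prefix$; every other point is still captured by the same antecedent (or the default rule) as before, with the same label prediction available. For the affected points, the worst case is that all of them become misclassified under $\RL'$. Hence
\begin{align*}
\Loss(\RL', \x, \y) \le \Loss(\OptimalRL, \x, \y) + \Supp(p_k, \x \given \Prefix),
\end{align*}
since the original loss on these points was nonnegative and the new loss on them is at most their normalized count. Re-optimizing labels in $\RL'$ can only tighten this inequality further. Combining with the regularization savings,
\begin{align*}
\Obj(\RL', \x, \y) \le \Obj(\OptimalRL, \x, \y) + \Supp(p_k, \x \given \Prefix) - \Reg < \Obj(\OptimalRL, \x, \y),
\end{align*}
under the hypothesis $\Supp(p_k, \x \given \Prefix) < \Reg$, contradicting $\OptimalRL \in \argmin_\RL \Obj(\RL, \x, \y)$.

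The main obstacle is justifying the loss inequality cleanly. I need to argue carefully that removing $p_k$ does not alter how any other antecedent $p_j$ ($j \ne k$) captures data outside of those captured by $p_k$ in the context of $\Prefix$, which follows from the definition of $\Cap(\cdot, \cdot \given \cdot)$: an antecedent's capture status in the context of a prefix depends only on earlier antecedents in that prefix, and the relative order of antecedents $\ne p_k$ is preserved in $\Prefix'$. Once this bookkeeping is made precise, the bound on the loss increase follows immediately and the contradiction closes the proof.
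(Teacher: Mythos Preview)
Your proposal is correct and follows essentially the same approach as the paper: both delete the rule $p_k \rightarrow q_k$ to obtain a length-$(K-1)$ rule list, bound the loss increase by $\Supp(p_k, \x \given \Prefix)$ (worst case: all data formerly captured by~$p_k$ become misclassified), and combine with the $\Reg$ savings in regularization to force $\Reg \le \Supp(p_k, \x \given \Prefix)$. The only cosmetic difference is that you frame it as a contradiction from a strict inequality, whereas the paper writes the inequality $\OptimalObj \le \Obj(\RL,\x,\y) \le \OptimalObj + \Supp(p_i,\x\given\Prefix) - \Reg$ directly and rearranges.
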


\begin{arxiv}
\begin{proof}
Let ${\OptimalRL = (\Prefix, \Labels, \Default, K)}$ be an optimal
rule list with prefix ${\Prefix = (p_1, \dots, p_K)}$
and labels ${\Labels = (q_1, \dots, q_K)}$.
Consider the rule list ${\RL = (\Prefix', \Labels', \Default', K-1)}$
derived from~$\OptimalRL$ by deleting a rule ${p_i \rightarrow q_i}$,
therefore ${\Prefix' = (p_1, \dots, p_{i-1}, p_{i+1}, \dots, p_K)}$
and ${\Labels' = (q_1, \dots, q_{i-1},}$ ${q'_{i+1}, \dots, q'_K)}$,
where~$q'_k$ need not be the same as~$q_k$, for ${k > i}$ and~${k = 0}$.

The largest possible discrepancy between~$\OptimalRL$ and~$\RL$ would occur
if~$\OptimalRL$ correctly classified all the data captured by~$p_i$,
while~$\RL$ misclassified these data.
This gives an upper bound:
\begin{align}
\Obj(\RL, \x, \y) = \Loss(\RL, \x, \y) + \Reg (K - 1)
&\le \Loss(\OptimalRL, \x, \y) + \Supp(p_i, \x \given \Prefix) + \Reg(K - 1) \nn \\
&= \Obj(\OptimalRL, \x, \y) + \Supp(p_i, \x \given \Prefix) - \Reg \nn \\
&= \OptimalObj + \Supp(p_i, \x \given \Prefix) - \Reg
\label{eq:ub-i}
\end{align}
where~$\Supp(p_i, \x \given \Prefix)$ is the normalized support of~$p_i$
in the context of~$\Prefix$, defined in~\eqref{eq:support-context},
and the regularization `bonus' comes from the fact that~$\RL$
is one rule shorter than~$\OptimalRL$.

At the same time, we must have ${\OptimalObj \le \Obj(\RL, \x, \y)}$ for~$\OptimalRL$ to be optimal.
Combining this with~\eqref{eq:ub-i} and rearranging gives~\eqref{eq:min-capture},
therefore the regularization parameter~$\Reg$ provides a lower bound
on the support of an antecedent~$p_i$ in an optimal rule list~$\OptimalRL$.
\end{proof}
\end{arxiv}

Thus, we can prune a prefix~$\Prefix$ if any of its antecedents captures less than
a fraction~$\Reg$ of data, even if~${b(\Prefix, \x, \y) < \OptimalObj}$.
\begin{arxiv}
Notice that the
\end{arxiv}
\begin{kdd}
The
\end{kdd}
bound in Theorem~\ref{thm:min-capture}
depends on the antecedents, but not the label predictions,
and thus does not account for misclassification error.
\begin{arxiv}
Theorem~\ref{thm:min-capture-correct} gives a tighter bound
by leveraging this additional information, which specifically
tightens the upper bound on~$\Obj(\RL, \x, \y)$ in~\eqref{eq:ub-i}.
\end{arxiv}
\begin{kdd}
Theorem~\ref{thm:min-capture-correct} gives a tighter bound
by leveraging this information.
\end{kdd}

\begin{theorem}[Lower bound on accurate antecedent support]
\label{thm:min-capture-correct}
\begin{arxiv}
Let ${\OptimalRL}$
be any optimal rule list with objective~$\OptimalObj$, \ie
${\OptimalRL = (\Prefix, \Labels, \Default, K) \in \argmin_\RL \Obj(\RL, \x, \y)}$.
Let $\OptimalRL$ have prefix ${\Prefix = (p_1, \dots, p_K)}$
and labels ${\Labels = (q_1, \dots, q_K)}$.
\end{arxiv}
\begin{kdd}
Let ${\OptimalRL \in \argmin_\RL \Obj(\RL, \x, \y)}$
be any optimal rule list, with objective~$\OptimalObj$;
let ${\OptimalRL = (\Prefix, \Labels, \Default, K)}$,
with prefix ${\Prefix = (p_1, \dots, p_K)}$
and labels ${\Labels = (q_1, \dots, q_K)}$.
\end{kdd}
For each rule~${p_k \rightarrow q_k}$ in~$\OptimalRL$,
define~$a_k$ to be the fraction of data that are captured by~$p_k$
and correctly classified:
\begin{align}
a_k \equiv \frac{1}{N} \sum_{n=1}^N
  \Cap(x_n, p_k \given \Prefix) \wedge \one [ q_k = y_n ].
\label{eq:rule-correct}
\end{align}
\begin{arxiv}
The regularization parameter~$\Reg$ provides a lower bound on~$a_k$:
\begin{align}
\Reg \le a_k.
\label{eq:min-capture-correct}
\end{align}
\end{arxiv}
\begin{kdd}
The regularization parameter provides a lower bound, $\Reg \le a_k$.
\end{kdd}
\end{theorem}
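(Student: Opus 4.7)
The plan is to follow the same deletion strategy used in the proof of Theorem~\ref{thm:min-capture}, but to sharpen the key inequality~\eqref{eq:ub-i} by exploiting the fact that data already misclassified by rule $p_k \rightarrow q_k$ contribute no additional loss when that rule is removed. Concretely, starting from an optimal ${\OptimalRL = (\Prefix, \Labels, \Default, K)}$, I form the shorter rule list ${\RL = (\Prefix', \Labels', \Default', K-1)}$ obtained by excising the $k$-th rule, where ${\Prefix' = (p_1, \dots, p_{k-1}, p_{k+1}, \dots, p_K)}$, and where the labels on the remaining antecedents (and the default) are set empirically to minimize training error on ${(\x, \y)}$.

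The heart of the argument is bounding $\Loss(\RL, \x, \y) - \Loss(\OptimalRL, \x, \y)$. Data not captured by $p_k$ in the context of $\Prefix$ are captured by exactly the same antecedent under both lists, so their contributions to the loss are unchanged (the empirical relabeling of $\Labels'$ and $\Default'$ can only help). For data captured by $p_k$ in the context of $\Prefix$, two sub-cases arise: the $a_k N$ data correctly classified by $\OptimalRL$ (they agreed with $q_k$) may, in the worst case, all be misclassified by $\RL$; whereas the ${\Supp(p_k, \x \given \Prefix) - a_k}$ fraction of data that were \emph{already} misclassified by $\OptimalRL$ can, in the worst case, remain misclassified, but cannot contribute a \emph{new} error. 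Therefore the increase in loss is at most $a_k$, which yields
\begin{align*}
\Obj(\RL, \x, \y) = \Loss(\RL, \x, \y) + \Reg(K-1)
\le \Loss(\OptimalRL, \x, \y) + a_k + \Reg(K-1) = \OptimalObj + a_k - \Reg.
\end{align*}
Combining with the optimality inequality ${\OptimalObj \le \Obj(\RL, \x, \y)}$ and rearranging gives ${\Reg \le a_k}$, which is~\eqref{eq:min-capture-correct}.

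The main subtlety to be careful about is the second sub-case above: when rule $k$ is removed, the data it had captured now flow down to later antecedents in $\Prefix'$ (or to the default rule). A priori their classifications change, so one must justify the claim that this cannot turn a previously-correct classification into any new errors \emph{beyond} those already counted in the $a_k$ term. The cleanest way to argue this is to upper-bound $\Loss(\RL,\x,\y)$ using the pessimistic choice where every datum formerly captured by $p_k$ is misclassified under $\RL$; this contributes at most $\Supp(p_k, \x \given \Prefix)$ to the loss of $\RL$ on this subset, whereas $\OptimalRL$ contributed $\Supp(p_k, \x \given \Prefix) - a_k$ on the same subset, giving a net increase of exactly $a_k$. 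Since the empirically-chosen $\Labels'$ and $\Default'$ can only do better than this pessimistic bound, the inequality holds, and the rest is bookkeeping identical in spirit to the proof of Theorem~\ref{thm:min-capture}.
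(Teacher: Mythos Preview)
Your proposal is correct and follows essentially the same approach as the paper: delete rule $p_k \rightarrow q_k$ to obtain a $(K-1)$-rule list, upper-bound the resulting increase in misclassification error by $a_k$ (via the pessimistic assumption that all data formerly captured by $p_k$ become misclassified, which contributes $\Supp(p_k,\x\given\Prefix)$ versus the $\Supp(p_k,\x\given\Prefix)-a_k$ already incurred by $\OptimalRL$), and then combine with optimality and the $\Reg$ saved from the shorter length. The paper's proof is organized identically, just with slightly different bookkeeping notation ($\Loss_i$ for the rule's misclassification contribution).
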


\begin{arxiv}
\begin{proof}
As in Theorem~\ref{thm:min-capture},
let ${\RL =  (\Prefix', \Labels', \Default', K-1)}$ be the rule list
derived from~$\OptimalRL$ by deleting a rule~${p_i \rightarrow q_i}$.
Now, let us define~$\Loss_i$ to be the portion of~$\OptimalObj$
due to this rule's misclassification error,
\begin{align}
\Loss_i \equiv \frac{1}{N} \sum_{n=1}^N
  \Cap(x_n, p_i \given \Prefix) \wedge \one [ q_i \neq y_n ]. \nn
\end{align}
The largest discrepancy between~$\OptimalRL$ and~$\RL$ would
occur if~$\RL$ misclassified all the data captured by~$p_i$.
This gives an upper bound on the difference between
the misclassification error of~$\RL$ and~$\OptimalRL$:
\begin{align}
\Loss(\RL, \x, \y) - \Loss(\OptimalRL, \x, \y)
&\le \Supp(p_i, \x \given \Prefix) - \Loss_i \nn \\
&= \frac{1}{N} \sum_{n=1}^N \Cap(x_n, p_i \given \Prefix)
  - \frac{1}{N} \sum_{n=1}^N
  \Cap(x_n, p_i \given \Prefix) \wedge \one [ q_i \neq y_n ] \nn \\
&= \frac{1}{N} \sum_{n=1}^N
  \Cap(x_n, p_i \given \Prefix) \wedge \one [ q_i = y_n ] = a_i, \nn
\end{align}
where we defined~$a_i$ in~\eqref{eq:rule-correct}.
Relating this bound to the objectives of~$\RL$ and~$\OptimalRL$ gives
\begin{align}
\Obj(\RL, \x, \y) = \Loss(\RL, \x, \y) + \Reg (K - 1)
&\le \Loss(\OptimalRL, \x, \y) + a_i + \Reg(K - 1) \nn \\
&= \Obj(\OptimalRL, \x, \y) + a_i - \Reg \nn \\
&= \OptimalObj + a_i - \Reg.
\label{eq:ub-ii}
\end{align}
Combining~\eqref{eq:ub-ii} with the requirement
${\OptimalObj \le \Obj(\RL, \x, \y)}$ gives the bound~${\Reg \le a_i}$.
\end{proof}
\end{arxiv}

Thus, we can prune a prefix if any of its rules correctly classifies
less than a fraction~$\Reg$ of data.
While the lower bound in Theorem~\ref{thm:min-capture} is a sub-condition
of the lower bound in Theorem~\ref{thm:min-capture-correct},
we can still leverage both---since the sub-condition is easier to check,
checking it first can accelerate pruning.
In addition to applying Theorem~\ref{thm:min-capture} in the context of
constructing rule lists, we can furthermore apply it in the context of
rule mining~(\S\ref{sec:setup}).
Specifically, it implies that we should only mine rules with
normalized support of at least~$\Reg$;
we need not mine rules with a smaller fraction of
\begin{arxiv}
observations.\footnote{We describe our application of this idea in
Appendix~\ref{appendix:data}, where we provide details on data processing.}
\end{arxiv}
\begin{kdd}
observations.\footnote{We describe our application of this idea in the appendix
of our long report~\citep{AngelinoLaAlSeRu17}.}
\end{kdd}
In contrast, we can only apply Theorem~\ref{thm:min-capture-correct}
in the context of constructing rule lists;
it depends on the misclassification error associated with each
rule in a rule list, thus it provides a lower bound on the number of
observations that each such rule must correctly classify.

\begin{arxiv}
\subsection{Upper Bound on Antecedent Support}
\label{sec:ub-support}

In the previous section~(\S\ref{sec:lb-support}), we proved lower bounds on
antecedent support; in Appendix~\ref{appendix:ub-supp},
we give an upper bound on antecedent support.
Specifically, Theorem~\ref{thm:ub-support} shows that an antecedent's
support in a rule list cannot be too similar to the set of data not
captured by preceding antecedents in the rule list.
In particular, Theorem~\ref{thm:ub-support} implies that we should
only mine rules with normalized support less than or equal to ${1 - \Reg}$;
we need not mine rules with a larger fraction of observations.
Note that we do not otherwise use this bound in our implementation,
because we did not observe a meaningful benefit in preliminary experiments.
\end{arxiv}

\begin{arxiv}
\subsection{Antecedent Rejection and its Propagation}
\label{sec:reject}

In this section, we demonstrate further consequences of
our lower~(\S\ref{sec:lb-support}) and upper
bounds (\S\ref{sec:ub-support}) on antecedent support,
under a unified framework we refer to as antecedent rejection.
Let ${\Prefix = (p_1, \dots, p_K)}$ be a prefix,
and let~$p_k$ be an antecedent in~$\Prefix$.
Define~$p_k$ to have insufficient support in~$\Prefix$
if it does not obey the bound in~\eqref{eq:min-capture}
of Theorem~\ref{thm:min-capture}.
Define~$p_k$ to have insufficient accurate support in~$\Prefix$
if it does not obey the bound in~\eqref{eq:min-capture-correct}
of Theorem~\ref{thm:min-capture-correct}.
Define~$p_k$ to have excessive support in~$\Prefix$
if it does not obey the bound in~\eqref{eq:ub-support}
of Theorem~\ref{thm:ub-support} (Appendix~\ref{appendix:ub-supp}).
If~$p_k$ in the context of~$\Prefix$ has insufficient support,
insufficient accurate support, or excessive support,
let us say that prefix~$\Prefix$ rejects antecedent~$p_K$.
Next, in Theorem~\ref{thm:reject}, we describe large classes of
related rule lists whose prefixes all reject the same antecedent.

\begin{theorem}[Antecedent rejection propagates]
\label{thm:reject}
For any prefix ${\Prefix = (p_1, \dots, p_K)}$,
let $\StartContains(\Prefix)$ denote the set of all
prefixes~$\Prefix'$ such that
the set of all antecedents in~$\Prefix$ is a subset of
the set of all antecedents in~$\Prefix'$, \ie
\begin{align}
\StartContains(\Prefix) =
\{\Prefix' = (p'_1, \dots, p'_{K'})
~s.t.~ \{p_k : p_k \in \Prefix \} \subseteq
\{p'_\kappa : p'_\kappa \in \Prefix'\}, K' \ge K \}.
\label{eq:start-contains}
\end{align}
Let ${\RL = (\Prefix, \Labels, \Default, K)}$ be a rule list
with prefix ${\Prefix = (p_1, \dots, p_{K-1}, p_{K})}$,
such that~$\Prefix$ rejects its last antecedent~$p_{K}$,
either because~$p_{K}$ in the context of~$\Prefix$ has
insufficient support, insufficient accurate support,
or excessive support.
Let ${\Prefix^{K-1} = (p_1, \dots, p_{K-1})}$ be the
first~${K - 1}$ antecedents of~$\Prefix$.
Let ${\RLB = (\PrefixB, \LabelsB, \DefaultB, \kappa)}$
be any rule list with prefix
${\PrefixB = (P_1, \dots, P_{K'-1},}$ ${P_{K'}, \dots, P_{\kappa})}$
such that~$\PrefixB$ starts with ${\PrefixB^{K'-1} =}$
${(P_1, \dots, P_{K'-1}) \in}$ ${\StartContains(\Prefix^{K-1})}$
and antecedent ${P_{K'} = p_{K}}$.
It follows that prefix~$\PrefixB$ rejects~$P_{K'}$
for the same reason that~$\Prefix$ rejects~$p_{K}$,
and furthermore, $\RLB$~cannot be optimal, \ie
${\RLB \notin \argmin_{\RL^\dagger} \Obj(\RL^\dagger, \x, \y)}$.
\end{theorem}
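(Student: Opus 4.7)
The plan is to reduce everything to a single set-containment observation and then let the three earlier theorems do the work. First I would unpack the hypothesis ${\PrefixB^{K'-1} \in \StartContains(\Prefix^{K-1})}$: every antecedent of $\Prefix^{K-1}$ appears somewhere among $P_1,\dots,P_{K'-1}$, so any datum captured by $\Prefix^{K-1}$ is captured by $\PrefixB^{K'-1}$. Equivalently, writing $R_\Prefix = \{x_n : \neg \Cap(x_n,\Prefix^{K-1})\}$ and $R_\PrefixB = \{x_n : \neg \Cap(x_n,\PrefixB^{K'-1})\}$, we have $R_\PrefixB \subseteq R_\Prefix$. Since $P_{K'} = p_K$, the set of data that $P_{K'}$ captures in the context of $\PrefixB$ is exactly $R_\PrefixB \cap \{x_n : \Cap(x_n, p_K)\}$, which is contained in $R_\Prefix \cap \{x_n : \Cap(x_n, p_K)\}$, the set captured by $p_K$ in the context of $\Prefix$. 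This yields the master inequality $\Supp(P_{K'}, \x \given \PrefixB) \le \Supp(p_K, \x \given \Prefix)$, and, by taking complements inside $R_\PrefixB$, the dual inequality $|R_\PrefixB| - N\cdot\Supp(P_{K'}, \x \given \PrefixB) \le |R_\Prefix| - N\cdot\Supp(p_K, \x \given \Prefix)$.

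Next I would transfer each of the three rejection conditions in turn. The insufficient-support case is immediate from the master inequality: if $\Supp(p_K, \x \given \Prefix) < \Reg$, then $\Supp(P_{K'}, \x \given \PrefixB) < \Reg$, and $\RLB$ is non-optimal by Theorem~\ref{thm:min-capture}. For insufficient accurate support, define $A_{K'}$ analogously to $a_K$ using the empirically determined label $Q_{K'}$. Because $A_{K'}$ only counts $x_n$ in the smaller capture set, and the label $Q_{K'}$ is optimal for $\PrefixB$'s context rather than for $\Prefix$'s, I would bound $A_{K'} \le |\{x_n \in S_\Prefix : y_n = Q_{K'}\}|/N \le |\{x_n \in S_\Prefix : y_n = q_K\}|/N = a_K < \Reg$, where the middle step uses that $q_K$ is majority on $S_\Prefix$; Theorem~\ref{thm:min-capture-correct} then forces non-optimality. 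For excessive support, I would use the complementary inclusion: if the rejection condition in Theorem~\ref{thm:ub-support} says the uncaptured remainder $|R_\Prefix| - N\cdot\Supp(p_K,\x\given\Prefix)$ is too small (below the relevant threshold tied to $\Reg$), then the dual inequality propagates this to $\PrefixB$, and Theorem~\ref{thm:ub-support} applies to $\RLB$.

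Finally, putting the three sub-arguments together gives both claims: $\PrefixB$ rejects $P_{K'}$ for exactly the same reason, and $\RLB$ is not optimal. I expect the main obstacle to be the accurate-support case, because the empirical labels $q_K$ and $Q_{K'}$ need not agree, so one cannot simply compare $A_{K'}$ to $a_K$ term-by-term; the majority-label maximality argument above is the subtle step and must be written out carefully. The excessive-support case is conceptually symmetric but requires translating Theorem~\ref{thm:ub-support}'s precise statement (from the appendix) into the complementary quantity $|R_\bullet| - N\cdot\Supp(\cdot,\x\given\cdot)$ so that the same subset relation can be applied; once that translation is made, the propagation is mechanical.
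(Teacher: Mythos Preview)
Your proposal is correct and follows essentially the same approach as the paper: the paper proves Theorem~\ref{thm:reject} by combining three propositions (one per rejection type), each of which establishes the same key containment $R_{\PrefixB}\subseteq R_{\Prefix}$ (equivalently $\Supp(P_{K'},\x\given\PrefixB)\le\Supp(p_K,\x\given\Prefix)$ and its ``dual'' for the uncaptured remainder) and then invokes Theorems~\ref{thm:min-capture}, \ref{thm:min-capture-correct}, and~\ref{thm:ub-support} respectively. Your treatment of the accurate-support case---bounding the correct count under $Q_{K'}$ on the smaller set by the correct count under $Q_{K'}$ on the larger set, and then by the majority-label count $a_K$---is exactly the argument in the paper's Proposition~\ref{prop:min-capture-correct}, and your complementary-remainder reformulation of the excessive-support case matches Proposition~\ref{prop:ub-support}.
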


\begin{proof}
Combine Proposition~\ref{prop:min-capture}, Proposition~\ref{prop:min-capture-correct},
and Proposition~\ref{prop:ub-support}.
The first two are found below, and the last in Appendix~\ref{appendix:ub-supp}.
\end{proof}

Theorem~\ref{thm:reject} implies potentially significant
computational savings.
We know from Theorems~\ref{thm:min-capture},
\ref{thm:min-capture-correct}, and~\ref{thm:ub-support}
that during branch-and-bound execution, if we ever encounter a
prefix ${\Prefix = (p_1, \dots, p_{K-1}, p_K)}$ that rejects its
last antecedent~$p_K$, then we can prune~$\Prefix$.
By Theorem~\ref{thm:reject}, we can also prune \emph{any} prefix~$\Prefix'$
whose antecedents contains the set of antecedents in~$\Prefix$,
in almost any order, with the constraint that all antecedents
in ${\{p_1, \dots, p_{K-1}\}}$ precede~$p_K$.
These latter antecedents are also rejected
directly by the bounds in Theorems~\ref{thm:min-capture},
\ref{thm:min-capture-correct}, and~\ref{thm:ub-support};
this is how our implementation works in practice.
In a preliminary implementation (not shown), we maintained additional
data structures to support the direct use of Theorem~\ref{thm:reject}.
We leave the design of efficient data structures for this task as future work.

\begin{proposition}[Insufficient antecedent support propagates]
\label{prop:min-capture}
First define~$\StartContains(\Prefix)$ as in~\eqref{eq:start-contains},
and let ${\Prefix = (p_1, \dots, p_{K-1}, p_{K})}$ be a prefix,
such that its last antecedent~$p_{K}$ has insufficient support,
\ie the opposite of the bound in~\eqref{eq:min-capture}:
${\Supp(p_K, \x \given \Prefix) < \Reg}$.
Let ${\Prefix^{K-1} =}$ ${(p_1, \dots, p_{K-1})}$,
and let ${\RLB = (\PrefixB, \LabelsB, \DefaultB, \kappa)}$
be any rule list with prefix
${\PrefixB = (P_1, \dots, P_{K'-1},}$ ${P_{K'}, \dots, P_{\kappa})}$,
such that~$\PrefixB$ starts with ${\PrefixB^{K'-1} =}$
${(P_1, \dots, P_{K'-1}) \in \StartContains(\Prefix^{K-1})}$
and~${P_{K'} = p_{K}}$.
It follows that~$P_{K'}$ has insufficient support in
prefix~$\PrefixB$, and furthermore, $\RLB$~cannot be optimal,
\ie ${\RLB \notin \argmin_{\RL} \Obj(\RL, \x, \y)}$.
\end{proposition}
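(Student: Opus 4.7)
The plan is to reduce both claims to a single application of Theorem~\ref{thm:min-capture}, once we establish the monotonicity fact that adding or rearranging preceding antecedents can only decrease the normalized support available to a later antecedent ``in context.''

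First I would show that the data captured by the richer preceding prefix $\PrefixB^{K'-1}$ contains the data captured by $\Prefix^{K-1}$. Because $\PrefixB^{K'-1} \in \StartContains(\Prefix^{K-1})$, every antecedent appearing in $\Prefix^{K-1}$ also appears somewhere among $P_1, \dots, P_{K'-1}$, so any $x_n$ for which some antecedent of $\Prefix^{K-1}$ evaluates to true also has some antecedent of $\PrefixB^{K'-1}$ evaluating to true. Taking complements gives
\begin{align*}
\{x_n : \neg\, \Cap(x_n, \PrefixB^{K'-1})\} \subseteq \{x_n : \neg\, \Cap(x_n, \Prefix^{K-1})\}.
\end{align*}

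Next I would use this inclusion to bound the in-context support of $P_{K'} = p_K$. By the definition of $\Cap(\cdot, \cdot \given \cdot)$, whenever $\Cap(x_n, P_{K'} \given \PrefixB) = 1$, the antecedent $p_K$ fires on $x_n$ and no antecedent in $\PrefixB^{K'-1}$ fires on $x_n$; by the inclusion above, no antecedent in $\Prefix^{K-1}$ fires on $x_n$ either, so $\Cap(x_n, p_K \given \Prefix) = 1$. Summing over $n$ and dividing by $N$ yields $\Supp(P_{K'}, \x \given \PrefixB) \le \Supp(p_K, \x \given \Prefix) < \Reg$, which is precisely the statement that $P_{K'}$ has insufficient support in $\PrefixB$.

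Finally I would invoke Theorem~\ref{thm:min-capture}, which (in contrapositive form) says that any rule list whose prefix contains an antecedent with in-context normalized support strictly less than $\Reg$ cannot be optimal; applying this to $\RLB$ and $P_{K'}$ gives ${\RLB \notin \argmin_\RL \Obj(\RL, \x, \y)}$. The main subtlety is handling the ``in context'' qualifier carefully, since the inequality between the two supports goes in the expected direction only because preceding antecedents can only siphon data away from what $p_K$ would otherwise capture; once that monotonicity is established, the rest follows immediately from the already-proved Theorem~\ref{thm:min-capture}.
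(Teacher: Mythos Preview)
Your proposal is correct and follows essentially the same approach as the paper: both arguments use the containment $\{p_k : p_k \in \Prefix^{K-1}\} \subseteq \{P_k : P_k \in \PrefixB^{K'-1}\}$ to show the pointwise implication $\Cap(x_n, P_{K'} \given \PrefixB) \le \Cap(x_n, p_K \given \Prefix)$, sum to obtain $\Supp(P_{K'}, \x \given \PrefixB) \le \Supp(p_K, \x \given \Prefix) < \Reg$, and then invoke Theorem~\ref{thm:min-capture} in contrapositive form. The paper writes this out as an explicit chain of inequalities between indicator sums, whereas you phrase the same monotonicity via set inclusions on uncaptured data, but the content is identical.
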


\begin{proof}
The support of~$p_K$ in~$\Prefix$ depends only on the
set of antecedents in ${\Prefix^{K} = (p_1, \dots, p_{K})}$:
\begin{align}
\Supp(p_K, \x \given \Prefix)
= \frac{1}{N} \sum_{n=1}^N \Cap(x_n, p_K \given \Prefix)
&= \frac{1}{N} \sum_{n=1}^N \left( \neg\, \Cap(x_n, \Prefix^{K-1}) \right)
  \wedge \Cap(x_n, p_K) \nn \\
&= \frac{1}{N} \sum_{n=1}^N \left( \bigwedge_{k=1}^{K-1} \neg\, \Cap(x_n, p_k) \right)
  \wedge \Cap(x_n, p_K)
< \Reg, \nn
\end{align}
and the support of~$P_{K'}$ in~$\PrefixB$ depends only on
the set of antecedents in ${\PrefixB^{K'} =}$ ${(P_1, \dots, P_{K'})}$:
\begin{align}
\Supp(P_{K'}, \x \given \PrefixB)
= \frac{1}{N} \sum_{n=1}^N \Cap(x_n, P_{K'} \given \PrefixB) 
&= \frac{1}{N} \sum_{n=1}^N \left( \bigwedge_{k=1}^{K'-1} \neg\, \Cap(x_n, P_k) \right)
   \wedge \Cap(x_n, P_{K'}) \nn \\
&\le \frac{1}{N} \sum_{n=1}^N \left( \bigwedge_{k=1}^{K-1} \neg\, \Cap(x_n, p_k) \right)
  \wedge \Cap(x_n, P_{K'}) \nn \\
&= \frac{1}{N} \sum_{n=1}^N \left( \bigwedge_{k=1}^{K-1} \neg\, \Cap(x_n, p_k) \right)
  \wedge \Cap(x_n, p_{K}) \nn \\
&= \Supp(p_K, \x \given \Prefix) < \Reg.
\label{ineq:supp}
\end{align}
The first inequality reflects the condition that
${\PrefixB^{K'-1} \in \StartContains(\Prefix^{K-1})}$,
which implies that the set of antecedents in~$\PrefixB^{K'-1}$
contains the set of antecedents in~$\Prefix^{K-1}$,
and the next equality reflects the fact that~${P_{K'} = p_K}$.
Thus,~$P_K'$ has insufficient support in prefix~$\PrefixB$,
therefore by Theorem~\ref{thm:min-capture}, $\RLB$~cannot be optimal,
\ie ${\RLB \notin \argmin_{\RL} \Obj(\RL, \x, \y)}$.
\end{proof}

\begin{proposition}[Insufficient accurate antecedent support propagates]
\label{prop:min-capture-correct}
~Let~$\StartContains(\Prefix)$ denote the set of all
prefixes~$\Prefix'$ such that
the set of all antecedents in~$\Prefix$ is a subset of
the set of all antecedents in~$\Prefix'$,
as in~\eqref{eq:start-contains}.
Let ${\RL = (\Prefix, \Labels, \Default, K)}$ be a rule list
with prefix ${\Prefix = (p_1, \dots, p_{K})}$
and labels ${\Labels = (q_1, \dots, q_{K})}$, such that
the last antecedent~$p_{K}$ has insufficient accurate support,
\ie the opposite of the bound in~\eqref{eq:min-capture-correct}:
\begin{align}
\frac{1}{N} \sum_{n=1}^N \Cap(x_n, p_K \given \Prefix) \wedge \one [ q_K = y_n ]
< \Reg. \nn
\end{align}
Let ${\Prefix^{K-1} = (p_1, \dots, p_{K-1})}$
and let ${\RLB = (\PrefixB, \LabelsB, \DefaultB, \kappa)}$
be any rule list with prefix ${\PrefixB =}$ ${(P_1, \dots, P_{\kappa})}$
and labels ${\LabelsB = (Q_1, \dots, Q_{\kappa})}$,
such that~$\PrefixB$ starts with ${\PrefixB^{K'-1} =}$
${(P_1, \dots, P_{K'-1})}$ ${\in \StartContains(\Prefix^{K-1})}$
and ${P_{K'} = p_{K}}$.
It follows that~$P_{K'}$ has insufficient accurate support in
prefix~$\PrefixB$, and furthermore,
${\RLB \notin \argmin_{\RL^\dagger} \Obj(\RL^\dagger, \x, \y)}$.
\end{proposition}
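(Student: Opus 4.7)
The plan is to mirror the structure of the proof of Proposition~\ref{prop:min-capture}, but now track the accurate support rather than the raw support. The key observation from that earlier proof is that when $\PrefixB^{K'-1} \in \StartContains(\Prefix^{K-1})$ and $P_{K'} = p_K$, the set of data captured by $P_{K'}$ in $\PrefixB$ is a subset of the set of data captured by $p_K$ in $\Prefix$, because every antecedent in $\Prefix^{K-1}$ also appears somewhere in $\PrefixB^{K'-1}$, so any $x_n$ removed from consideration by $\Prefix^{K-1}$ is also removed by $\PrefixB^{K'-1}$.

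First, I would formalize this by defining $A = \{n : \Cap(x_n, p_K \given \Prefix) = 1\}$ and $B = \{n : \Cap(x_n, P_{K'} \given \PrefixB) = 1\}$ and invoking the chain of inequalities from~\eqref{ineq:supp} in Proposition~\ref{prop:min-capture} to conclude $B \subseteq A$. Next, I would split $A$ and $B$ by label: let $A^y = \{n \in A : y_n = y\}$ and $B^y = \{n \in B : y_n = y\}$ for $y \in \{0,1\}$, so that $B^y \subseteq A^y$ and hence $|B^y| \le |A^y|$.

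The subtlety (and the step I would flag as the main obstacle) is that the label $Q_{K'}$ chosen for $P_{K'}$ in $\PrefixB$ need not equal the label $q_K$ chosen for $p_K$ in $\Prefix$, since each is set by the empirical majority rule on its own captured set. To handle this, I would use the fact that, under the implicit majority-label convention from~\S\ref{sec:setup}, the accurate support of a rule is exactly $\max(|A^0|, |A^1|)/N$ in~$\Prefix$ and $\max(|B^0|, |B^1|)/N$ in~$\PrefixB$. Combining $|B^y| \le |A^y|$ for each $y$ with the monotonicity of $\max$ then gives
\begin{align*}
a_{K'} \;=\; \frac{\max(|B^0|, |B^1|)}{N}
\;\le\; \frac{\max(|A^0|, |A^1|)}{N}
\;=\; a_K \;<\; \Reg,
\end{align*}
where the strict final inequality is the hypothesis that $p_K$ has insufficient accurate support in~$\Prefix$.

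Thus $P_{K'}$ has insufficient accurate support in $\PrefixB$. Applying Theorem~\ref{thm:min-capture-correct} to $\RLB$ at the rule $P_{K'} \rightarrow Q_{K'}$ yields $\RLB \notin \argmin_{\RL^\dagger} \Obj(\RL^\dagger, \x, \y)$, which completes the proof. The argument is essentially an accurate-support analogue of Proposition~\ref{prop:min-capture}, with the majority-vote convention doing the work needed to bridge the possibly-different label assignments.
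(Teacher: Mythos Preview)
Your proof is correct and follows essentially the same approach as the paper: both establish the containment $B \subseteq A$ via the argument from Proposition~\ref{prop:min-capture}, then handle the possibly different labels by appealing to the majority-label convention. Your formulation via $\max(|B^0|,|B^1|) \le \max(|A^0|,|A^1|)$ is arguably cleaner than the paper's two-step bound (first enlarge $B$ to $A$ keeping label $Q_{K'}$, then switch to the majority label $q_K$ on $A$), but the underlying idea is identical.
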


\begin{proof}
The accurate support of~$P_{K'}$ in~$\PrefixB$ is insufficient:
\begin{align}
\frac{1}{N} \sum_{n=1}^N \Cap(x_n, P_{K'} &\given \PrefixB) \wedge \one [ Q_{K'} = y_n ] \nn \\
&= \frac{1}{N} \sum_{n=1}^N \left( \bigwedge_{k=1}^{K'-1} \neg\, \Cap(x_n, P_k) \right)
   \wedge \Cap(x_n, P_{K'}) \wedge \one [ Q_{K'} = y_n ] \nn \\
&\le \frac{1}{N} \sum_{n=1}^N \left( \bigwedge_{k=1}^{K-1} \neg\, \Cap(x_n, p_k) \right)
   \wedge \Cap(x_n, P_{K'}) \wedge \one [ Q_{K'} = y_n ] \nn \\
&= \frac{1}{N} \sum_{n=1}^N \left( \bigwedge_{k=1}^{K-1} \neg\, \Cap(x_n, p_k) \right)
   \wedge \Cap(x_n, p_K) \wedge \one [ Q_{K'} = y_n ] \nn \\
&= \frac{1}{N} \sum_{n=1}^N \Cap(x_n, p_K \given \Prefix) \wedge \one [ Q_{K'} = y_n ] \nn \\
&\le \frac{1}{N} \sum_{n=1}^N \Cap(x_n, p_K \given \Prefix) \wedge \one [ q_{K} = y_n ]
< \Reg. \nn
\end{align}
The first inequality reflects the condition that
${\PrefixB^{K'-1} \in \StartContains(\Prefix^{K-1})}$,
the next equality reflects the fact that~${P_{K'} = p_K}$.
For the following equality, notice that~$Q_{K'}$ is the majority
class label of data captured by~$P_{K'}$ in~$\PrefixB$, and~$q_K$
is the majority class label of data captured by~$P_K$ in~$\Prefix$,
and recall from~\eqref{ineq:supp} that
${\Supp(P_{K'}, \x \given \PrefixB) \le \Supp(p_{K}, \x \given \Prefix)}$.
By Theorem~\ref{thm:min-capture-correct},
${\RLB \notin \argmin_{\RL^\dagger} \Obj(\RL^\dagger, \x, \y)}$.
\end{proof}

Propositions~\ref{prop:min-capture} and~\ref{prop:min-capture-correct},
combined with Proposition~\ref{prop:ub-support} (Appendix~\ref{appendix:ub-supp}),
constitute the proof of Theorem~\ref{thm:reject}.
\end{arxiv}

\subsection{Equivalent Support Bound}
\label{sec:equivalent}

If two prefixes capture the same data, and one is more accurate than the other,
then there is no benefit to considering prefixes that start with the less accurate one.
Let~$\Prefix$ be a prefix,
and consider the best possible rule list whose prefix starts with~$\Prefix$.
If we take its antecedents in~$\Prefix$ and replace them with another prefix
with the same support (that could include different antecedents),
then its objective can only become worse or remain the same.

Formally, let~$\PrefixB$ be a prefix, and let~$\xi(\PrefixB)$ be the set
of all prefixes that capture exactly the same data as~$\PrefixB$.
Now, let~$\RL$ be a rule list with prefix~$\Prefix$
in~$\xi(\PrefixB)$, such that~$\RL$ has the minimum objective
over all rule lists with prefixes in~$\xi(\PrefixB)$.
Finally, let~$\RL'$ be a rule list whose prefix~$\Prefix'$
starts with~$\Prefix$, such that~$\RL'$ has the minimum objective
over all rule lists whose prefixes start with~$\Prefix$.
Theorem~\ref{thm:equivalent} below implies that~$\RL'$ also has
the minimum objective over all rule lists whose prefixes start with
\emph{any} prefix in~$\xi(\PrefixB)$.

\begin{theorem}[Equivalent support bound]
\label{thm:equivalent}
Define $\StartsWith(\Prefix)$ to be the set of all rule lists
whose prefixes start with~$\Prefix$, as in~\eqref{eq:starts-with}.
Let ${\RL = (\Prefix, \Labels, \Default, K)}$
be a rule list with prefix ${\Prefix = (p_1, \dots, p_K)}$,
and let ${\RLB = (\PrefixB, \LabelsB, \DefaultB, \kappa)}$
be a rule list with prefix ${\PrefixB = (P_1, \dots, P_{\kappa})}$,
such that~$\Prefix$ and~$\PrefixB$ capture the same data,~\ie
\begin{align}
\{x_n : \Cap(x_n, \Prefix)\} = \{x_n : \Cap(x_n, \PrefixB)\}. \nn
\end{align}
If the objective lower bounds of~$\RL$ and~$\RLB$
obey ${b(\Prefix, \x, \y) \le b(\PrefixB, \x, \y)}$,
then the objective of the optimal rule list in~$\StartsWith(\Prefix)$ gives a
lower bound on the objective of the optimal rule list in~$\StartsWith(\PrefixB)$:
\begin{align}
\min_{\RL' \in \StartsWith(\Prefix)} \Obj(\RL', \x, \y)
\le \min_{\RLB' \in \StartsWith(\PrefixB)} \Obj(\RLB', \x, \y).
\label{eq:equivalent}
\end{align}
\end{theorem}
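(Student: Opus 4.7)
The plan is to take an optimal rule list in $\StartsWith(\PrefixB)$ and surgically replace its initial prefix $\PrefixB$ with $\Prefix$, producing a witness rule list in $\StartsWith(\Prefix)$ whose objective is no larger. Concretely, let ${\RLB^{*} = (\PrefixB^{*}, \LabelsB^{*}, \DefaultB^{*}, \kappa')}$ attain ${\min_{\RLB' \in \StartsWith(\PrefixB)} \Obj(\RLB', \x, \y)}$, where ${\PrefixB^{*} = (P_1, \dots, P_{\kappa}, P_{\kappa+1}, \dots, P_{\kappa'})}$ begins with $\PrefixB$. I would construct the candidate ${\RL^{\dagger} = (\Prefix^{\dagger}, \Labels^{\dagger}, \Default^{\dagger}, K + \kappa' - \kappa)}$ with ${\Prefix^{\dagger} = (p_1, \dots, p_K, P_{\kappa+1}, \dots, P_{\kappa'})}$, reusing the labels of $P_{\kappa+1}, \dots, P_{\kappa'}$ from $\LabelsB^{*}$ and reusing the default $\DefaultB^{*}$. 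By construction, ${\RL^{\dagger} \in \StartsWith(\Prefix)}$.

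The core step is the observation that the tail antecedents capture exactly the same data in both contexts. Because ${\{x_n : \Cap(x_n, \Prefix)\} = \{x_n : \Cap(x_n, \PrefixB)\}}$, the set of data left over after the two initial prefixes is identical, and an easy induction on ${j = 1, \dots, \kappa' - \kappa}$ using the definition of $\Cap(\cdot \given \cdot)$ shows
\begin{align*}
\Cap(x_n, P_{\kappa + j} \given \Prefix^{\dagger}) = \Cap(x_n, P_{\kappa + j} \given \PrefixB^{*})
\end{align*}
for every $n$. Consequently the mistakes contributed by the tail rules agree term by term, and the set of data falling through to the default rule is identical in $\RL^{\dagger}$ and $\RLB^{*}$, so reusing $\DefaultB^{*}$ yields the same default error.

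With that equality in hand, the loss decomposes as
\begin{align*}
\Loss(\RL^{\dagger}, \x, \y) - \Loss(\RLB^{*}, \x, \y)
= \Loss_p(\Prefix, \Labels, \x, \y) - \Loss_p(\PrefixB, \LabelsB, \x, \y),
\end{align*}
and the length difference is exactly $K - \kappa$. Adding the regularization contribution gives
\begin{align*}
\Obj(\RL^{\dagger}, \x, \y) - \Obj(\RLB^{*}, \x, \y)
= b(\Prefix, \x, \y) - b(\PrefixB, \x, \y) \le 0
\end{align*}
by the hypothesis. Since $\RL^{\dagger} \in \StartsWith(\Prefix)$, this yields $\min_{\RL' \in \StartsWith(\Prefix)} \Obj(\RL', \x, \y) \le \Obj(\RL^{\dagger}, \x, \y) \le \Obj(\RLB^{*}, \x, \y)$, which is \eqref{eq:equivalent}.

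The main obstacle I anticipate is being careful about labels: the paper's convention sets $\Labels$ empirically to minimize training error, so the labels I inherited from $\LabelsB^{*}$ for the tail rules of $\RL^{\dagger}$ need not be the empirically optimal choices for $\Prefix^{\dagger}$. This only helps, however, since replacing them with the optimal labels can only decrease $\Loss(\RL^{\dagger}, \x, \y)$ and hence $\Obj(\RL^{\dagger}, \x, \y)$; and similarly for the initial block, the labels $\Labels$ implicit in $b(\Prefix, \x, \y)$ are already the loss-minimizing ones for $\Prefix$. A symmetric subtlety applies to the default. These observations should be stated explicitly so that the decomposition of losses above is used as an inequality rather than an equality where needed, but the chain of bounds still closes.
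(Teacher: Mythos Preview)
Your approach is the same prefix-swapping argument the paper uses in Appendix~B: replace the initial block, observe that the tail antecedents capture identical data in the two contexts because the uncaptured sets after $\Prefix$ and $\PrefixB$ coincide, and compare objectives. Your direction---starting from an optimizer $\RLB^{*}\in\StartsWith(\PrefixB)$ and exhibiting a witness in $\StartsWith(\Prefix)$---is in fact more streamlined than the paper's, which goes the other way and must then argue that the constructed analogue is itself an optimizer in $\StartsWith(\PrefixB)$. Your worry about labels is also unnecessary: since each tail antecedent captures exactly the same data in both contexts, the empirically optimal label for it is the same in $\RL^{\dagger}$ as in $\RLB^{*}$, so the decomposition really is an equality.

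There is, however, one gap you should patch. Rule lists are required to have \emph{distinct} antecedents, so ``$\RL^{\dagger}\in\StartsWith(\Prefix)$ by construction'' needs justification: a tail antecedent $P_{\kappa+j}$ of $\RLB^{*}$ could coincide with some $p_i$ that lies in $\Prefix$ but not in $\PrefixB$, in which case $\Prefix^{\dagger}$ repeats an antecedent and is not a valid prefix. The paper confronts exactly this issue. The fix is short: since $p_i\in\Prefix$, every datum with $\Cap(x_n,p_i)=1$ is captured by $\Prefix$, hence by $\PrefixB$, so such a $P_{\kappa+j}$ would have support zero in the context of $\PrefixB^{*}$. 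Deleting it from $\RLB^{*}$ leaves the misclassification error unchanged and weakly decreases the regularization term, so we may choose the optimizer $\RLB^{*}$ so that its tail contains no antecedent of $\Prefix$. With that choice $\Prefix^{\dagger}$ is a valid prefix and your chain of inequalities closes.
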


\begin{arxiv}
\begin{proof}
See Appendix~\ref{appendix:equiv-supp} for the proof of Theorem~\ref{thm:equivalent}.
\end{proof}
\end{arxiv}

Thus, if prefixes~$\Prefix$ and~$\PrefixB$ capture the same data,
and their objective lower bounds obey
${b(\Prefix, \x, \y) \le b(\PrefixB, \x, \y)}$,
Theorem~\ref{thm:equivalent} implies that we can prune~$\PrefixB$.
%
%
Next, in Sections~\ref{sec:permutation} and~\ref{sec:permutation-counting},
we highlight and analyze the special case of prefixes that capture
the same data because they contain the same antecedents.

\subsection{Permutation Bound}
\label{sec:permutation}

If two prefixes are composed of the same antecedents,
\ie they contain the same antecedents up to a permutation,
then they capture the same data, and thus Theorem~\ref{thm:equivalent} applies.
Therefore, if one is more accurate than the other, then there is no benefit to
considering prefixes that start with the less accurate one.
Let~$\Prefix$ be a prefix,
and consider the best possible rule list whose prefix starts with~$\Prefix$.
If we permute its antecedents in~$\Prefix$,
then its objective can only become worse or remain the same.

Formally, let~${P = \{p_k\}_{k=1}^K}$ be a set of~$K$ antecedents,
and let~$\Pi$ be the set of all $K$-prefixes corresponding to
permutations of antecedents in~$P$.
Let prefix~$\Prefix$ in~$\Pi$ have the minimum prefix misclassification
error over all prefixes in~$\Pi$.
Also, let~$\RL'$ be a rule list whose prefix~$\Prefix'$
starts with~$\Prefix$, such that~$\RL'$ has the minimum objective
over all rule lists whose prefixes start with~$\Prefix$.
Corollary~\ref{thm:permutation} below,
which can be viewed as special case of Theorem~\ref{thm:equivalent},
implies that~$\RL'$ also has the minimum objective over all
rule lists whose prefixes start with \emph{any} prefix in~$\Pi$.

\begin{corollary}[Permutation bound]
\label{thm:permutation}
Let~$\pi$ be any permutation of ${\{1, \dots, K\}}$,
\begin{arxiv}
and define ${\StartsWith(\Prefix) = }$
${\{(\Prefix', \Labels', \Default', K') : \Prefix' \textnormal{ starts with } \Prefix \}}$
to be the set of all rule lists whose prefixes start with~$\Prefix$.
\end{arxiv}
\begin{kdd}
and define $\StartsWith(\Prefix)$
to be the set of all rule lists whose prefix starts with~$\Prefix$,
as in~\eqref{eq:starts-with}.
\end{kdd}
Let ${\RL = (\Prefix, \Labels, \Default, K)}$
and ${\RLB = (\PrefixB, \LabelsB, \DefaultB, K)}$
denote rule lists with prefixes ${\Prefix = (p_1, \dots, p_K)}$
and ${\PrefixB = (p_{\pi(1)}, \dots, p_{\pi(K)})}$,
respectively, \ie the antecedents in~$\PrefixB$
correspond to a permutation of the antecedents in~$\Prefix$.
If the objective lower bounds of~$\RL$ and~$\RLB$
obey ${b(\Prefix, \x, \y) \le b(\PrefixB, \x, \y)}$,
then the objective of the optimal rule list in~$\StartsWith(\Prefix)$ gives a
lower bound on the objective of the optimal rule list in~$\StartsWith(\PrefixB)$:
\begin{align}
\min_{\RL' \in \StartsWith(\Prefix)} \Obj(\RL', \x, \y)
\le \min_{\RLB' \in \StartsWith(\PrefixB)} \Obj(\RLB', \x, \y). \nn
\end{align}
\end{corollary}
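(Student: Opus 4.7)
The plan is to derive Corollary~\ref{thm:permutation} as an immediate specialization of Theorem~\ref{thm:equivalent}. The only substantive work is to verify the hypothesis of Theorem~\ref{thm:equivalent}, namely that $\Prefix$ and $\PrefixB$ capture exactly the same data; the conclusion on optimal objectives then follows for free, since the bound on lower bounds $b(\Prefix, \x, \y) \le b(\PrefixB, \x, \y)$ is already assumed.

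First, I would observe that whether or not a datum $x_n$ is captured by a prefix depends only on the \emph{set} of antecedents appearing in that prefix, not on their order. Concretely, for any prefix $\Prefix = (p_1, \dots, p_K)$, the definition gives $\Cap(x_n, \Prefix) = \bigvee_{k=1}^{K} \Cap(x_n, p_k)$, and disjunction is commutative. Therefore, if $\PrefixB = (p_{\pi(1)}, \dots, p_{\pi(K)})$ is a permutation of $\Prefix$, the two expressions yield identical truth values on every $x_n$, so $\{x_n : \Cap(x_n, \Prefix)\} = \{x_n : \Cap(x_n, \PrefixB)\}$.

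Second, with this equality of captured data established, the hypotheses of Theorem~\ref{thm:equivalent} are satisfied: $\Prefix$ and $\PrefixB$ capture the same data, and by assumption $b(\Prefix, \x, \y) \le b(\PrefixB, \x, \y)$. Applying the theorem yields
\begin{align*}
\min_{\RL' \in \StartsWith(\Prefix)} \Obj(\RL', \x, \y)
\le \min_{\RLB' \in \StartsWith(\PrefixB)} \Obj(\RLB', \x, \y),
\end{align*}
which is precisely the claim.

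There is no real obstacle here; the only thing one has to be careful about is not to confuse $\Cap(x_n, \Prefix)$ (which is order-independent) with $\Cap(x_n, p_k \given \Prefix)$ (which \emph{does} depend on order, since it asks whether $p_k$ is the \emph{first} antecedent to fire). The permutation bound is weaker than Theorem~\ref{thm:equivalent} precisely because it uses only symmetry of set-capture, while the more general equivalent-support bound allows $\Prefix$ and $\PrefixB$ to contain entirely different antecedents so long as their overall capture sets coincide.
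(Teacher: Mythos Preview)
Your proposal is correct and takes essentially the same approach as the paper: observe that permuted prefixes capture the same data (since capture depends only on the set of antecedents), then invoke Theorem~\ref{thm:equivalent}. Your version is slightly more explicit than the paper's two-line proof, and your remark distinguishing $\Cap(x_n, \Prefix)$ from $\Cap(x_n, p_k \given \Prefix)$ is a nice clarifying touch.
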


\begin{arxiv}
\begin{proof}
Since prefixes~$\Prefix$ and~$\PrefixB$ contain
the same antecedents, they both capture the same data.
Thus, we can apply Theorem~\ref{thm:equivalent}.
\end{proof}
\end{arxiv}

Thus if prefixes~$\Prefix$ and~$\PrefixB$ have the same antecedents,
up to a permutation, and their objective lower bounds
obey~${b(\Prefix, \x, \y) \le}$ ${b(\PrefixB, \x, \y)}$,
Corollary~\ref{thm:permutation} implies that we can prune~$\PrefixB$.
We call this 
symmetry-aware pruning,
and we illustrate the subsequent
computational savings next in~\S\ref{sec:permutation-counting}.

\begin{arxiv}
\subsection{Upper Bound on Prefix Evaluations with Symmetry-aware Pruning}
\end{arxiv}
\begin{kdd}
\subsubsection{Upper bound on prefix evaluations with symmetry-aware pruning}
\end{kdd}
\label{sec:permutation-counting}

Here, we present an upper bound on the total number of prefix
evaluations that accounts for the effect of symmetry-aware
pruning~(\S\ref{sec:permutation}).
Since every subset of~$K$ antecedents generates an equivalence
class of~$K!$ prefixes equivalent up to permutation, symmetry-aware
pruning dramatically reduces the search space.

\begin{arxiv}
First, notice that
\end{arxiv}
Algorithm~\ref{alg:branch-and-bound} describes a
breadth-first exploration of the state space of rule lists.
Now suppose we integrate symmetry-aware pruning into
our execution of branch-and-bound, so that after evaluating
prefixes of length~$K$, we only keep a single best prefix
from each set of prefixes equivalent up to a permutation.

\begin{theorem}[\fontdimen2\font=0.6ex Upper bound on prefix evaluations with symmetry-aware pruning]
Consider a state space of all rule lists formed from a set~$\RuleSet$
of~$M$ antecedents, and consider the branch-and-bound algorithm with
symmetry-aware pruning.
Define $\TotalRemaining(\RuleSet)$ to be the total number of prefixes evaluated.
For any set~$\RuleSet$ of $M$ rules,
\begin{align}
\TotalRemaining(\RuleSet)
\le  1 + \sum_{k=1}^K \frac{1}{(k - 1)!} \cdot \frac{M!}{(M - k)!}, \nn
\end{align}
where ${K = \min(\lfloor 1 / 2 \Reg \rfloor, M)}$.
\end{theorem}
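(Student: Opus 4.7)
The plan is to track, level-by-level, how many prefixes of each length Algorithm~\ref{alg:branch-and-bound} evaluates once symmetry-aware pruning is applied. Let $A_k$ denote the number of length-$k$ prefixes evaluated, and let $B_k$ denote the number retained after pruning equivalence classes under permutation. By Corollary~\ref{cor:ub-prefix-length}, I only need to sum $A_k$ for $k = 0, 1, \dots, K$ with $K = \min(\lfloor 1/(2\Reg)\rfloor, M)$, because no optimal prefix is longer than $K$, so the algorithm never needs to enqueue anything beyond that length.

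The crux is the recursion between $A_k$ and $B_k$. Trivially $A_0 = B_0 = 1$. At each level, branching takes every retained length-$(k-1)$ prefix and appends each of the $M-(k-1)$ antecedents not already used, giving $A_k = B_{k-1}\,(M - k + 1)$. I would then prove by induction that $B_k = \binom{M}{k}$: after pruning, each $k$-subset of antecedents is represented by exactly one prefix, because symmetry-aware pruning collapses each permutation class (of size $k!$) to a single survivor. The inductive step is essentially combinatorial double-counting: the $A_k = \binom{M}{k-1}(M-k+1)$ evaluated prefixes cover each $k$-subset exactly $k$ times (once for each choice of which element was added last), so after pruning $B_k = A_k / k = \binom{M}{k}$.

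Once the induction is established, the computation is direct:
\begin{align*}
A_k = \binom{M}{k-1}(M-k+1) = \frac{M!}{(k-1)!\,(M-k)!} = \frac{1}{(k-1)!}\cdot\frac{M!}{(M-k)!}
\end{align*}
for $k \ge 1$, and summing $A_0 + \sum_{k=1}^K A_k$ yields the stated bound.

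The main obstacle is justifying the clean equality $B_k = \binom{M}{k}$ rigorously: one must argue that the branching step from $B_{k-1}$ surviving prefixes of each $(k-1)$-subset produces, for each target $k$-subset $S$, exactly $|S|=k$ candidate prefixes (one per removable last element), and that symmetry-aware pruning keeps exactly one. The bound then follows immediately; any subtleties (e.g., whether some prefixes are pruned earlier for other reasons) only make the count smaller, preserving the inequality.
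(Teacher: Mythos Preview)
Your proposal is correct and follows essentially the same approach as the paper: both arguments track, level by level, the number of prefixes evaluated and the number retained after symmetry-aware pruning, arriving at the same count $C(M,k-1)(M-k+1)$ at level~$k$. Your treatment is slightly more explicit in justifying $B_k = \binom{M}{k}$ via the double-counting argument and in noting that any additional pruning only strengthens the inequality, but the core argument is identical to the paper's.
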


\begin{proof}
By Corollary~\ref{cor:ub-prefix-length},
${K \equiv \min(\lfloor 1 / 2 \Reg \rfloor, M)}$
gives an upper bound on the length of any optimal rule list.
The algorithm begins by evaluating the empty prefix,
followed by~$M$ prefixes of length~${k=1}$,
then~${P(M, 2)}$ prefixes of length~${k=2}$,
where~${P(M, 2)}$ is the number of size-2 subsets of~$\{1, \dots, M \}$.
Before proceeding to length~${k=3}$, we keep only~${C(M, 2)}$
prefixes of length~${k=2}$, where~${C(M, k)}$ denotes the
number of $k$-combinations of~$M$.
Now, the number of length~${k=3}$ prefixes we evaluate is~${C(M, 2) (M - 2)}$.
Propagating this forward~gives
\begin{arxiv}
\begin{align}
\TotalRemaining(\RuleSet) \le 1 + \sum_{k=1}^K C(M, k-1) (M - k + 1)
= 1 + \sum_{k=1}^K \frac{1}{(k - 1)!} \cdot \frac{M!}{(M - k)!}. \nn
\end{align}
\end{arxiv}
\begin{kdd}
${\TotalRemaining(\RuleSet) \le 1 + \sum_{k=1}^K C(M, k-1) (M - k + 1)}$.
\end{kdd}
\end{proof}

Pruning based on permutation symmetries thus yields significant
computational savings.
Let us compare, for example, to the na\"ive number of prefix evaluations
given by the upper bound in Proposition~\ref{thm:ub-total-eval}.
If~${M = 100}$ and~${K = 5}$, then the na\"ive number is about
${9.1 \times 10^9}$, while the reduced number due to symmetry-aware
pruning is about ${3.9 \times 10^8}$,
which is smaller by a factor of about~23.
If~${M=1000}$ and~${K = 10}$, the number of evaluations falls from
about~${9.6 \times 10^{29}}$ to about~${2.7 \times 10^{24}}$,
which is smaller by a factor of about~360,000.
\begin{arxiv}

\end{arxiv}
\begin{kdd}
\end{kdd}
While~$10^{24}$ seems infeasibly enormous,
it does not represent the number of rule lists we evaluate.
\begin{kdd}
As we show in~\S\ref{sec:experiments},
\end{kdd}
\begin{arxiv}
As we show in our experiments~(\S\ref{sec:experiments}),
\end{arxiv}
our permutation bound in Corollary~\ref{thm:permutation}
and our other bounds together conspire to reduce the search space
to a size manageable on a single computer.
The choice of ${M=1000}$ and ${K=10}$ in our example above
corresponds to the state space size our efforts target.
${K=10}$ rules represents a (heuristic) upper limit on
the size of an interpretable rule list,
and ${M=1000}$ represents the approximate number of rules
with sufficiently high support (Theorem~\ref{thm:min-capture})
we expect to obtain via rule mining~(\S\ref{sec:setup}).

\begin{arxiv}
\subsection{Similar Support Bound}
\label{sec:similar}

We now present a relaxation of Theorem~\ref{thm:equivalent},
our equivalent support bound.
Theorem~\ref{thm:similar} implies that if we know that no extensions of
a prefix~$\Prefix$ are better than the current best objective,
then we can prune all prefixes with support similar to~$\Prefix$'s support.
Understanding how to exploit this result in practice
represents an exciting direction for future work;
our implementation~(\S\ref{sec:implementation}) does not
currently leverage the bound in Theorem~\ref{thm:similar}.

\begin{theorem}[Similar support bound]
\label{thm:similar}
Define $\StartsWith(\Prefix)$ to be the set of all rule lists
whose prefixes start with~$\Prefix$, as in~\eqref{eq:starts-with}.
Let ${\Prefix = (p_1, \dots, p_K)}$ and
${\PrefixB = (P_1, \dots, P_{\kappa})}$ be prefixes
that capture nearly the same data.
Specifically, define~$\omega$ to be the normalized support
of data captured by~$\Prefix$ and not captured by~$\PrefixB$, \ie
\begin{align}
\omega \equiv \frac{1}{N} \sum_{n=1}^N
  \neg\, \Cap(x_n, \PrefixB)
  \wedge \Cap(x_n, \Prefix). 
\label{eq:omega}
\end{align}
%
Similarly, define~$\Omega$ to be the normalized support
of data captured by~$\PrefixB$ and not captured by~$\Prefix$, \ie
\begin{align}
\Omega \equiv \frac{1}{N} \sum_{n=1}^N
  \neg\, \Cap(x_n, \Prefix)
  \wedge \Cap(x_n, \PrefixB). 
\label{eq:big-omega}
\end{align}
We can bound the difference between the objectives of the
optimal rule lists in~$\StartsWith(\Prefix)$
and $\StartsWith(\PrefixB)$ as follows:
\begin{align}
\min_{\RLB^\dagger \in \StartsWith(\PrefixB)} \Obj(\RLB^\dagger, \x, \y)
- \min_{\RL^\dagger \in \StartsWith(\Prefix)} \Obj(\RL^\dagger, \x, \y)
&\ge b(\PrefixB, \x, \y) - b(\Prefix, \x, \y) - \omega - \Omega,
\label{eq:similar}
\end{align}
where~$b(\Prefix, \x, \y)$ and~$b(\PrefixB, \x, \y)$ are the
objective lower bounds of~$\RL$ and~$\RLB$, respectively.
\end{theorem}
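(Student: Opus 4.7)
The plan is to generalize the argument behind Theorem~\ref{thm:equivalent}, tracking the extra error that the small symmetric difference in captured data can introduce. Set $A = \{x_n : \Cap(x_n, \Prefix)\}$ and $B = \{x_n : \Cap(x_n, \PrefixB)\}$, so that $|A \setminus B|/N = \omega$, $|B \setminus A|/N = \Omega$, and $A^c \triangle B^c = A \triangle B$ has normalized measure $\omega + \Omega$. First I would pick an optimal $\RLB^* \in \StartsWith(\PrefixB)$, write it as $\PrefixB$ followed by a tail of antecedents $T = (t_1, \dots, t_L)$ with empirically optimal labels, and construct a competitor $\RL' \in \StartsWith(\Prefix)$ by appending the same tail $T$ to $\Prefix$ (dropping any $t_j$ that already appears in $\Prefix$) and relabeling by majority vote in the new context. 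Because $\RL'$ lies in $\StartsWith(\Prefix)$, its objective upper-bounds the second minimum in~\eqref{eq:similar}, so it suffices to prove $\Obj(\RL', \x, \y) \le \Obj(\RLB^*, \x, \y) - b(\PrefixB, \x, \y) + b(\Prefix, \x, \y) + \omega + \Omega$.

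Next I would decompose both objectives into a prefix-loss contribution, a tail-plus-default loss contribution, and a regularization term. Since $\RL'$ and $\RLB^*$ share the same tail length, the prefix-loss and regularization contributions recombine exactly into $b(\PrefixB, \x, \y) - b(\Prefix, \x, \y)$, reducing the proof to bounding the difference between two tail-plus-default losses: the error of $T$ applied to $A^c$ in $\RL'$ (with labels optimal for $A^c$) versus the error of $T$ applied to $B^c$ in $\RLB^*$ (with its already-chosen labels).

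The technical heart of the argument is this tail-loss comparison. Because $\RL'$'s tail labels are majority-optimal on $A^c$, replacing them with $\RLB^*$'s labels can only increase $\RL'$'s tail loss, so it is enough to bound the discrepancy with $\RLB^*$'s labels fixed in both contexts. On the common part $A^c \cap B^c$, each $t_j$ captures exactly the same points in both contexts, and with fixed labels the per-antecedent errors cancel exactly. The residual contributions come from $B \setminus A \subseteq A^c$ (seen by $\RL'$'s tail but not $\RLB^*$'s) and from $A \setminus B \subseteq B^c$ (seen by $\RLB^*$'s tail but not $\RL'$'s); each is bounded trivially by its normalized measure, producing an overall tail-loss discrepancy of at most $\omega + \Omega$. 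Combining this with the prefix-loss and regularization calculation gives the desired bound on $\Obj(\RL')$ and, after rearrangement, exactly~\eqref{eq:similar}.

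I expect the main obstacle to be the bookkeeping needed to justify the per-antecedent cancellation on $A^c \cap B^c$. Each $t_j$ only captures points not caught by $t_1, \dots, t_{j-1}$, and this ``context'' depends recursively on the preceding tail's behavior, which in turn depends on whether the tail is sitting on top of $\Prefix$ or $\PrefixB$. A clean way to handle this is to process the tail antecedents one at a time while maintaining the invariant that the set of still-unclassified points in $\RL'$ differs from that in $\RLB^*$ only within $A \triangle B$; per-antecedent error discrepancies are then controlled directly by the normalized measure of this symmetric difference. A minor secondary issue, already handled by the ``drop duplicates'' step in the construction, is that some $t_j$ may already appear in $\Prefix$; such antecedents can simply be dropped, which only shortens $\RL'$ and preserves the bound.
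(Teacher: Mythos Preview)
Your approach is essentially the paper's: start from the optimal $\RLB^* \in \StartsWith(\PrefixB)$, build the analogous competitor in $\StartsWith(\Prefix)$ by keeping the tail, and control the tail-plus-default loss discrepancy by $\omega+\Omega$ via the symmetric difference $A\triangle B$; the paper states the key inequality~\eqref{eq:similar-analogous} in one sentence, whereas you spell out the decomposition. Your flagged ``main obstacle'' is milder than you fear: for any $x_n \in A^c \cap B^c$, the tail assigns it to the \emph{first} $t_j$ that evaluates true on $x_n$, and this index is determined pointwise by $x_n$ alone (independent of what the tail does to other points or of whether it sits on $\Prefix$ or $\PrefixB$), so the per-antecedent cancellation on $A^c\cap B^c$ is immediate and no recursive invariant is needed.
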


\begin{proof}
See Appendix~\ref{appendix:similar-supp} for the proof of Theorem~\ref{thm:similar}.
\end{proof}

Theorem~\ref{thm:similar} implies that if prefixes~$\Prefix$
and~$\PrefixB$ are similar, and we know the optimal objective
of rule lists starting with~$\Prefix$, then
\begin{align}
\min_{\RLB' \in \StartsWith(\PrefixB)} \Obj(\RLB', \x, \y)
&\ge \min_{\RL' \in \StartsWith(\Prefix)} \Obj(\RL', \x, \y)
+ b(\PrefixB, \x, \y) - b(\Prefix, \x, \y) - \chi \nn \\
&\ge \CurrentObj + b(\PrefixB, \x, \y) - b(\Prefix, \x, \y) - \chi, \nn
\end{align}
where~$\CurrentObj$ is the current best objective,
and~$\chi$ is the normalized support of the set of data captured
either exclusively by~$\Prefix$ or exclusively by~$\PrefixB$.
It follows that
\begin{align}
\min_{\RLB' \in \StartsWith(\PrefixB)} \Obj(\RLB', \x, \y)
\ge \CurrentObj + b(\PrefixB, \x, \y) - b(\Prefix, \x, \y) - \chi \ge \CurrentObj \nn
\end{align}
if ${b(\PrefixB, \x, \y) - b(\Prefix, \x, \y) \ge \chi}$.
To conclude, we summarize this result and combine it with
our notion of lookahead from Lemma~\ref{lemma:lookahead}.
During branch-and-bound execution, if we demonstrate that
${\min_{\RL' \in \StartsWith(\Prefix)} \Obj(\RL', \x, \y) \ge \CurrentObj}$,
then we can prune all prefixes that start with any
prefix~$\PrefixB'$ in the following set:
\begin{align}
\left\{ \PrefixB' : b(\PrefixB', \x, \y) + \Reg - b(\Prefix, \x, \y) \ge
\frac{1}{N} \sum_{n=1}^N \Cap(x_n, \Prefix) \oplus \Cap(x_n, \PrefixB') \right\}, \nn
\end{align}
where the symbol~$\oplus$ denotes the logical operation, exclusive or (XOR).

\end{arxiv}

\subsection{Equivalent Points Bound}
\label{sec:identical}

The bounds in this section quantify the following:
If multiple observations that are not captured by a prefix~$\Prefix$
have identical features and opposite labels, then no rule list that
starts with~$\Prefix$ can correctly classify all these observations.
For each set of such observations, the number of mistakes is at least
the number of observations with the minority label within the set.

Consider a data set~${\{(x_n, y_n)\}_{n=1}^N}$ and also a set of antecedents
${\{s_m\}_{m=1}^M}$.
Define distinct observations to be equivalent if they are captured by
exactly the same antecedents, \ie ${x_i \neq x_j}$ are equivalent~if
\begin{align}
\frac{1}{M} \sum_{m=1}^M \one [ \Cap(x_i, s_m) = \Cap(x_j, s_m) ] = 1. \nn
\end{align}
Notice that we can partition a data set into sets of equivalent points;
let~${\{e_u\}_{u=1}^U}$ enumerate these sets.
Let~$e_u$ be the equivalent points set that contains observation~$x_i$.
Now define~$\theta(e_u)$ to be the normalized support of the minority
class label with respect to set~$e_u$, \eg let
\begin{arxiv}
\begin{align}
{e_u = \{x_n : \forall m \in [M],\, \one [ \Cap(x_n, s_m) = \Cap(x_i, s_m) ] \}}, \nn
\end{align}
\end{arxiv}
\begin{kdd}
${e_u = \{x_n : \forall m \in [M],\, \one [ \Cap(x_n, s_m) = \Cap(x_i, s_m) ] \}}$,
\end{kdd}
and let~$q_u$ be the minority class label among points in~$e_u$, then
\begin{align}
\theta(e_u) = \frac{1}{N} \sum_{n=1}^N \one [ x_n \in e_u ]\, \one [ y_n = q_u ].
\label{eq:theta}
\end{align}

The existence of equivalent points sets with non-singleton support
yields a tighter objective lower bound that we can combine with our other bounds;
as our experiments demonstrate~(\S\ref{sec:experiments}),
the practical consequences can be dramatic.
First, for intuition, we present a general bound in
Proposition~\ref{prop:identical}; next, we explicitly integrate
this bound into our framework in Theorem~\ref{thm:identical}.

\begin{proposition}[General equivalent points bound]
\label{prop:identical}
Let ${\RL = (\Prefix, \Labels, \Default, K)}$ be a rule list, then
\begin{arxiv}
\begin{align}
\Obj(\RL, \x, \y) \ge \sum_{u=1}^U \theta(e_u) + \Reg K. \nn
\end{align}
\end{arxiv}
\begin{kdd}
${\Obj(\RL, \x, \y) \ge \sum_{u=1}^U \theta(e_u) + \Reg K}$.
\end{kdd}
\end{proposition}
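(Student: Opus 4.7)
The plan is to decompose the objective as $\Obj(\RL, \x, \y) = \Loss(\RL, \x, \y) + \Reg K$ and then establish the misclassification lower bound $\Loss(\RL, \x, \y) \ge \sum_{u=1}^U \theta(e_u)$. Adding $\Reg K$ to both sides then yields the claim.

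The central observation is that any two observations in the same equivalent points set $e_u$ agree on $\Cap(\cdot, s_m)$ for every $m$, and in particular on $\Cap(\cdot, p_k)$ for every antecedent $p_k$ in the prefix $\Prefix$, since each $p_k$ is drawn from $\{s_m\}_{m=1}^M$. First I would argue that this forces every observation in $e_u$ to be captured by the same rule of $\RL$: the index of the first antecedent in $\Prefix$ that captures a point, or the fact that no antecedent in $\Prefix$ captures it, is determined entirely by the common pattern $(\Cap(x, p_1), \dots, \Cap(x, p_K))$. Consequently $\RL$ assigns all of $e_u$ a single common label $\hat{q}_u \in \{0, 1\}$, either from whichever $q_k$ corresponds to that shared first-captured antecedent, or else from $\Default$.

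With that in hand, the number of mistakes $\RL$ makes inside $e_u$ equals either the count of $y_n = 1$ points or the count of $y_n = 0$ points in $e_u$, depending on $\hat{q}_u$; in both cases it is at least the minority-class count in $e_u$, which equals $N \cdot \theta(e_u)$ by the definition in~\eqref{eq:theta}. Since the equivalent points sets $\{e_u\}_{u=1}^U$ partition the training data, summing these per-set lower bounds yields $N \cdot \Loss(\RL, \x, \y) \ge \sum_{u=1}^U N \cdot \theta(e_u)$, hence $\Loss(\RL, \x, \y) \ge \sum_{u=1}^U \theta(e_u)$ as required.

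The main obstacle is the first step: cleanly arguing that equivalent points must receive identical predictions under \emph{any} rule list constructed from the pool $\{s_m\}_{m=1}^M$. This hinges on the fact that equivalence is defined with respect to the full available antecedent set, not merely the raw features $x_n$; if the antecedent pool were later enriched, two points presently in the same $e_u$ could be separated by a new antecedent and the bound would weaken. Once this subtlety is handled, the remainder is a standard ``binary-label prediction cannot beat the minority vote'' counting argument combined with summation over the partition $\{e_u\}_{u=1}^U$.
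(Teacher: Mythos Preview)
Your proposal is correct and follows essentially the same argument as the paper's proof: both observe that all points in an equivalent points set~$e_u$ receive a common predicted label under~$\RL$, so the per-set mistakes are at least the minority count~$N\theta(e_u)$, and then sum over the partition. The paper carries this out via explicit indicator-function algebra (rewriting $\Loss(\RL,\x,\y)$ as a double sum over~$u$ and~$n$, replacing mistake indicators $\one[q_k\neq y_n]$ by minority-label indicators $\one[y_n=q_u]$, and collapsing the capture/not-captured split), whereas you phrase the same step verbally, but the content is the same.
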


\begin{arxiv}
\begin{proof}
Recall that the objective is ${\Obj(\RL, \x, \y) = \Loss(\RL, \x, \y) + \Reg K}$,
where the misclassification error~${\Loss(\RL, \x, \y)}$ is given by
\begin{align}
\Loss(\RL, \x, \y) &= \Loss_0(\Prefix, \Default, \x, \y) + \Loss_p(\Prefix, \Labels, \x, \y) \nn \\
&= \frac{1}{N} \sum_{n=1}^N \left( \neg\, \Cap(x_n, \Prefix) \wedge \one[q_0 \neq y_n]
   + \sum_{k=1}^K \Cap(x_n, p_k \given \Prefix) \wedge \one [q_k \neq y_n] \right). \nn
\end{align}
Any particular rule list uses a specific rule, and therefore a single class label,
to classify all points within a set of equivalent points.
Thus, for a set of equivalent points~$u$, the rule list~$\RL$ correctly classifies either
points that have the majority class label, or points that have the minority class label.
It follows that~$\RL$ misclassifies a number of points in~$u$ at least as great as
the number of points with the minority class label.
To translate this into a lower bound on~$\Loss(\RL, \x, \y)$,
we first sum over all sets of equivalent points, and then for each such set,
count differences between class labels and the minority class label of the set,
instead of counting mistakes:
\begin{align}
&\Loss(\RL, \x, \y) \nn \\
&= \frac{1}{N} \sum_{u=1}^U \sum_{n=1}^N \left( \neg\, \Cap(x_n, \Prefix) \wedge \one[q_0 \neq y_n]
   + \sum_{k=1}^K \Cap(x_n, p_k \given \Prefix) \wedge \one [q_k \neq y_n] \right)
   \one [x_n \in e_u]  \nn \\
&\ge \frac{1}{N} \sum_{u=1}^U \sum_{n=1}^N \left( \neg\, \Cap(x_n, \Prefix) \wedge \one[y_n = q_u]
   + \sum_{k=1}^K \Cap(x_n, p_k \given \Prefix) \wedge \one [y_n = q_u] \right)
   \one [x_n \in e_u].
\label{eq:lb-equiv-pts}
\end{align}
Next, we factor out the indicator for equivalent point set membership,
which yields a term that sums to one, because every datum is either captured or
not captured by prefix~$\Prefix$.
\begin{align}
\Loss(\RL, \x, \y) &= \frac{1}{N} \sum_{u=1}^U \sum_{n=1}^N \left( \neg\, \Cap(x_n, \Prefix)
   + \sum_{k=1}^K \Cap(x_n, p_k \given \Prefix) \right)
   \wedge \one [x_n \in e_u]\, \one[y_n = q_u] \nn \\
&= \frac{1}{N} \sum_{u=1}^U \sum_{n=1}^N \left( \neg\, \Cap(x_n, \Prefix)
   + \Cap(x_n, \Prefix) \right)
   \wedge \one [x_n \in e_u]\, \one[y_n = q_u] \nn \\
&= \frac{1}{N} \sum_{u=1}^U \sum_{n=1}^N \one [ x_n \in e_u ]\, \one [ y_n = q_u ]
= \sum_{u=1}^U \theta(e_u), \nn
\end{align}
where the final equality applies the definition of~$\theta(e_u)$ in~\eqref{eq:theta}.
Therefore, ${\Obj(\RL, \x, \y) =}$ ${\Loss(\RL, \x, \y) + \Reg K}$ ${\ge \sum_{u=1}^U \theta(e_u) + \Reg K}$.
\end{proof}
\end{arxiv}

Now, recall that to obtain our lower bound~${b(\Prefix, \x, \y)}$
in~\eqref{eq:lower-bound}, we simply deleted the
default rule misclassification error~$\Loss_0(\Prefix, \Default, \x, \y)$
from the objective~${\Obj(\RL, \x, \y)}$.
Theorem~\ref{thm:identical} obtains a tighter objective lower bound
via a tighter lower bound on the default rule misclassification error,
${0 \le b_0(\Prefix, \x, \y) \le}$ $\Loss_0(\Prefix, \Default, \x, \y)$.

\begin{theorem}[Equivalent points bound]
\label{thm:identical}
Let~$\RL$ be a rule list with prefix~$\Prefix$
and lower bound ${b(\Prefix, \x, \y)}$,
then for any rule list~${\RL' \in \StartsWith(\RL)}$
whose prefix~$\Prefix'$ starts with~$\Prefix$,
\begin{arxiv}
\begin{align}
\Obj(\RL', \x, \y) \ge b(\Prefix, \x, \y) + b_0(\Prefix, \x, \y),
\label{eq:identical}
\end{align}
where
\end{arxiv}
\begin{kdd}
\begin{align}
\Obj(\RL', \x, \y) \ge b(\Prefix, \x, \y) + b_0(\Prefix, \x, \y), \quad \text{where}
\end{align}
\end{kdd}
\begin{arxiv}
\begin{align}
b_0(\Prefix, \x, \y) = \frac{1}{N} \sum_{u=1}^U \sum_{n=1}^N
    \neg\, \Cap(x_n, \Prefix) \wedge \one [ x_n \in e_u ]\, \one [ y_n = q_u ].
\label{eq:lb-b0}
\end{align}
\end{arxiv}
\begin{kdd}
\begin{align}
b_0(\Prefix, \x, \y) = \frac{1}{N} \sum_{u=1}^U \sum_{n=1}^N
    \neg\, \Cap(x_n, \Prefix) \wedge \one [ x_n \in e_u ]\, \one [ y_n = q_u ]. \nn
\end{align}
\end{kdd}
\end{theorem}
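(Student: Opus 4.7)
The plan is to adapt the argument used in Proposition~\ref{prop:identical} by restricting the equivalent-points reasoning to data \emph{not} captured by~$\Prefix$, and then combine that restricted lower bound with the already-established hierarchical lower bound on the captured portion, exactly as the two pieces appear on the right-hand side of the desired inequality.

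The essential preliminary observation is that the equivalence classes are indivisible with respect to any prefix: since all points of a set~$e_u$ satisfy exactly the same antecedents, for any antecedent~$p$ either every point of~$e_u$ is captured by~$p$ or none is. Hence for any prefix, $e_u$ lies entirely inside or entirely outside its capture set. I would use this first to note that the first~$K$ antecedents of~$\Prefix'$ are identical to those of~$\Prefix$ and capture the same data in the context of~$\Prefix'$ as in the context of~$\Prefix$; because the convention fixes the $k$-th label as the majority class of the data captured by the $k$-th rule in context, the first~$K$ labels of~$\RL'$ coincide with~$\Labels$. The contribution to the misclassification error of~$\RL'$ from data captured by~$\Prefix$ therefore equals~$\Loss_p(\Prefix,\Labels,\x,\y)$ exactly.

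Next, I would apply the equivalent-points argument to data not captured by~$\Prefix$. By indivisibility, each such~$e_u$ is wholly disjoint from the capture set of~$\Prefix$, and moreover is either wholly captured by a single antecedent of~$\Prefix'$ beyond position~$K$ or wholly falls through to the default rule of~$\RL'$. In both cases~$\RL'$ assigns a single label to every point of~$e_u$, so it misclassifies at least the minority-label subpopulation within~$e_u$, a count equal to $N\theta(e_u)$. Summing over every~$e_u$ lying outside~$\Prefix$ gives a lower bound on the error of~$\RL'$ on uncaptured data that coincides with~$b_0(\Prefix,\x,\y)$, since captured equivalence classes contribute zero to~$b_0$ by construction. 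Combining these two pieces with the regularization penalty~$\Reg K' \ge \Reg K$ yields
\[
\Obj(\RL',\x,\y) = \Loss(\RL',\x,\y) + \Reg K' \ge \Loss_p(\Prefix,\Labels,\x,\y) + b_0(\Prefix,\x,\y) + \Reg K = b(\Prefix,\x,\y) + b_0(\Prefix,\x,\y).
\]

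The main obstacle—and the step I expect to be the crux—is the indivisibility observation: one has to argue that no antecedent of~$\Prefix'$ (including those appended beyond~$\Prefix$) can split an equivalence class, so that the per-class minority-count lower bound used on the entire data set in Proposition~\ref{prop:identical} transfers cleanly to the restricted universe of data not captured by~$\Prefix$. Once indivisibility is in hand the rest is bookkeeping, but getting the bookkeeping to land on exactly~$b_0$ (rather than a smaller or larger quantity) hinges on the fact that the uncaptured equivalence classes partition the uncaptured data, which is precisely what indivisibility provides.
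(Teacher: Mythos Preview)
Your argument is correct and in fact more direct than the paper's. The paper proceeds in two stages: it first applies the equivalent-points lower bound at the \emph{longer} prefix to obtain $\Obj(\RL', \x, \y) \ge b(\Prefix', \x, \y) + b_0(\Prefix', \x, \y)$, and then proves the monotonicity $b(\Prefix', \x, \y) + b_0(\Prefix', \x, \y) \ge b(\Prefix, \x, \y) + b_0(\Prefix, \x, \y)$ by introducing an auxiliary quantity $\Upsilon(\Prefix', K, \x, \y)$ that records the equivalent-points minority mass swallowed by rules $p_{K+1}, \ldots, p_{K'}$, establishing $b(\Prefix', \x, \y) \ge b(\Prefix, \x, \y) + \Upsilon + \Reg(K'-K)$ and $b_0(\Prefix', \x, \y) = b_0(\Prefix, \x, \y) - \Upsilon$, and cancelling~$\Upsilon$. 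You instead partition the loss of~$\RL'$ directly over data captured versus not captured by~$\Prefix$ and apply the minority-count bound only on the uncaptured side, which lands on $b_0(\Prefix, \x, \y)$ in one stroke. Both routes rest on the same indivisibility observation (the paper invokes it implicitly by pointing back to the reasoning of Proposition~\ref{prop:identical}); your decomposition simply avoids the~$\Upsilon$ bookkeeping. The paper's longer detour does buy one thing: it makes explicit that $b(\cdot) + b_0(\cdot)$ is monotone along prefix extensions, which aligns with how this combined bound is maintained incrementally in the implementation.
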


\begin{arxiv}
\begin{proof}
See Appendix~\ref{appendix:equiv-pts} for the proof of Theorem~\ref{thm:identical}.
\end{proof}
\end{arxiv}

\begin{arxiv}
\section{Incremental Computation}
\label{sec:incremental}
\end{arxiv}

For every prefix~$\Prefix$ evaluated during
Algorithm~\ref{alg:branch-and-bound}'s execution, we compute
the objective lower bound~${b(\Prefix, \x, \y)}$ and sometimes
the objective~${\Obj(\RL, \x, \y)}$ of the corresponding rule list~$\RL$.
These calculations are the dominant
\begin{arxiv}
computations with respect to execution time.
This motivates
\end{arxiv}
\begin{kdd}
computations and motivate
\end{kdd}
our use of a highly optimized library,
designed by~\citet{YangRuSe16}, for representing rule lists and
performing operations encountered in evaluating functions of rule lists.
Furthermore, we exploit the hierarchical nature of the objective
function and its lower bound to compute these quantities
incrementally throughout branch-and-bound execution.
\begin{arxiv}
In this section, we provide explicit expressions for
the incremental computations that are central to our approach.
Later, in~\S\ref{sec:implementation}, we describe a cache data structure
for supporting our incremental framework in practice.

For completeness, before presenting our incremental expressions,
let us begin by writing down the objective lower bound and objective
of the empty rule list, ${\RL = ((), (), \Default, 0)}$,
the first rule list evaluated in Algorithm~\ref{alg:branch-and-bound}.
Since its prefix contains zero rules, it has zero prefix
misclassification error and also has length zero.
Thus, the empty rule list's objective lower bound is zero, \ie ${b((), \x, \y) = 0}$.
Since none of the data are captured by the empty prefix, the default rule
corresponds to the majority class, and the objective corresponds to the
default rule misclassification error, \ie ${\Obj(\RL, \x, \y) = \Loss_0((), \Default, \x, \y)}$.

Now, we derive our incremental expressions for the objective function and its lower bound.
Let ${\RL = (\Prefix, \Labels, \Default, K)}$ and
${\RL' = (\Prefix', \Labels', \Default', K + 1)}$
be rule lists such that prefix ${\Prefix =}$ ${(p_1, \dots, p_K)}$
is the parent of ${\Prefix' = (p_1, \dots, p_K, p_{K+1})}$.
Let ${\Labels = (q_1, \dots, q_K)}$ and
${\Labels' = (q_1, \dots,}$ ${q_K, q_{K+1})}$ be the corresponding labels.
The hierarchical structure of Algorithm~\ref{alg:branch-and-bound}
enforces that if we ever evaluate~$\RL'$, then we will have already
evaluated both the objective and objective lower bound of its parent,~$\RL$.
We would like to reuse as much of these computations as possible
in our evaluation of~$\RL'$.
We can write the objective lower bound of~$\RL'$ incrementally,
with respect to the objective lower bound of~$\RL$:
\begin{align}
b(\Prefix', \x, \y)
  &= \Loss_p(\Prefix', \Labels', \x, \y) + \Reg (K + 1) \nn \\
&= \frac{1}{N} \sum_{n=1}^N \sum_{k=1}^{K+1} \Cap(x_n, p_k \given \Prefix')
  \wedge \one [ q_k \neq y_n ] + \Reg (K+1) \label{eq:non-inc-lb} \\
&= \Loss_p(\Prefix, \Labels, \x, \y) + \Reg K + \Reg
  + \frac{1}{N} \sum_{n=1}^N \Cap(x_n, p_{K+1} \given \Prefix') \wedge \one [q_{K+1} \neq y_n ] \nn \\
&= b(\Prefix, \x, \y) + \Reg
  + \frac{1}{N} \sum_{n=1}^N \Cap(x_n, p_{K+1} \given \Prefix') \wedge \one [q_{K+1} \neq y_n ] \nn \\
&= b(\Prefix, \x, \y) + \Reg  + \frac{1}{N} \sum_{n=1}^N \neg\, \Cap(x_n, \Prefix) \wedge
  \Cap(x_n, p_{K+1}) \wedge \one [q_{K+1} \neq y_n].
\label{eq:inc-lb}
\end{align}
Thus, if we store $b(\Prefix, \x, \y)$, 
then we can reuse this quantity when computing $b(\Prefix', \x, \y)$.
Transforming~\eqref{eq:non-inc-lb} into~\eqref{eq:inc-lb} yields a
significantly simpler expression that is a function of the stored
quantity~$b(\Prefix, \x, \y)$.
For the objective of~$\RL'$, first let us write a na\"ive expression:
\begin{align}
&\Obj(\RL', \x, \y) = \Loss(\RL', \x, \y) + \Reg (K + 1)
= \Loss_p(\Prefix', \Labels', \x, \y) + \Loss_0(\Prefix', \Default', \x, \y) + \Reg(K + 1) \nn \\
&= \frac{1}{N} \sum_{n=1}^N \sum_{k=1}^{K+1} \Cap(x_n, p_k \given \Prefix')
  \wedge \one [ q_k \neq y_n ] + \frac{1}{N}\sum_{n=1}^N \neg\, \Cap(x_n, \Prefix') \wedge
  \one [q'_0 \neq y_n] + \Reg (K+1). \label{eq:non-inc-obj}
\end{align}
Instead, we can compute the objective of~$\RL'$ incrementally
with respect to its objective lower bound:
\begin{align}
\Obj(\RL', \x, \y) &=  \Loss_p(\Prefix', \Labels', \x, \y) +
  \Loss_0(\Prefix', \Default', \x, \y) + \Reg (K + 1) \nn \\
&= b(\Prefix', \x, \y) + \Loss_0(\Prefix', \Default', \x, \y) \nn \\
&= b(\Prefix', \x, \y) + \frac{1}{N}\sum_{n=1}^N \neg\, \Cap(x_n, \Prefix') \wedge
  \one [q'_0 \neq y_n] \nn \\
&= b(\Prefix', \x, \y) + \frac{1}{N}\sum_{n=1}^N \neg\, \Cap(x_n, \Prefix) \wedge
  (\neg\, \Cap(x_n, p_{K+1})) \wedge \one [q'_0 \neq y_n].
\label{eq:inc-obj}
\end{align}
The expression in~\eqref{eq:inc-obj} is simpler to compute than that
in~\eqref{eq:non-inc-obj}, because the former reuses $b(\Prefix', \x, \y)$,
which we already computed in~\eqref{eq:inc-lb}.
Note that instead of computing~$\Obj(\RL', \x, \y)$ incrementally from $b(\Prefix', \x, \y)$
as in~\eqref{eq:inc-obj}, we could have computed it incrementally from $\Obj(\RL, \x, \y)$.
However, doing so would in practice require that we store~$\Obj(\RL, \x, \y)$
in addition to~$b(\Prefix, \x, \y)$, which we already must store to support~\eqref{eq:inc-lb}.
We prefer the incremental approach suggested by~\eqref{eq:inc-obj}
since it avoids this additional storage overhead.

\begin{algorithm}[t!]
  \caption{Incremental branch-and-bound for learning rule lists, for simplicity, from a cold start.
  We explicitly show the incremental objective lower bound and objective functions in Algorithms~\ref{alg:incremental-lb} and~\ref{alg:incremental-obj}, respectively.}
\label{alg:incremental}
\begin{algorithmic}
\normalsize
\State \textbf{Input:} Objective function~$\Obj(\RL, \x, \y)$,
objective lower bound~${b(\Prefix, \x, \y)}$,
set of antecedents ${\RuleSet = \{s_m\}_{m=1}^M}$,
training data~$(\x, \y) = {\{(x_n, y_n)\}_{n=1}^N}$,
regularization parameter~$\Reg$
\State \textbf{Output:} Provably optimal rule list~$\OptimalRL$ with minimum objective~$\OptimalObj$ \\

\State $\CurrentRL \gets ((), (), \Default, 0)$ \Comment{Initialize current best rule list with empty rule list}
\State $\CurrentObj \gets \Obj(\CurrentRL, \x, \y)$ \Comment{Initialize current best objective}
\State $Q \gets $ queue$(\,[\,(\,)\,]\,)$ \Comment{Initialize queue with empty prefix}
\State $C \gets $ cache$(\,[\,(\,(\,)\,, 0\,)\,]\,)$ \Comment{Initialize cache with empty prefix and its objective lower bound}
\While {$Q$ not empty} \Comment{Optimization complete when the queue is empty}
	\State $\Prefix \gets Q$.pop(\,) \Comment{Remove a length-$K$ prefix~$\Prefix$ from the queue}
        \State $b(\Prefix, \x, \y) \gets C$.find$(\Prefix)$ \Comment{Look up $\Prefix$'s lower bound in the cache}
        \State $\mathbf{u} \gets \neg\,\Cap(\x, \Prefix)$ \Comment{Bit vector indicating data not captured by $\Prefix$}
        \For {$s$ in $\RuleSet$} \Comment{Evaluate all of $\Prefix$'s children}
            \If {$s$ not in $\Prefix$}
                \State $\PrefixB \gets (\Prefix, s)$ \Comment{\textbf{Branch}: Generate child $\PrefixB$}
                \State $\mathbf{v} \gets \mathbf{u} \wedge \Cap(\x, s)$ \Comment{Bit vector indicating data captured by $s$ in $\PrefixB$}
                \State $b(\PrefixB, \x, \y) \gets b(\Prefix, \x, \y) + \Reg~ + $ \Call{IncrementalLowerBound}{$\mathbf{v}, \y, N$} 
                \If {$b(\PrefixB, \x, \y) < \CurrentObj$} \Comment{\textbf{Bound}: Apply bound from Theorem~\ref{thm:bound}}
                    \State $\Obj(\RLB, \x, \y) \gets b(\PrefixB, \x, \y)~ + $ \Call{IncrementalObjective}{$\mathbf{u}, \mathbf{v}, \y, N$} 
                    \State $\RLB \gets (\PrefixB, \LabelsB, \DefaultB, K+1)$ \Comment{$\LabelsB, \DefaultB$ are set in the incremental functions}
                    \If {$\Obj(\RLB, \x, \y) < \CurrentObj$}
                        \State $(\CurrentRL, \CurrentObj) \gets (\RLB, \Obj(\RLB, \x, \y))$ \Comment{Update current best rule list and objective}
                    \EndIf
                    \State $Q$.push$(\PrefixB)$ \Comment{Add $\PrefixB$ to the queue}
                    \State $C$.insert$(\PrefixB, b(\PrefixB, \x, \y))$ \Comment{Add $\PrefixB$ and its lower bound to the cache}
                \EndIf
            \EndIf
        \EndFor
\EndWhile
\State $(\OptimalRL, \OptimalObj) \gets (\CurrentRL, \CurrentObj)$ \Comment{Identify provably optimal rule list and objective}
\end{algorithmic}
\end{algorithm}

\begin{algorithm}[t!]
  \caption{Incremental objective lower bound~\eqref{eq:inc-lb} used in Algorithm~\ref{alg:incremental}.}
\label{alg:incremental-lb}
\begin{algorithmic}
\normalsize
\State \textbf{Input:}
Bit vector~${\mathbf{v} \in \{0, 1\}^N}$ indicating data captured by $s$, the last antecedent in~$\PrefixB$,
bit vector of class labels~${\y \in \{0, 1\}^N}$,
number of observations~$N$
\State \textbf{Output:} Component of~$\RLB$'s misclassification error due to data captured by~$s$ \\

\Function{IncrementalLowerBound}{$\mathbf{v}, \y, N$}
    \State $n_v = \Count(\mathbf{v})$ \Comment{Number of data captured by $s$, the last antecedent in $\PrefixB$}
    \State $\mathbf{w} \gets \mathbf{v} \wedge \y$ \Comment{Bit vector indicating data captured by $s$ with label $1$}
    \State $n_w = \Count(\mathbf{w})$ \Comment{Number of data captured by $s$ with label $1$}
    \If {$n_w / n_v > 0.5$}
        \State \Return $(n_v - n_w) / N$ \Comment{Misclassification error of the rule $s \rightarrow 1$}
    \Else
        \State \Return $n_w / N$ \Comment{Misclassification error of the rule $s \rightarrow 0$}
    \EndIf
    \EndFunction
\end{algorithmic}
\end{algorithm}

\begin{algorithm}[t!]
  \caption{Incremental objective function~\eqref{eq:inc-obj} used in Algorithm~\ref{alg:incremental}.}
\label{alg:incremental-obj}
\begin{algorithmic}
\normalsize
\State \textbf{Input:}
Bit vector~${\mathbf{u} \in \{0, 1\}^N}$ indicating data not captured by~$\PrefixB$'s parent prefix,
bit vector~${\mathbf{v} \in \{0, 1\}^N}$ indicating data not captured by $s$, the last antecedent in~$\PrefixB$,
bit vector of class labels~${\y \in \{0, 1\}^N}$,
number of observations~$N$
\State \textbf{Output:} Component of~$\RLB$'s misclassification error due to its default rule \\

 \Function{IncrementalObjective}{$\mathbf{u}, \mathbf{v}, \y, N$}
    \State $\mathbf{f} \gets \mathbf{u} \wedge \neg\,\mathbf{v} $ \Comment{Bit vector indicating data not captured by $\PrefixB$}
    \State $n_f = \Count(\mathbf{f})$ \Comment{Number of data not captured by $\PrefixB$}
    \State $\mathbf{g} \gets \mathbf{f} \wedge \y$ \Comment{Bit vector indicating data not captured by $\PrefixB$ with label $1$}
    \State $n_g = \Count(\mathbf{g})$ \Comment{Number of data not captued by $\PrefixB$ with label $1$}
    \If {$n_g / n_f > 0.5$}
        \State \Return $(n_f - n_g) / N$ \Comment{Misclassification error of the default label prediction $1$}
    \Else
        \State \Return $n_g / N$ \Comment{Misclassification error of the default label prediction $0$}
    \EndIf
\EndFunction
\end{algorithmic}
\end{algorithm}

We present an incremental branch-and-bound procedure in
Algorithm~\ref{alg:incremental}, and show the incremental computations
of the objective lower bound~\eqref{eq:inc-lb} and objective~\eqref{eq:inc-obj}
as two separate functions in Algorithms~\ref{alg:incremental-lb}
and~\ref{alg:incremental-obj}, respectively.
In Algorithm~\ref{alg:incremental}, we use a cache to store
prefixes and their objective lower bounds.
Algorithm~\ref{alg:incremental} additionally reorganizes the structure
of Algorithm~\ref{alg:branch-and-bound} to group together the computations
associated with all children of a particular prefix.
This has two advantages.
The first is to consolidate cache queries: all children of the same
parent prefix compute their objective lower bounds with respect to
the parent's stored value, and we only require one cache `find' operation
for the entire group of children, instead of a separate query for each child.
The second is to shrink the queue's size:
instead of adding all of a prefix's children as separate queue elements,
we represent the entire group of children in the queue by a single element.
Since the number of children associated with each prefix
is close to the total number of possible antecedents,
both of these effects can yield significant savings.
For example, if we are trying to optimize over rule lists formed
from a set of 1000 antecedents, then the maximum queue size in
Algorithm~\ref{alg:incremental} will be smaller than that in
Algorithm~\ref{alg:branch-and-bound} by a factor of nearly 1000.

\end{arxiv}

\clearpage
\section{Implementation}
\label{sec:implementation}

We implement our algorithm using a collection of optimized data structures
that we describe in this section.
First, we explain how we use a prefix tree~(\S\ref{sec:trie})
to support the incremental computations that we motivated in~\S\ref{sec:incremental}.
Second, we describe several queue designs
that implement different search policies~(\S\ref{sec:queue}).
Third, we introduce a symmetry-aware map~(\S\ref{sec:pmap}) to support
symmetry-aware pruning~(Corollary~\ref{thm:permutation},~\S\ref{sec:permutation}).
Next, we summarize how these data structures interact throughout
our model of incremental execution~(\S\ref{sec:execution}).
In particular, Algorithms~\ref{alg:bounds} and~\ref{alg:pmap} illustrate many
of the computational details from CORELS' inner loop, highlighting each of
the bounds from~\S\ref{sec:framework} that we use to prune the search space.
We additionally describe how we garbage collect our data structures~(\S\ref{sec:gc}).
Finally, we explore how our queue can be used to support
custom scheduling policies designed to improve performance~(\S\ref{sec:scheduling}).

\subsection{Prefix Tree}
\label{sec:trie}

Our incremental computations (\S\ref{sec:incremental}) require a
cache to keep track of prefixes that we have already evaluated
and that are also still under consideration by the algorithm.
We implement this cache as a prefix tree, a data structure also known as a trie,
which allows us to efficiently represent structure shared between related prefixes.
Each node in the prefix tree encodes an individual rule ${r_k = p_k \rightarrow q_k}$.
Each path starting from the root represents a prefix, such that the final node
in the path also contains metadata associated with that prefix.
For a 
prefix ${\Prefix = (p_1, \dots, p_K)}$,
let~$\varphi(\Prefix)$ denote the corresponding node in the trie.
The metadata at node~$\varphi(\Prefix)$ supports the incremental computation
and includes:
\begin{itemize}
\item An index encoding~$p_K$, the last antecedent.
\item The objective lower bound $b(\Prefix, \x, \y)$, defined in~\eqref{eq:lower-bound},
  the central bound in our framework (Theorem~\ref{thm:bound}).
\item The lower bound on the default rule misclassification error
  $b_0(\Prefix, \x, \y)$, defined in~\eqref{eq:lb-b0},
  to support our equivalent points bound (Theorem~\ref{thm:identical}).
\item An indicator denoting whether this node should be deleted (see~\S\ref{sec:gc}).
\item A representation of viable extensions of~$\Prefix$,
  \ie length ${K+1}$ prefix that start with~$\Prefix$ and have not been pruned.
\end{itemize}
For evaluation purposes and convenience, we store additional information in
the prefix tree; for a prefix~$\Prefix$ with corresponding rule list
${\RL = (\Prefix, \Labels, \Default, K)}$, the node~$\varphi(\Prefix)$ also stores:
\begin{itemize}
\item The length~$K$; equivalently, node~$\varphi(\Prefix)$'s depth in the trie. 
\item The label prediction~$q_K$ corresponding to antecedent~$p_K$.
\item The default rule label prediction~$\Default$.
\item $\NCap$, the number of samples captured by prefix $\Prefix$, as in~\eqref{eq:num-cap}.
\item The objective value $\Obj(\RL, \x, \y)$, defined in~\eqref{eq:objective}.
\end{itemize}
Finally, we note that we implement the prefix tree as a custom C++ class.

\subsection{Queue}
\label{sec:queue}

The queue is a worklist that orders exploration over the search space of possible
rule lists; every queue element corresponds to a leaf in the prefix tree, and vice versa.
In our implementation, each queue element points to a leaf;
when we pop an element off the queue, we use the leaf's metadata to
incrementally evaluate the corresponding prefix's children.

We order entries in the queue to implement several different search policies.
For example, a first-in-first-out~(FIFO) queue implements breadth-first search~(BFS),
and a priority queue implements best-first search.
In our experiments~(\S\ref{sec:experiments}), we use the C++ Standard Template Library~(STL)
queue and priority queue to implement BFS and best-first search, respectively.
For CORELS, priority queue policies of interest include ordering by the lower bound,
the objective, or more generally, any function that maps prefixes to real values;
stably ordering by prefix length and inverse prefix length implement
BFS and depth-first search (DFS), respectively.
In our released code, we present a unified implementation,
where we use the STL priority queue to support BFS, DFS,
and several best-first search policies.
As we demonstrate in our experiments~(\S\ref{sec:ablation}),
we find that using a custom search strategy,
such as ordering by the lower bound, usually leads to a faster runtime than BFS.

We motivate the design of additional custom search strategies in~\S\ref{sec:scheduling}.
In preliminary work (not shown), we also experimented with
stochastic exploration processes that bypass the need for a queue
by instead following random paths from the root to leaves;
developing such methods could be an interesting direction for future work.
We note that these search policies are referred to as node selection strategies
in the MIP literature.
Strategies such as best-first (best-bound) search and DFS are known as static methods, and the framework we present in~\S\ref{sec:scheduling}
has the spirit of estimate-based methods~\citep{Linderoth1999}.

\subsection{Symmetry-aware Map}
\label{sec:pmap}

The symmetry-aware map supports the symmetry-aware pruning justified in~\S\ref{sec:equivalent}.
In our implementation, we specifically leverage our permutation bound
(Corollary~\ref{thm:permutation}), though it is also possible to directly
exploit the more general equivalent support bound (Theorem~\ref{thm:equivalent}).
We use the C++ STL unordered map to keep track of the best known ordering
of each evaluated set of antecedents.
The keys of our symmetry-aware map encode antecedents in canonical order,
\ie antecedent indices in numerically sorted order,
and we associate all permutations of a set of antecedents with a single key.
Each key maps to a value that encodes the best known prefix in the permutation
group of the key's antecedents, as well as the objective lower bound of that prefix.

Before we consider adding a prefix~$\Prefix$ to the trie and queue, we check
whether the map already contains a permutation~$\pi(\Prefix)$ of that prefix.
If no such permutation exists, then we insert~$\Prefix$ into the map, trie, and queue.
Otherwise, if a permutation~$\pi(\Prefix)$ exists and the lower bound of~$\Prefix$ is better
than that of~$\pi(\Prefix)$, \ie ${b(\Prefix, \x, \y) <}$ ${b(\pi(\Prefix), \x, \y)}$,
then we update the map and remove~$\pi(\Prefix)$ and its entire subtree from the trie;
we also insert~$\Prefix$ into the trie and queue.
Otherwise, if there exists a permutation~$\pi(\Prefix)$ such that
${b(\pi(\Prefix), \x, \y) \le}$ ${b(\Prefix, \x, \y)}$,
then we do nothing, \ie we do not insert~$\Prefix$ into any data structures.

\subsection{Incremental Execution}
\label{sec:execution}

Mapping our algorithm to our data structures produces the following execution strategy,
which we also illustrate in Algorithms~\ref{alg:bounds} and~\ref{alg:pmap}.
We initialize the current best objective~$\CurrentObj$ and rule list~$\CurrentRL$.
While the trie contains unexplored leaves, a scheduling policy selects the next prefix~$\Prefix$
to extend; in our implementation, we pop elements from a (priority) queue, until the queue is empty.
Then, for every antecedent~$s$ that is not in~$\Prefix$,
we construct a new prefix~$\PrefixB$ by appending~$s$ to~$\Prefix$;
we incrementally calculate the lower bound~$b(\PrefixB, \x, \y)$,
the objective~$\Obj(\RLB, \x, \y)$, of the associated rule list~$\RLB$,
and other quantities used by our algorithm, summarized by the metadata fields of
the (potential) prefix tree node~$\varphi(\PrefixB)$.

If the objective~$\Obj(\RLB, \x, \y)$ is less than the current best objective~$\CurrentObj$,
then we update~$\CurrentObj$ and~$\CurrentRL$.
If the lower bound of the new prefix~$\PrefixB$ is less than the current best objective,
then as described in~\S\ref{sec:pmap}, we query the symmetry-aware map for~$\PrefixB$;
if we insert~$\Prefix'$ into the symmetry-aware map, then we also insert it into the trie and queue.
Otherwise, 
then by our hierarchical lower bound (Theorem~\ref{thm:bound}),
no extension of~$\PrefixB$ could possibly lead to a rule list with objective
better than~$\CurrentObj$, thus we do not insert~$\PrefixB$ into the tree or queue.
We also leverage our other bounds from~\S\ref{sec:framework}
to aggressively prune the search space; we highlight each of these bounds
in Algorithms~\ref{alg:bounds} and~\ref{alg:pmap},
which summarize the computations and data structure operations performed in CORELS' inner loop.
When there are no more leaves to explore, \ie the queue is empty, we output the optimal rule list.
We can optionally terminate early according to some alternate condition,
\eg when the size of the prefix tree exceeds some threshold.

\begin{algorithm}[t!]
  \caption{The inner loop of CORELS, which evaluates all children of a prefix~$\Prefix$.}
\label{alg:bounds}
\begin{algorithmic}
\small
\State Define $\mathbf{z} \in \{0, 1\}^N$, s.t. ${z_n = \sum_{u=1}^U \one [x_n \in e_u] [y_n = q_u]}$ \\
\Comment{$e_u$
is the equivalent points set containing~$x_n$ and $q_u$ is the minority class label of~$e_u$ (\S\ref{sec:identical})}
\State Define $b(\Prefix, \x, \y)$ and $\mathbf{u} = \neg\,\Cap(\x, \Prefix)$ \Comment{$\mathbf{u}$ is a bit vector indicating data not captured by $\Prefix$}
\vspace{1.5mm}
\For {$s$ in $\RuleSet$ \textbf{if} $s$ not in $\Prefix$ \textbf{then}} \Comment{Evaluate all of $\Prefix$'s children}
        \State $\PrefixB \gets (\Prefix, s)$ \Comment{\textbf{Branch}: Generate child $\PrefixB$}
        \State $\mathbf{v} \gets \mathbf{u} \wedge \Cap(\x, s)$ \Comment{Bit vector indicating data captured by $s$ in $\PrefixB$}
        \State $n_v = \Count(\mathbf{v})$ \Comment{Number of data captured by $s$, the last antecedent in $\PrefixB$}
        \If {$n_v / N < \Reg$}
            \State \Continue \Comment{\textbf{Lower bound on antecedent support (Theorem\ref{thm:min-capture})}}
        \EndIf
        \State $\mathbf{w} \gets \mathbf{v} \wedge \y$ \Comment{Bit vector indicating data captured by $s$ with label $1$}
        \State $n_w = \Count(\mathbf{w})$ \Comment{Number of data captured by $s$ with label $1$}
        \If {$n_w / n_v \ge 0.5$}
            \State $n_c \gets n_w$ \Comment{Number of correct predictions by the new rule $s \rightarrow 1$}
        \Else
            \State $n_c \gets n_v - n_w$ \Comment{Number of correct predictions by the new rule $s \rightarrow 0$}
        \EndIf
        \If {$n_c / N < \Reg$}
            \State \Continue \Comment{\textbf{Lower bound on accurate antecedent support (Theorem~\ref{thm:min-capture-correct})}}
        \EndIf
        \State $\delta_b \gets (n_v - n_c) / N$ \Comment{Misclassification error of the new rule}
       \State $b(\PrefixB, \x, \y) \gets b(\Prefix, \x, \y) + \Reg + \delta_b$ \Comment{Incremental lower bound~\eqref{eq:inc-lb}}
       \If {$b(\PrefixB, \x, \y) \ge \CurrentObj$} \Comment{\textbf{Hierarchical objective lower bound (Theorem~\ref{thm:bound})}}
           \State \Continue
       \EndIf
       \State $\mathbf{f} \gets \mathbf{u} \wedge \neg\,\mathbf{v} $ \Comment{Bit vector indicating data not captured by $\PrefixB$}
       \State $n_f = \Count(\mathbf{f})$ \Comment{Number of data not captured by $\PrefixB$}
       \State $\mathbf{g} \gets \mathbf{f} \wedge \y$ \Comment{Bit vector indicating data not captured by $\PrefixB$ with label $1$}
       \State $n_g = \Count(\mathbf{g})$ \Comment{Number of data not captued by $\PrefixB$ with label $1$}
       \If {$n_g / n_f \ge 0.5$}
           \State $\delta_\Obj \gets (n_f - n_g) / N$ \Comment{Misclassification error of the default label prediction $1$}
       \Else
           \State $\delta_\Obj \gets n_g / N$ \Comment{Misclassification error of the default label prediction $0$}
       \EndIf
       \State $\Obj(\RLB, \x, \y) \gets b(\PrefixB, \x, \y) + \delta_\Obj$ \Comment{Incremental objective~\eqref{eq:inc-obj}}
       \State $\RLB \gets (\PrefixB, \LabelsB, \DefaultB, K+1)$ \Comment{$\LabelsB, \DefaultB$ are set in the incremental functions}
       \If {$\Obj(\RLB, \x, \y) < \CurrentObj$}
            \State $(\CurrentRL, \CurrentObj) \gets (\RLB, \Obj(\RLB, \x, \y))$ \Comment{Update current best rule list and objective}
            \State \Call{GarbageCollectPrefixTree}{$\CurrentObj$} \Comment{Delete nodes with lower bound $\ge \CurrentObj - \Reg$ (\S\ref{sec:gc}),}
        \EndIf \hfill {using the \textbf{Lookahead bound (Lemma~\ref{lemma:lookahead})}}
        \State $b_0(\PrefixB, \x, \y) \gets \Count(\mathbf{f} \wedge \mathbf{z}) / N$ \Comment{Lower bound on the default rule misclassification}
        \State $b \gets b(\PrefixB, \x, \y) + b_0(\PrefixB, \x, \y)$ \hfill error defined in~\eqref{eq:lb-b0}
        \If {$b + \Reg \ge \CurrentObj$} \Comment{\textbf{Equivalent points bound (Theorem~\ref{thm:identical})}}
            \State \Continue \hfill {combined with the \textbf{Lookahead bound (Lemma~\ref{lemma:lookahead})}}
        \EndIf
        \State \Call{CheckMapAndInsert}{$\PrefixB, b$} \Comment{Check the \textbf{Permutation bound (Corollary~\ref{thm:permutation})} and}
\EndFor \hfill {possibly insert $\PrefixB$ into data structures (Algorithm~\ref{alg:pmap})}
\end{algorithmic}
\end{algorithm}

\begin{algorithm}[t!]
  \caption{Possibly insert a prefix into CORELS' data structures, after first
  checking the symmetry-aware map, which supports search space pruning
  triggered by the permutation bound (Corollary~\ref{thm:permutation}).
  For further context, see Algorithm~\ref{alg:bounds}.}
\label{alg:pmap}
\begin{algorithmic}
\State $T$ is the prefix tree (\S\ref{sec:trie})
\State $Q$ is the queue, for concreteness, a priority queue ordered by the lower bound (\S\ref{sec:queue})
\State $\PMap$ is the symmetry-aware map (\S\ref{sec:pmap}) \\

\Function{CheckMapAndInsert}{$\PrefixB, b$}
    \State $\pi_0 \gets$ sort($\PrefixB$) \Comment{$\PrefixB$'s antecedents in canonical order}
    \State $(D_\pi, b_\pi) \gets \PMap$.find($\pi_0$) \Comment{Look for a permutation of $\PrefixB$}
    \If {$D_\pi$ exists}
        \If {$b < b_\pi$} \Comment{$\PrefixB$ is better than $D_\pi$}
            \State $\PMap$.update($\pi_0, (\PrefixB, b)$) \Comment{Replace $D_\pi$ with $\PrefixB$ in the map}
            \State $T$.delete\_subtree($D_\pi$) \Comment{Delete $D_\pi$ and its subtree from the prefix tree}
            \State $T$.insert$(\varphi(\PrefixB))$ \Comment{Add node for $\PrefixB$ to the prefix tree}
            \State $Q$.push$(\PrefixB, b)$ \Comment{Add $\PrefixB$ to the queue}
        \Else
            \State \textbf{pass} \Comment{$\PrefixB$ is inferior to $D_\pi$, thus do not insert it into any data structures}
        \EndIf
    \Else
        \State $\PMap$.insert($\pi_0, (\PrefixB, b)$) \Comment{Add $\PrefixB$ to the map}
        \State $T$.insert$(\varphi(\PrefixB))$ \Comment{Add node for $\PrefixB$ to the prefix tree}
        \State $Q$.push$(\PrefixB, b)$ \Comment{Add $\PrefixB$ to the queue}
    \EndIf
\EndFunction
\end{algorithmic}
\end{algorithm}

\subsection{Garbage Collection}
\label{sec:gc}

During execution, we garbage collect the trie.
Each time we update the minimum objective,
we traverse the trie in a depth-first manner, deleting all subtrees
of any node with lower bound larger than the current minimum objective.
At other times, when we encounter a node with no children, we prune upwards,
deleting that node and recursively traversing the tree towards the root,
deleting any childless nodes.
This garbage collection allows us to constrain the trie's memory consumption, though in our
experiments we observe the minimum objective to decrease only a small number of times.

In our implementation, we cannot immediately delete prefix tree leaves
because each corresponds to a queue element that points to it.
The C++ STL priority queue is a wrapper container that prevents access to the
underlying data structure, and thus we cannot access elements in the middle of the queue,
even if we know the relevant identifying information.
We therefore have no way to update the queue without iterating over every element.
We address this by marking prefix tree leaves that we wish to delete (see~\S\ref{sec:trie}),
deleting the physical nodes lazily, after they are popped from the queue.
Later, in our section on experiments~(\S\ref{sec:experiments}),
we refer to two different queues that we define here: the physical queue
corresponds to the C++ queue, and thus all prefix tree leaves, and the logical queue
corresponds only to those prefix tree leaves that have not been marked for deletion.

\subsection{Custom Scheduling Policies}
\label{sec:scheduling}

In our setting, an ideal scheduling policy would immediately identify an optimal
rule list, and then certify its optimality by systematically eliminating the
remaining search space.
This motivates trying to design scheduling policies that tend to quickly find optimal rule lists.
When we use a priority queue to order the set of prefixes to evaluate next,
we are free to implement different scheduling policies via the ordering of
elements in the queue.
This motivates designing functions that assign higher priorities to `better'
prefixes that we believe are more likely to lead to optimal rule lists.
We follow the convention that priority queue elements are ordered
by keys, such that keys with smaller values have higher priorities.

We introduce a custom class of functions that we call \emph{curiosity} functions.
Broadly, we think of the curiosity of a rule list~$\RL$
as the expected objective value of another rule list~$\RL'$ that is related to~$\RL$;
different models of the relationship between~$\RL$ and~$\RL'$ lead to different
curiosity functions.
In general, the curiosity of~$\RL$ is, by definition, equal to the sum of the expected
misclassification error and the expected regularization penalty of~$\RL'$:
\begin{align}
\Curiosity(\Prefix, \x, \y) \equiv \E[ \Obj(\RL', \x, \y) ]
&= \E[\Loss(\Prefix', \Labels', \x, \y)] + \Reg \E[ K' ].
\label{eq:curiosity}
\end{align}

Next, we describe a simple curiosity function for a rule list~$\RL$ with prefix~$\Prefix$.
First, let~$\NCap$ denote the number of observations captured by~$\Prefix$, \ie
\begin{align}
\NCap \equiv \sum_{n=1}^N \Cap(x_n, \Prefix).
\label{eq:num-cap}
\end{align}
We now describe a model that generates another
rule list~${\RL' = (\Prefix', \Labels', \Default', K')}$ from~$\Prefix$.
Assume that prefix~$\Prefix'$ starts with~$\Prefix$ and captures all the data,
such that each additional antecedent in~$\Prefix'$
captures as many `new' observations as each antecedent in~$\Prefix$, on average;
then, the expected length of~$\Prefix'$ is
\begin{align}
\E[ K' ] = \frac{N}{\NCap / K}.
\label{eq:curiosity-length}
\end{align}
Furthermore, assume that each additional antecedent in~$\Prefix'$
makes as many mistakes as each antecedent in~$\Prefix$, on average,
thus the expected misclassification error of~$\Prefix'$ is
\begin{align}
\E[\Loss(\Prefix', \Labels', \x, \y)]
&= \E[\Loss_p(\Prefix', \Labels', \x, \y)] + \E[\Loss_0(\Prefix', \Default', \x, \y)] \nn \\
&= \E[\Loss_p(\Prefix', \Labels', \x, \y)]
=  \E[ K' ] \left(\frac{\Loss_p(\Prefix, \Labels, \x, \y)}{K}\right).
\label{eq:curiosity-error}
\end{align}
Note that the default rule misclassification error~$\Loss_0(\Prefix', \Default', \x, \y)$
is zero because we assume that~$\Prefix'$ captures all the data.
Inserting~\eqref{eq:curiosity-length} and~\eqref{eq:curiosity-error}
into~\eqref{eq:curiosity} thus gives curiosity for this model:
\begin{align*}
\Curiosity(\Prefix, \x, \y)
&= \left( \frac{N}{\NCap} \right) \biggl(\Loss_p(\Prefix, \Labels, \x, \y) + \Reg K \biggr) \\
&= \left( \frac{1}{N} \sum_{n=1}^N \Cap(x_n, \Prefix) \right)^{-1} b(\Prefix, \x, \y)
= \frac{b(\Prefix, \x, \y)}{\Supp(\Prefix, \x)},
\end{align*}
where for the second equality, we used the definitions in~\eqref{eq:num-cap} of~$\NCap$
and in~\eqref{eq:lower-bound} of~$\Prefix$'s lower bound, and for the last equality,
we used the definition in~\eqref{eq:support} of~$\Prefix$'s normalized support.

The curiosity for a prefix~$\Prefix$ is thus also equal to its objective lower bound,
scaled by the inverse of its normalized support.
For two prefixes with the same lower bound, curiosity gives higher priority to
the one that captures more data.
This is a well-motivated scheduling strategy if we model prefixes that extend
the prefix with smaller support as having more `potential' to make mistakes.
We note that using curiosity in practice does not introduce new bit vector
or other expensive computations; during execution, we can calculate curiosity
as a simple function of already derived quantities.

In preliminary experiments, we observe that using a priority queue ordered by
curiosity sometimes yields a dramatic reduction in execution time,
compared to using a priority queue ordered by the objective lower bound.
Thus far, we have observed significant benefits on specific small problems,
where the structure of the solutions happen to render curiosity particularly
effective (not shown).
Designing and studying other `curious' functions, that are effective in more
general settings, is an exciting direction for future work.

\section{Experiments}
\label{sec:experiments}

Our experimental analysis addresses five questions:
How does CORELS' predictive performance compare to that of COMPAS scores
and other algorithms? (\S\ref{sec:compas}, \S\ref{sec:frisk}, and~\S\ref{sec:sparsity})
How does CORELS' model size compare to that of other algorithms? (\S\ref{sec:sparsity})
How rapidly do the objective value and its lower bound converge,
for different values of the regularization parameter~$\Reg$? (\S\ref{sec:reg-param})
How much does each of the implementation optimizations contribute to CORELS' performance?~(\S\ref{sec:ablation})
How rapidly does CORELS prune the search space? (\S\ref{sec:reg-param} and \S\ref{sec:ablation})
Before proceeding, we first describe our computational environment (\S\ref{sec:environment}),
as well as the data sets and prediction problems we use (\S\ref{sec:datasets}),
and then in Section~\ref{sec:examples} show example optimal rule lists found by CORELS.

\subsection{Computational Environment}
\label{sec:environment}
All timed results ran on a server with an Intel Xeon E5-2699~v4 (55~MB cache, 2.20~GHz) processor and 264~GB RAM,
and we ran each timing measurement separately, on a single hardware thread, with nothing else running on the server.
Except where we mention a memory constraint, all experiments
can run comfortably on smaller machines, \eg a laptop with 16~GB~RAM.

\subsection{Data Sets and Prediction Problems}
\label{sec:datasets}
Our evaluation focuses on two socially-important prediction problems associated
with recent, publicly-available data sets.
Table~1 summarizes the data sets and prediction problems,
and Table~2 summarizes feature sets extracted from each data set,
as well as antecedent sets we mine from these feature sets.
We provide some details next.
For further details about data sets, preprocessing steps, and antecedent mining,
see Appendix~\ref{appendix:data}.

\begin{table}[t!]
\centering
\begin{tabular}{l|c|r|c|c|r|r}
Data set & Prediction problem & N~~~~ & Positive & Resample & Training & Test~~ \\
& & & fraction & training set & set size~ & set size \\
\hline
ProPublica & Two-year recidivism & 6,907 & 0.46 & No & 6,217 & 692 \\
NYPD & Weapon possession & 325,800 & 0.03 & Yes & 566,839 & 32,580 \\
NYCLU & Weapon possession & 29,595 & 0.05 & Yes & 50,743 & 2,959 \\
\end{tabular}
\caption{Summary of data sets and prediction problems.
The last five columns report the total number of observations,
the fraction of observations with the positive class label,
whether we resampled the training set due to class imbalance,
and the sizes of each training and test set in our 10-fold cross-validation studies.
}
\label{tab:datasets}
\end{table}

\begin{table}[t!]
\centering
\begin{tabular}{l|c|c|c|c|c|c}
Data set & Feature & Categorical & Binary & Mined & Max number & Negations \\
& set & attributes & features & antecedents & of clauses & \\
\hline
ProPublica & A & 6 & 13 & 122 & 2 & No \\
ProPublica & B & 7 & 17 & 189 & 2 & No \\
NYPD & C & 5 & 28 & ~28 & 1 & No \\
NYPD & D & 3 & 20 & ~20 & 1 & No \\
NYCLU & E & 5 & 28 & ~46 & 1 & Yes
\end{tabular}
\caption{Summary of feature sets and mined antecedents.
The last five columns report the number of categorical attributes,
the number of binary features, the average number of mined antecedents,
the maximum number of clauses in each antecedent,
and whether antecedents include negated clauses.
}
\label{tab:features}
\end{table}

\subsubsection{Recidivism Prediction}
For our first problem, we predict which individuals in the ProPublica COMPAS
data set~\citep{LarsonMaKiAn16} recidivate within two years.
This data set contains records for all offenders in Broward County, Florida
in 2013 and 2014 who were given a COMPAS score pre-trial.
Recidivism is defined as being charged with a new crime within two years
after receiving a COMPAS assessment; the article by \citet{LarsonMaKiAn16},
and their code,\footnote{Data and code used in the analysis by \citet{LarsonMaKiAn16} can be found at \url{https://github.com/propublica/compas-analysis}.}
provide more details about this definition.
From the original data set of records for 7,214 individuals,
we identify a subset of 6,907 records without missing data.
For the majority of our analysis, we extract a set of~13 binary features (Feature Set~A),
which our antecedent mining framework combines into ${M=122}$ antecedents,
on average (folds ranged from containing 121 to 123 antecedents).
We also consider a second, similar antecedent set in~\S\ref{sec:examples},
derived from a superset of Feature Set~A that includes~4 additional binary features (Feature Set~B).

\subsubsection{Weapon Prediction}
For our second problem, we use New York City
stop-and-frisk data to predict whether a weapon will be found on a stopped
individual who is frisked or searched.
For experiments in Sections~\ref{sec:examples} and~\ref{sec:frisk} and Appendix~\ref{appendix:cpw},
we compile data from a database maintained by the New York Police Department (NYPD)~\citep{nypd},
from years 2008-2012, following~\citet{Goel16}.
Starting from 2,941,390 records, each describing an incident involving
a stopped person, we first extract 376,488 records where the suspected
crime was criminal possession of a weapon (CPW).\footnote{We filter for records that
explicitly match the string `CPW'; we note that additional records, after converting to
lowercase, contain strings such as `cpw' or `c.p.w.'}
From these, we next identify a subset of 325,800 records for which the
individual was frisked and/or searched; of these, criminal possession of a weapon
was identified in only 10,885 instances (about 3.3\%).
Resampling due to class imbalance, for 10-fold cross-validation, yields training sets
that each contain 566,839 datapoints. (We form corresponding test sets without resampling.)
From a set of 5 categorical features, we form a set of~28 single-clause antecedents
corresponding to~28 binary features (Feature Set~C).
We also consider another, similar antecedent set, derived from a subset of Feature Set~C
that excludes~8 location-specific binary features (Feature Set~D).

In Sections~\ref{sec:examples}, \ref{sec:sparsity}, \ref{sec:reg-param}, and~\ref{sec:ablation},
we also use a smaller stop-and-frisk data set,
derived by the NYCLU from the NYPD's 2014 data~\citep{nyclu:2014}.
From the original data set of 45,787 records, each describing an incident involving
a stopped person, we identify a subset of 29,595 records for which the individual
was frisked and/or searched.
Of these, criminal possession of a weapon was identified in about 5\% of instances.
As with the larger NYPD data set, we resample the data to form training sets
(but not to form test sets).
From the same set of 5 categorical features as in Feature Set~C, we form a set of ${M=46}$
single-clause antecedents, including negations (Feature Set~E).

\subsection{Example Optimal Rule Lists}
\label{sec:examples}

To motivate Feature Set~A, described in Appendix~\ref{appendix:data},
which we used in most of our analysis of the ProPublica data set,
we first consider Feature Set~B, a larger superset of features.

\begin{figure}[t!]
\begin{algorithmic}
\State \bif $(age = 21-22) \band (priors = 2-3)$ \bthen $yes$
\State \belif $(age = 18-20)\band (sex = male)$ \bthen $yes$
\State \belif $(priors > 3)$ \bthen $yes$
\State \belse $no$
\end{algorithmic}
\vspace{1mm}
\begin{algorithmic}
\State \bif $(age = 23-25) \band (priors = 2-3)$ \bthen $yes$
\State \belif $(age = 18-20)\band (sex = male)$ \bthen $yes$
\State \belif $(age = 21-22) \band (priors = 2-3)$ \bthen $yes$
\State \belif $(priors > 3)$ \bthen $yes$
\State \belse $no$
\end{algorithmic}
\caption{Example optimal rule lists that predict two-year recidivism for the
ProPublica data set (Feature Set~B, ${M=189}$), found by CORELS (${\Reg = 0.005}$), across 10 cross-validation folds.
While some input antecedents contain features for race, no optimal rule list includes such an antecedent.
Every optimal rule list is the same or similar to one of these examples,
with prefixes containing the same rules, up to a permutation, and same default rule.
}
\label{fig:propublica}
\end{figure}

Figure~\ref{fig:propublica} shows optimal rule lists learned by CORELS,
using Feature Set~B, which additionally includes race categories from the ProPublica data set (African American, Caucasian, Hispanic,
Other\footnote{We grouped the original Native American ($<$0.003), Asian ($<$0.005), and Other ($<$0.06) categories.}).
For Feature Set~B, our antecedent mining procedure generated an average
of~189 antecedents, across folds.
None of the optimal rule lists contain antecedents that directly depend on race;
this motivated our choice to exclude race, by using Feature Set~A, in our subsequent analysis.
For both feature sets, we replaced the original ProPublica age categories ($<$25, 25-45, $>$45)
with a set that is more fine-grained for younger individuals (18-20, 21-22, 23-25, 26-45, $>$45).
Figure~\ref{fig:recidivism-all-folds} shows example optimal rule lists that CORELS learns
for the ProPublica data set (Feature Set~A, ${\Reg = 0.005}$), using 10-fold cross validation.

\begin{figure}[t!]
\vspace{2mm}
\begin{algorithmic}
\State \bif $(age = 18-20) \band (sex = male)$ \bthen $yes$
\State \belif $(age = 21-22) \band (priors = 2-3)$ \bthen $yes$
\State \belif $(priors > 3)$ \bthen $yes$
\State \belse $no$
\end{algorithmic}
\vspace{1mm}
\begin{algorithmic}
\State \bif $(age = 18-20) \band (sex = male)$ \bthen $yes$
\State \belif $(age = 21-22) \band (priors = 2-3)$ \bthen $yes$
\State \belif $(age = 23-25) \band (priors = 2-3)$ \bthen $yes$
\State \belif $(priors > 3)$ \bthen $yes$
\State \belse $no$
\end{algorithmic}
\caption{Example optimal rule lists that predict two-year recidivism for the
ProPublica data set (Feature Set~A, ${M=122}$), found by CORELS (${\Reg = 0.005}$), across 10 cross-validation folds.
Feature Set~A is a subset of Feature Set~B (Figure~\ref{fig:propublica}) that excludes race features.
Optimal rule lists found using the two feature sets are very similar.
The upper and lower rule lists are representative of~7 and~3 folds, respectively.
Each of the remaining~8 solutions is the same or similar to one of these,
with prefixes containing the same rules, up to a permutation, and the same default rule.
See Figure~\ref{fig:recidivism-rule-list-005} in Appendix~\ref{appendix:examples} for a complete listing.
}
\label{fig:recidivism-all-folds}
\end{figure}
\begin{figure}[t!]
\vspace{2mm}
\begin{algorithmic}
\State \bif $(location = transit~authority)$ \bthen $yes$
\State \belif $(stop~reason = suspicious~bulge)$ \bthen $yes$
\State \belif $(stop~reason = suspicious~object)$ \bthen $yes$
\State \belse $no$
\end{algorithmic}
\caption{An example rule list that predicts whether a weapon will be found on a
stopped individual who is frisked or searched, for the NYCLU stop-and-frisk data set.
Across 10 cross-validation folds, the other optimal rule lists found by CORELS~(${\Reg = 0.01}$)
contain the same or equivalent rules, up to a permutation.
See also Figure~\ref{fig:weapon-rule-list-04-01} in Appendix~\ref{appendix:examples}.
}
\label{fig:weapon-rule-list}
\end{figure}

\begin{figure}[t!]
\textbf{Weapon prediction $(\Reg = 0.01, \text{Feature Set~C})$}
\vspace{1mm}
\begin{algorithmic}
\State \bif $(stop~reason = suspicious~object)$ \bthen $yes$ 
\State \belif $(location = transit~authority)$ \bthen $yes$
\State \belse $no$
\end{algorithmic}
\vspace{1mm}
\textbf{Weapon prediction $(\Reg = 0.01, \text{Feature Set~D})$}
\vspace{1mm}
\begin{algorithmic}
\State \bif $(stop~reason = suspicious~object)$ \bthen $yes$ 
\State \belif $(inside~or~outside = outside)$ \bthen $no$
\State \belse $yes$
\end{algorithmic}
\vspace{1mm}
\textbf{Weapon prediction $(\Reg = 0.005, \text{Feature Set~C})$}
\begin{algorithmic}
\State \bif $(stop~reason = suspicious~object)$ \bthen $yes$ 
\State \belif $(location = transit~authority)$ \bthen $yes$
\State \belif $(location = housing~authority)$ \bthen $no$
\State \belif $(city = Manhattan)$ \bthen $yes$
\State \belse $no$
\end{algorithmic}
\vspace{1mm}
\textbf{Weapon prediction $(\Reg = 0.005, \text{Feature Set~D})$}
\vspace{1mm}
\begin{algorithmic}
\State \bif $(stop~reason = suspicious~object)$ \bthen $yes$ 
\State \belif $(stop~reason = acting~as~lookout)$ \bthen $no$
\State \belif $(stop~reason = fits~description)$ \bthen $no$
\State \belif $(stop~reason = furtive~movements)$ \bthen $no$
\State \belse $yes$
\end{algorithmic}
\caption{Example optimal rule lists for the NYPD stop-and-frisk data set, found by CORELS.
Feature Set~C contains attributes for `location' and `city', while Feature Set~D does not.
For each choice of regularization parameter and feature set, the rule lists learned by CORELS,
across all 10~cross-validation folds, contain the same or equivalent rules, up to a permutation,
with the exception of a single fold (Feature Set~C, ${\Reg = 0.005}$).
For a complete listing, see Figures~\ref{fig:cpw-rule-list} and~\ref{fig:cpw-noloc-rule-list}
in Appendix~\ref{appendix:examples}.
}
\label{fig:nypd}
\end{figure}

Figures~\ref{fig:weapon-rule-list} and~\ref{fig:nypd} show example optimal rule lists that
CORELS learns for the NYCLU (${\Reg = 0.01}$) and NYPD data sets.
Figure~\ref{fig:nypd} shows optimal rule lists that CORELS learns for the larger NYPD data set.

While our goal is to provide illustrative examples, and not to provide a
detailed analysis nor to advocate for the use of these specific models,
we note that these rule lists are short and easy to understand.
For the examples and regularization parameter choices in this section,
the optimal rule lists are relatively robust across cross-validation folds:
the rules are nearly the same, up to permutations of the prefix rules.
For smaller values of the regularization parameter, we observe less robustness,
as rule lists are allowed to grow in length.
For the sets of optimal rule lists represented in Figures~\ref{fig:propublica},
\ref{fig:recidivism-all-folds}, and~\ref{fig:weapon-rule-list},
each set could be equivalently expressed as a DNF rule;
\eg this is easy to see when the prefix rules all predict the positive class label
and the default rule predicts the negative class label.
Our objective is not designed to enforce any of these properties,
though some may be seen as desirable.

As we demonstrate in~\S\ref{sec:sparsity},
optimal rule lists learned by CORELS achieve accuracies that are competitive
with a suite of other models, including black box COMPAS scores.
See Appendix~\ref{appendix:examples} for additional listings of optimal rule lists found
by CORELS, for each of our prediction problems, across cross-validation folds,
for different regularization parameters~$\Reg$.

\subsection{Comparison of CORELS to the Black Box COMPAS Algorithm}
\label{sec:compas}

The accuracies of rule lists learned by CORELS are competitive with
scores generated by the black box COMPAS algorithm
at predicting two-year recidivism for the ProPublica data set (Figure~\ref{fig:compas-comparison}).
Across 10 cross-validation folds, optimal rule lists learned by CORELS
(Figure~\ref{fig:recidivism-all-folds}, ${\Reg = 0.005}$)
have a mean test accuracy of~0.665, with standard deviation~0.018.
The COMPAS algorithm outputs scores between~1 and~10,
representing low~(1-4), medium~(5-7), and high~(8-10) risk for recidivism.
As in the analysis by~\citet{LarsonMaKiAn16}, we interpret a medium or high score
as a positive prediction for two-year recidivism, and a low score as a negative prediction.
Across the 10 test sets, the COMPAS algorithm scores obtain
a mean accuracy of~0.660, with standard deviation~0.019.

\begin{figure}[t!]
\begin{center}
\includegraphics[trim={35mm, 0mm, 40mm, 0mm},
width=0.9\textwidth]{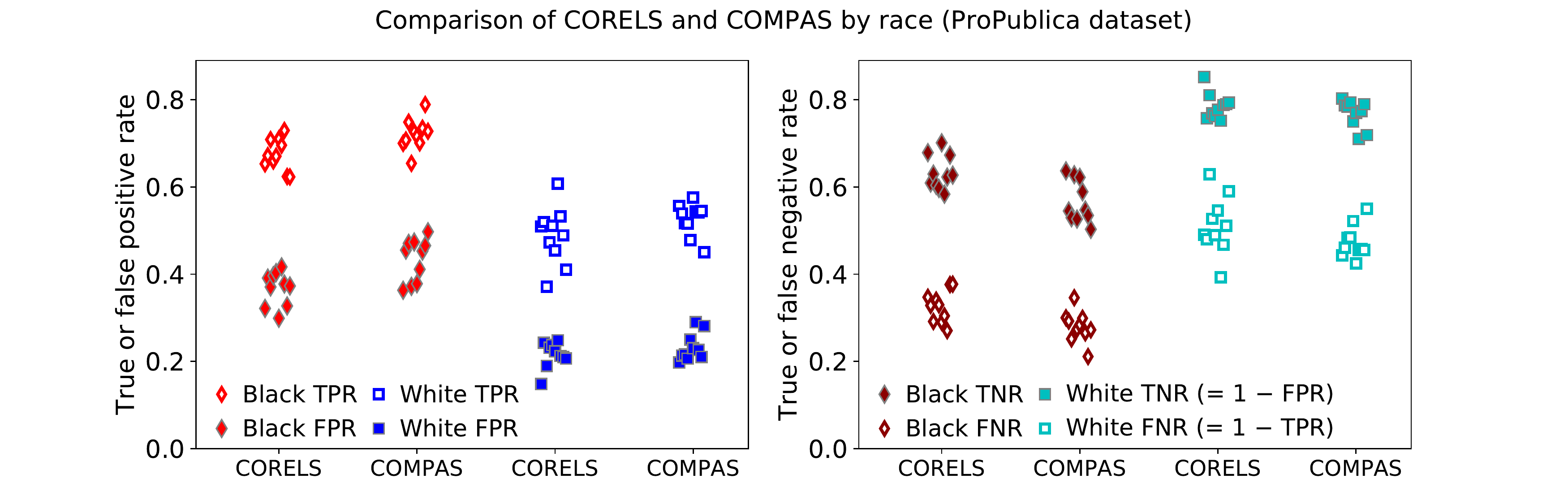}
\end{center}
\caption{Comparison of TPR and FPR (left), as well as TNR and FNR (right),
for different races in the ProPublica data set, for CORELS and COMPAS,
across 10 cross-validation folds.
%
%
%
}
\label{fig:tpr-fpr}
\end{figure}

Figure~\ref{fig:tpr-fpr} shows that CORELS and COMPAS perform similarly across both black and white individuals.
Both algorithms have much higher true positive rates (TPR's) and false positive rates (FPR's) for blacks than whites (left), and higher true negative rates (TNR's) and false negative rates (FNR's) for whites than blacks (right).
The fact that COMPAS has higher FPR's for blacks and higher FNR's for whites was a central observation motivating ProPublica's claim that COMPAS is racially biased~\citep{LarsonMaKiAn16}.
The fact that CORELS' models are so simple, with almost the same results as COMPAS, and contain only counts of past crimes, age, and gender, indicates possible explanations for the uneven predictions of both COMPAS and CORELS among blacks and whites.
In particular, blacks evaluated within Broward County tend to be younger and have longer criminal histories within the data set,
(on average, 4.4 crimes for blacks versus~2.6 crimes for whites)
leading to higher FPR's for blacks and higher FNR's for whites.
This aspect of the data could help to explain why ProPublica concluded that COMPAS was racially biased. 

Similar observations have been reported for other datasets, namely that complex machine learning models do not have an advantage over simpler transparent models \citep{tollenaar2013method,bushway2013there,ZengUsRu2017}.
There are many definitions of fairness, and it is not clear whether CORELS' models are fair either, but it is much easier to debate about the fairness of a model when it is transparent.
Additional fairness constraints or transparency constraints can be placed on CORELS' models if desired, though one would need to edit our bounds~(\S\ref{sec:framework}) and implementation~(\S\ref{sec:implementation}) to impose more constraints.

Regardless of whether COMPAS is racially biased (which our analysis does not indicate is necessarily true as long as criminal history and age are allowed to be considered as features),
COMPAS may have many other fairness defects that might be considered serious.
Many of COMPAS's survey questions are direct inquiries about socioeconomic status.
For instance, a sample COMPAS survey\footnote{A sample COMPAS survey contributed
by Julia Angwin, ProPublica, can be found at
\url{https://www.documentcloud.org/documents/2702103-Sample-Risk-Assessment-COMPAS-CORE.html}.} asks:
``Is it easy to get drugs in your neighborhood?,''
``How often do you have barely enough money to get by?,''
``Do you frequently get jobs that don't pay more than minimum wage?,''
``How often have you moved in the last~12 months?''
COMPAS's survey questions also ask about events that were not caused by the person who is being evaluated, such as:
``If you lived with both parents and they later separated, how old were you at the time?,''
``Was one of your parents ever sent to jail or prison?,''
``Was your mother ever arrested, that you know of?"

The fact that COMPAS requires over~130 questions to be answered, many of whose answers may not be verifiable, means that the computation of the COMPAS score is prone to errors.
Even the Arnold Foundation's ``public-safety assessment'' (PSA) score---which is completely transparent, and has only~9 factors---has been miscalculated in serious criminal trials, leading to a recent lawsuit~\citep{npr-bail:2017}.
It is substantially more difficult to obtain the information required to calculate COMPAS scores than PSA scores (with over~14 times the number of survey questions).
This significant discrepancy suggests that COMPAS scores are more fallible than PSA scores, as well as even simpler models, like those produced by CORELS.
Some of these problems could be alleviated by using only data within electronic records that can be automatically calculated, instead of using information entered by hand and/or collected via subjective surveys.

The United States government pays Northpointe (now called Equivant) to use COMPAS.
In light of our observations that CORELS is as accurate as COMPAS on a real-world data set where COMPAS is used in practice, CORELS predicts similarly to COMPAS for both blacks and whites, and CORELS' models are completely transparent, it is not clear what value COMPAS scores possess.
Our experiments also indicate that the proprietary survey data required to compute COMPAS scores has not boosted its prediction accuracy above that of transparent models in practice.

Risk predictions are important for the integrity of the judicial system; judges cannot be expected to keep entire databases in their heads to calculate risks, whereas models (when used correctly) can help to ensure equity.
Risk prediction models also have the potential to heavily impact how efficient the judicial system is, in terms of bail and parole decisions; efficiency in this case means that dangerous individuals are not released, whereas non-dangerous individuals are granted bail or parole.
High stakes decisions, such as these, are ideal applications for machine learning algorithms that produce transparent models from high dimensional data.

Currently, justice system data does not support highly accurate risk predictions, but current risk models are useful in practice, and these risk predictions will become more accurate as more and higher quality data are made available.

\subsection{Comparison of CORELS to a Heuristic Model for Weapon Prediction}
\label{sec:frisk}

CORELS generates simple, accurate models for the task of weapon prediction,
using the NYPD stop-and-frisk data set.
Our approach offers a principled alternative to heuristic models proposed by~\citet{Goel16},
who develop a series of regression models to analyze racial disparities
in New York City's stop-and-frisk policy for a related, larger data set.
In particular, the authors arrive at a heuristic that they suggest
could potentially help police officers more effectively decide when to
frisk and/or search stopped individuals, \ie when such
interventions are likely to discover criminal possession of a weapon (CPW).
Starting from a full regression model with~7,705 variables, the authors reduce this to a
smaller model with 98 variables; from this, they keep three variables with the largest coefficients.
This gives a heuristic model of the form ${ax + by + cz \ge T}$,
where
\begin{align}
x &= \one[stop~reason = suspicious~object] \nn \\
y &= \one[stop~reason = suspicious~bulge] \nn \\
z &= \one[additional~circumstances = sights~and~sounds~of~criminal~activity], \nn
\end{align}
and~$T$ is a threshold, such that the model predicts CPW when the threshold is met or exceeded.
We focus on their approach that uses a single threshold, rather than precinct-specific thresholds.
To increase ease-of-use, the authors round the coefficients
to the nearest integers, which gives ${(a, b, c) = (3, 1, 1)}$;
this constrains the threshold to take one of six values, ${T \in \{0, 1, 2, 3, 4, 5\}}$.
To employ this heuristic model in the field,
``\dots officers simply need to add at most three small, positive integers \dots
and check whether the sum exceeds a fixed threshold\dots'' \citep{Goel16}.

\begin{figure}[t!]
\begin{center}
\includegraphics[trim={12mm, 0mm, 24mm, 5mm},
width=0.71\textwidth]{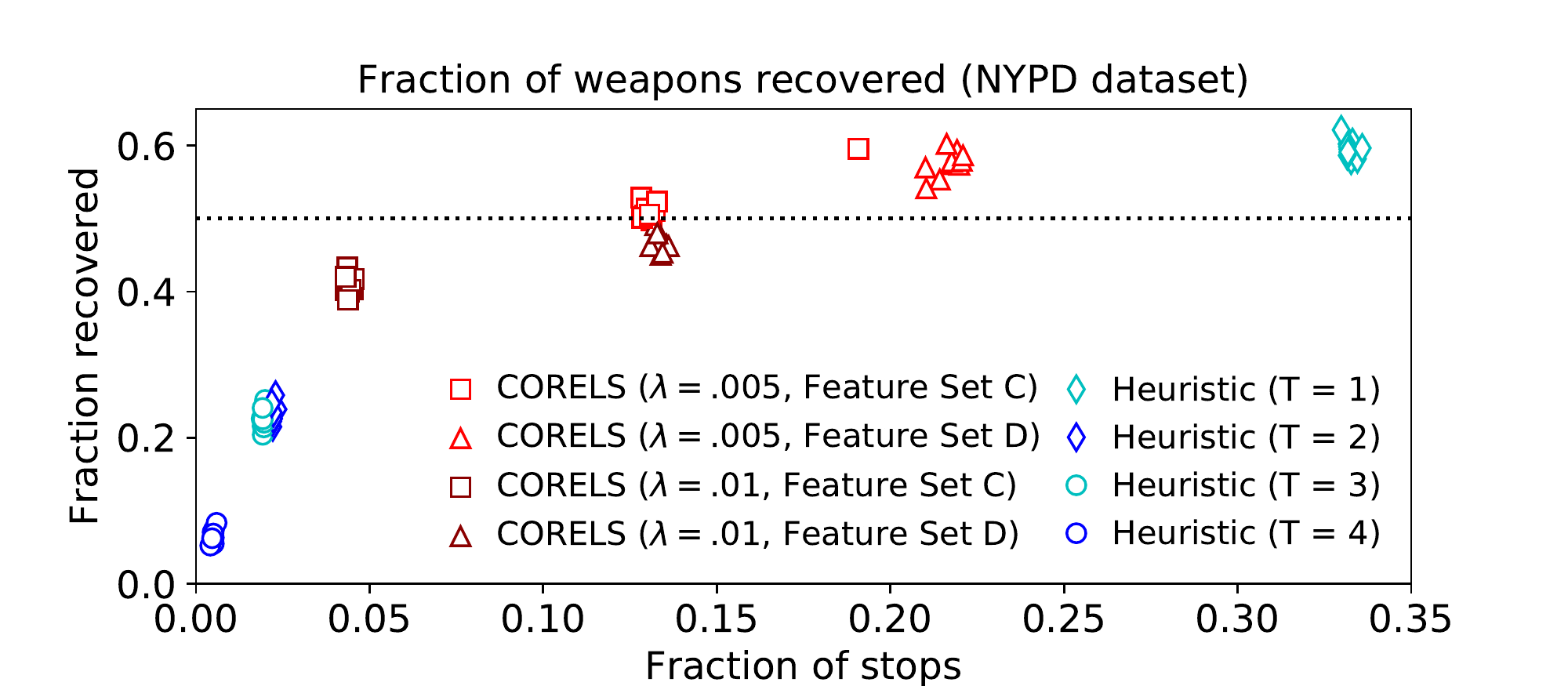}
\includegraphics[trim={12mm, 5mm, 24mm, 0mm},
width=0.71\textwidth]{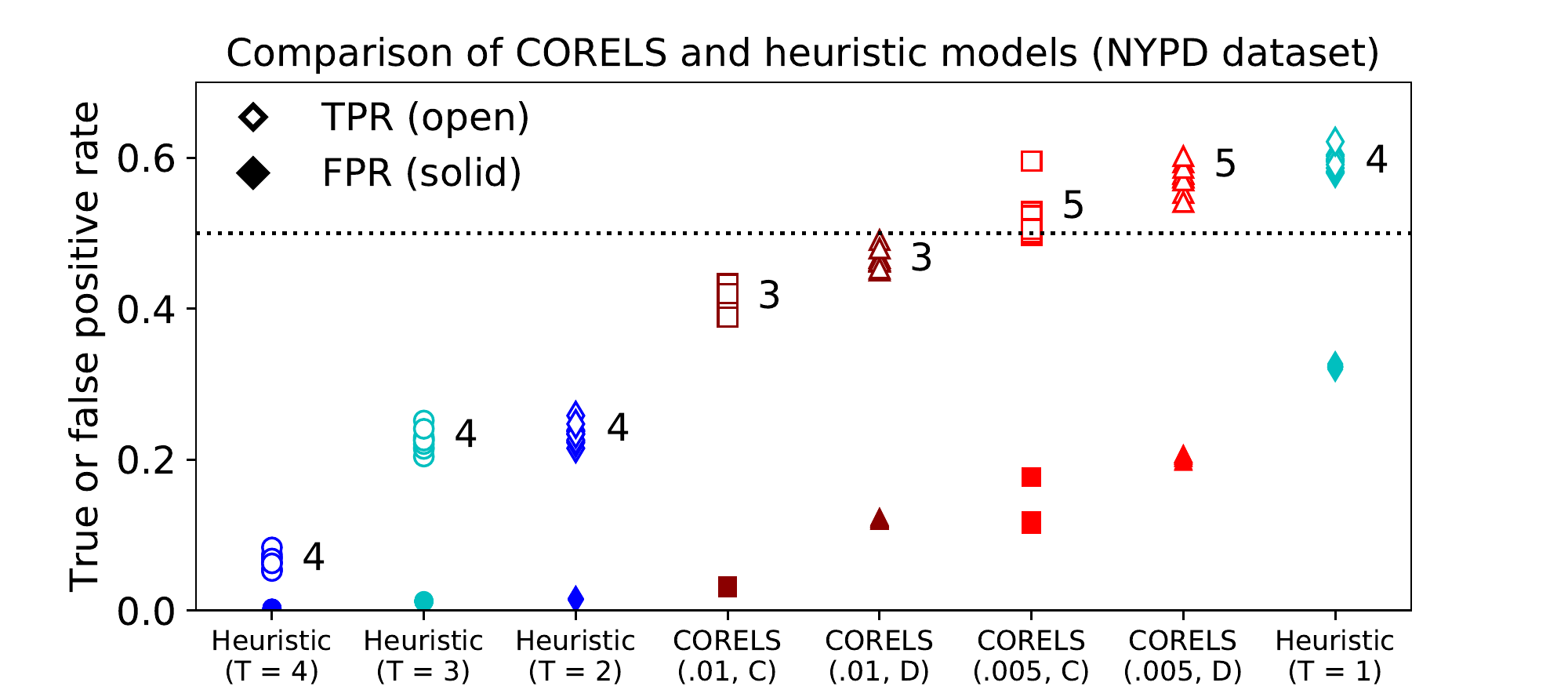}
\end{center}
\caption{Weapon prediction with the NYPD stop-and-frisk data set,
for various models learned by CORELS and the heuristic model by~\citet{Goel16},
across~10 cross-validation folds.
Note that the fraction of weapons recovered (top) is equal to the TPR (bottom, open markers).
Markers above the dotted horizontal lines at the value~0.5 correspond to models that
recover a majority of weapons (that are known in the data set).
Top: Fraction of weapons recovered as a function of the fraction of stops
where the individual was frisked and/or searched.
In the legend, entries for CORELS (red markers) indicate the regularization parameter~$(\Reg)$
and whether or not extra location features were used (``location'');
entries for the heuristic model (blue markers) indicate the threshold value~$(T)$.
The results we report for the heuristic model
are our reproduction of the results reported in Figure 9 by~\citet{Goel16}
(first four open circles in that figure, from left to right; we exclude the trivial open circle
showing 100\% of weapons recovered at 100\% of stops, obtained by setting the threshold at 0).
Bottom: Comparison of TPR (open markers) and FPR (solid markers) for various
CORELS and heuristic models.
Models are sorted left-to-right by TPR.
Markers and abbreviated horizontal tick labels correspond to the legend in the top~figure.
Numbers in the plot label model size; there was no variation in model size across folds,
except for a single fold for CORELS (${\Reg = 0.005}$, Feature Set~C), which found a model of size~6.
}
\label{fig:frisk}
\end{figure}

Figure~\ref{fig:frisk} directly compares various models learned by CORELS to the heuristic models,
using the same data set as~\citet{Goel16} and 10-fold cross-validation.
Recall that we train on resampled data to correct for class imbalance;
we evaluate with respect to test sets that have been formed without resampling.
For CORELS, the models correspond to the rule lists illustrated in Figure~\ref{fig:nypd}
from Section~\ref{sec:examples}, and Figures~\ref{fig:cpw-rule-list} and~\ref{fig:cpw-noloc-rule-list}
in Appendix~\ref{appendix:examples}, we consider both Feature Sets~C and~D
and both regularization parameters ${\Reg = 0.005}$ and~0.01.
The top panel plots the fraction of weapons recovered as a function of the fraction of
stops where the individual was frisked and/or searched.
\citet{Goel16} target models that efficiently recover a majority of weapons
(while also minimizing racial disparities, which we do not address here).
Interestingly, the models learned by CORELS span a significant region that is not available
to the heuristic model, which would require larger or non-integer parameters to access the region.
The region is possibly desirable, since it includes models (${\Reg = 0.005}$, bright red)
that recover a majority (${\ge 50\%}$) of weapons (that are known in the data set).
More generally, CORELS' models all recover at least ${40\%}$ of weapons
on average, \ie more weapons than any of the heuristic models with~${T \ge 2}$,
which recover less than 25\% of weapons on average.
At the same time, CORELS' models all require well under 25\% of stops---significantly
less than the heuristic model with~${T = 1}$, which requires over 30\% of stops
to recover a fraction of weapons comparable to the CORELS model that recovers the most weapons.

The bottom panel in Figure~\ref{fig:frisk} plots both TPR and FPR and labels model size,
for each of the models in the top panel.
For the heuristic, we define model size as the number of model parameters;
for CORELS, we use the number of rules in the rule list,
which is equal to the number of leaves when we view a rule list as a decision tree.
The heuristic models all have~4 parameters, while the different CORELS models have
either~3 or approximately~5 rules.
CORELS' models are thus approximately as small, interpretable, and transparent
as the heuristic models; furthermore, their predictions are straightforward
to compute, without even requiring arithmetic.

\subsection{Predictive Performance and Model Size for CORELS and Other Algorithms}
\label{sec:sparsity}

\begin{figure}[t!]
\begin{center}
\includegraphics[trim={2mm, 0mm, 101mm, 0mm}, clip,
width=0.55\textwidth]{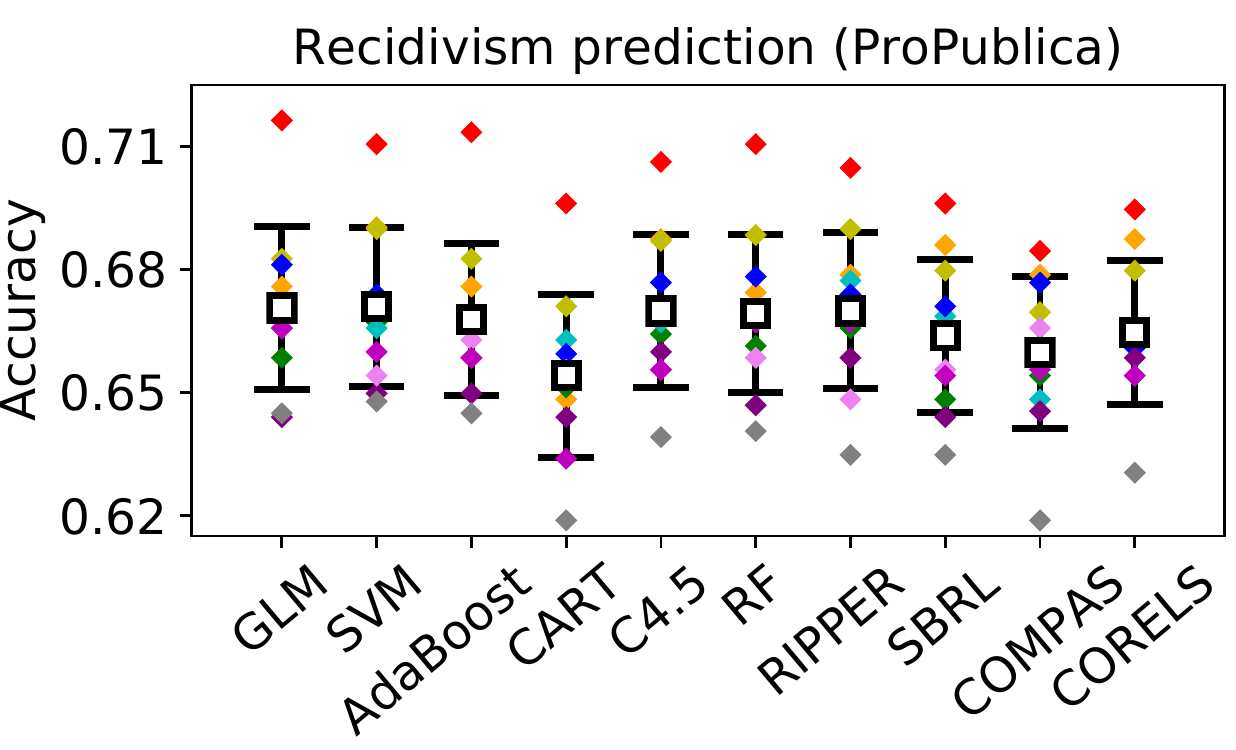}
\vspace{-7mm}
\end{center}
\caption{Two-year recidivism prediction for the ProPublica COMPAS data set.
Comparison of CORELS and a panel of nine other algorithms:
logistic regression~(GLM), support vector machines~(SVM),
AdaBoost, CART, C4.5, random forests~(RF), RIPPER,
scalable Bayesian rule lists~(SBRL), and COMPAS.
For CORELS, we use regularization parameter~${\Reg=0.005}$.
}
\label{fig:compas-comparison}
\end{figure}

We ran a 10-fold cross validation experiment using CORELS
and eight other algorithms:
logistic regression, support vector machines (SVM), AdaBoost, CART, C4.5,
random forest~(RF), RIPPER, and scalable Bayesian rule lists (SBRL).\footnote{For
SBRL, we use the C implementation at \url{https://github.com/Hongyuy/sbrlmod}.
By default, SBRL sets ${\eta = 3}$, ${\lambda = 9}$,
the number of chains to 11 and iterations to 1,000.}
We use standard R packages, with default parameter settings,
for the first seven algorithms.\footnote{For CART, C4.5 (J48), and RIPPER,
we use the R packages rpart, RWeka, and caret, respectively.
By default, CART uses complexity parameter ${cp = 0.01}$ and C4.5 uses complexity parameter ${C = 0.25}$.
}
We use the same antecedent sets as input to the two rule list learning algorithms, CORELS and SBRL;
for the other algorithms, the inputs are binary feature sets corresponding to the
single clause antecedents in the aforementioned antecedent sets (see Appendix~\ref{appendix:data}).

Figure~\ref{fig:compas-comparison} shows that for the ProPublica data set,
there were no statistically significant differences in test accuracies across algorithms,
the difference between folds was far larger than the difference between algorithms.
These algorithms also all perform similarly to the black box COMPAS algorithm.
Figure~\ref{fig:weapon-comparison} shows that for the NYCLU data set,
logistic regression, SVM, and AdaBoost have the highest TPR's and also the highest FPR's;
we show TPR and FPR due to class imbalance.
For this problem, CORELS obtains an intermediate TPR, compared to other algorithms,
while achieving a relatively low FPR.
We conclude that CORELS produces models whose predictive performance is comparable to or better than
those found via other algorithms.

\begin{figure}[t!]
\begin{center}
\includegraphics[trim={2mm, 10mm, 2mm, 0mm},
width=0.9\textwidth]{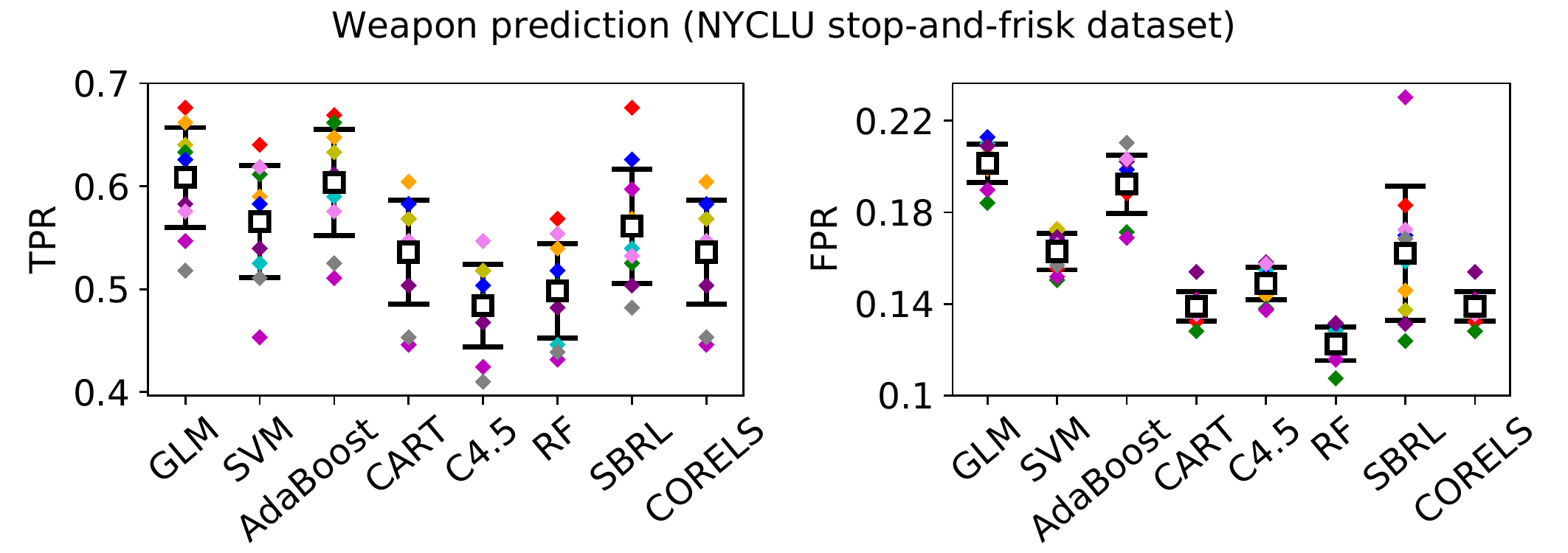}
\end{center}
\caption{TPR (left) and FPR (right) for the test set,
for CORELS and a panel of seven other algorithms,
for the weapon prediction problem with the NYCLU stop-and-frisk data set.
Means (white squares),
standard deviations (error bars),
and values (colors correspond to folds),
for 10-fold cross-validation experiments.
For CORELS, we use~${\Reg=0.01}$.
Note that we were unable to execute RIPPER for the NYCLU problem.
}
\label{fig:weapon-comparison}
\end{figure}
\begin{figure}[t!]
\begin{center}
\includegraphics[trim={12mm, 5mm, 24mm, 5mm},
width=0.7\textwidth]{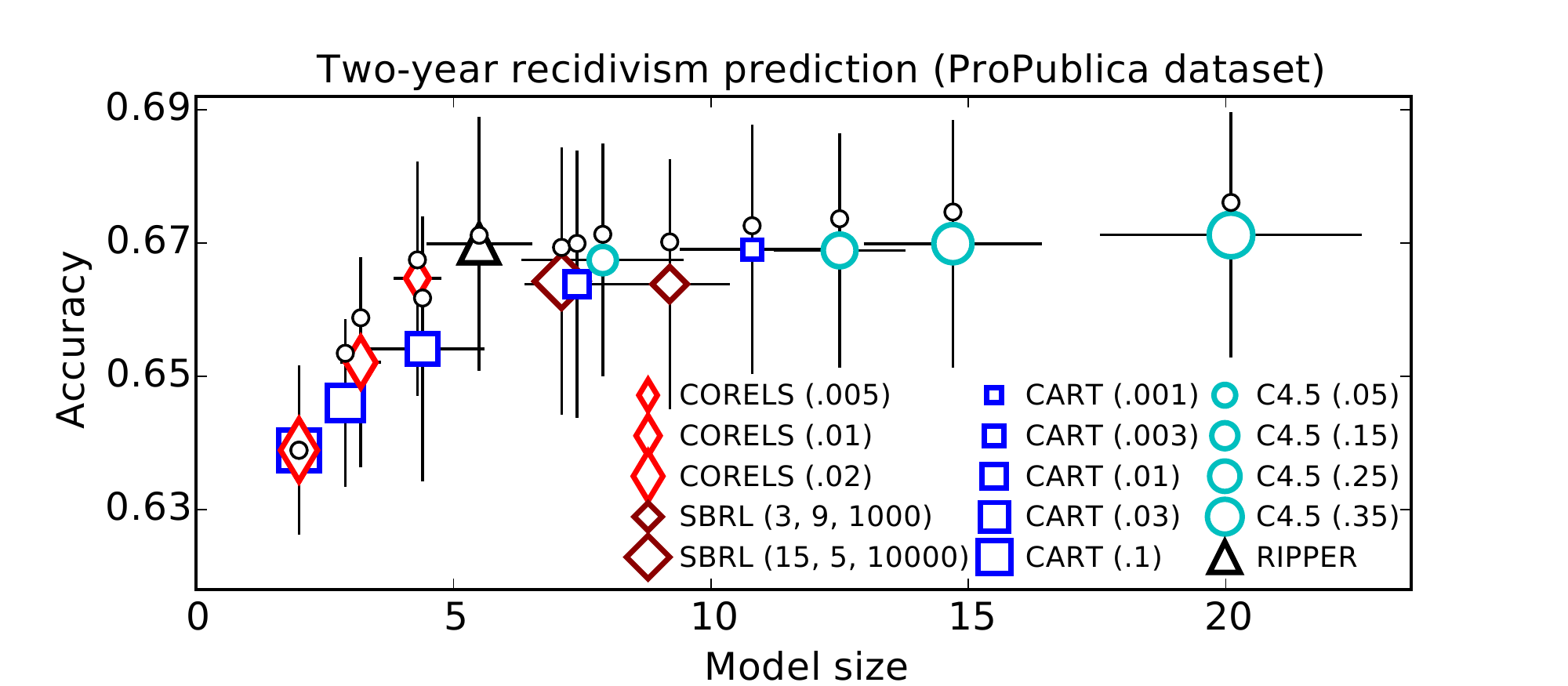}
\end{center}
\caption{Training and test accuracy as a function of model size, across different methods,
for two-year recidivism prediction with the ProPublica COMPAS data set.
In the legend, numbers in parentheses are algorithm parameters that we vary
for CORELS~($\Reg$), CART~($cp$), C4.5~($C$), and SBRL ($\eta$, $\lambda$, $i$),
where~$i$ is the number of iterations.
Legend markers and error bars indicate means and standard deviations,
respectively, across cross-validation folds.
Small circles mark training accuracy means.
None of the models exhibit significant overfitting;
mean training accuracy never exceeds mean test accuracy
by more than about 0.01.
}
\label{fig:sparsity-compas}
\end{figure}

Figures~\ref{fig:sparsity-compas} and~\ref{fig:sparsity-weapon} summarize differences
in predictive performance and model size
for CORELS and other tree (CART, C4.5) and rule list (RIPPER, SBRL) learning algorithms.
Here, we vary different algorithm parameters, and increase the number of iterations for SBRL to 10,000.
For two-year recidivism prediction with the ProPublica data set (Figure~\ref{fig:sparsity-compas}),
we plot both training and test accuracy,
as a function of the number of leaves in the learned model.
Due to class imbalance for the weapon prediction problem with the NYCLU stop-and-frisk data set
(Figure~\ref{fig:sparsity-weapon}), we plot both true positive rate (TPR) and false positive rate (FPR),
again as a function of the number of leaves.
For both problems, CORELS can learn short rule lists without sacrificing predictive performance.
For listings of example optimal rule lists that correspond to the results
for CORELS summarized here, see Appendix~\ref{appendix:examples}.
Also see Figure~\ref{fig:sparsity-cpw} in Appendix~\ref{appendix:cpw}; it uses the larger
NYPD data set and is similar to Figure~\ref{fig:sparsity-weapon}.

\begin{figure}[t!]
\begin{center}
\includegraphics[trim={17mm, 0mm, 27mm, 0mm},
width=0.7\textwidth]{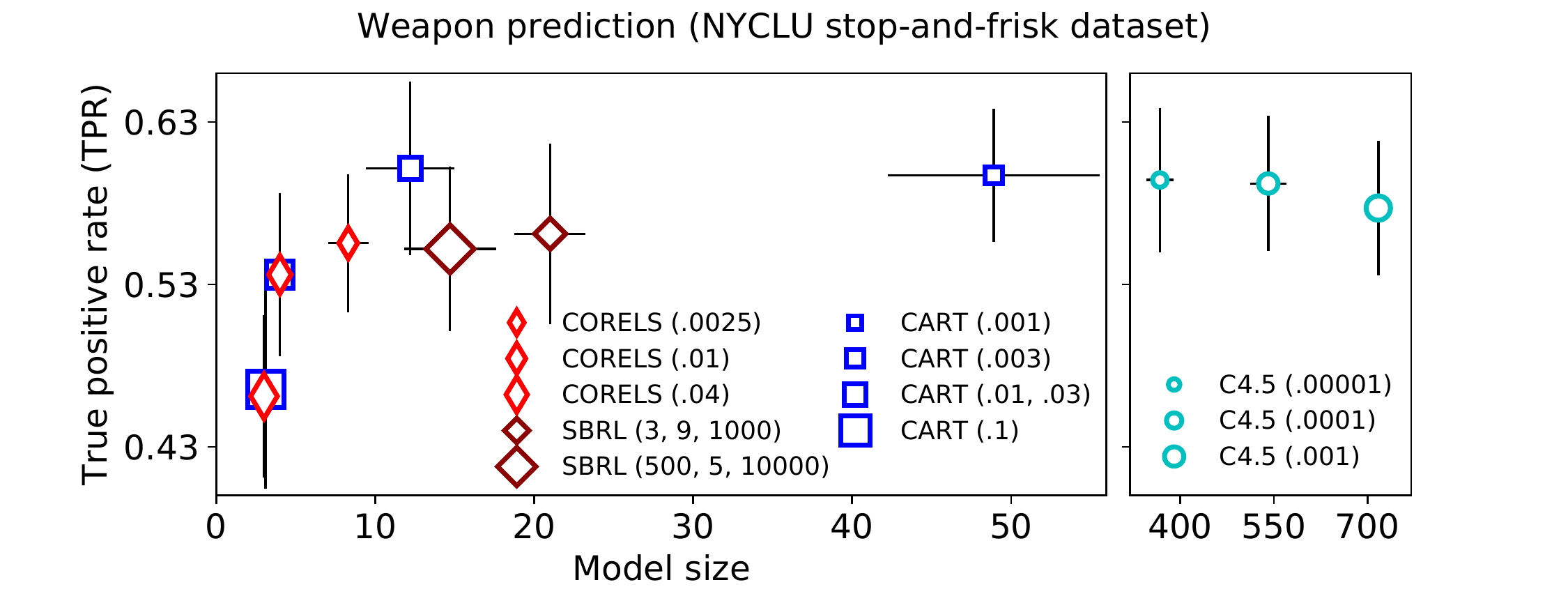}
\includegraphics[trim={17mm, 10mm, 27mm, 4mm},
width=0.7\textwidth]{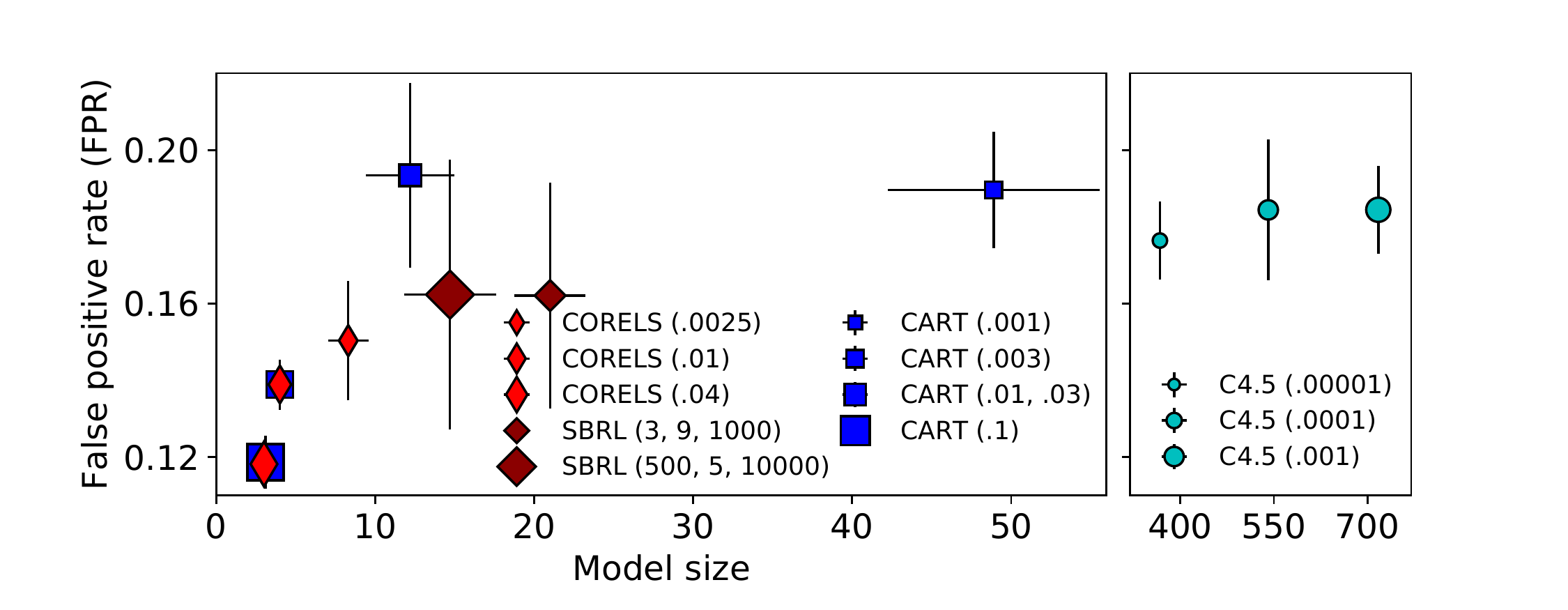}
\end{center}
\caption{TPR (top) and FPR (bottom)
for the test set, as a function of model size, across different methods,
for weapon prediction with the NYCLU stop-and-frisk data set.
In the legend, numbers in parentheses are algorithm parameters,
as in Figure~\ref{fig:sparsity-compas}.
Legend markers and error bars indicate means and standard deviations,
respectively, across cross-validation folds.
%
%
C4.5 finds large models for all tested parameters.
}
\label{fig:sparsity-weapon}
\end{figure}

In Figure~\ref{fig:recidivism-all-folds}, we used CORELS to identify short
rule lists, depending on only three features---age, prior convictions, and sex---that
achieve test accuracy comparable to COMPAS
\citep[Figure~\ref{fig:compas-comparison}, also see][]{AngelinoLaAlSeRu17-kdd}.
If we restrict CORELS to search the space of rule lists formed from only age
and prior convictions (${\Reg=0.005}$), the optimal rule lists it finds achieve
test accuracy that is again comparable to COMPAS.
CORELS identifies the same rule list across all 10 folds of 10-fold
cross-validation experiments (Figure~\ref{fig:compas-two-features}).
In work subsequent to ours~\citep{AngelinoLaAlSeRu17-kdd}, \citet{Dressel2018}
confirmed this result, in the sense that they used logistic regression to
construct a linear classifier with age and prior convictions,
and also achieved similar accuracy to COMPAS.
However, computing a logistic regression model requires multiplication and
addition, and their model cannot easily be computed, in the sense that it
requires a calculator (and thus is potentially error-prone).
Our rule lists require no such computation.

\begin{figure}[t!]
\begin{algorithmic}
\State \bif $(priors > 3)$ \bthen $yes$
\State \belif $(age < 25) \band (priors = 2-3)$ \bthen $yes$
\State \belse $no$
\end{algorithmic}
\caption{When restricted to two features (age, priors), CORELS (${\Reg=0.005}$)
finds the same rule list across 10 cross-validation folds.}
\label{fig:compas-two-features}
\end{figure}

\subsection{CORELS Execution Traces, for Different Regularization Parameters}
\label{sec:reg-param}
In this section, we illustrate several views of CORELS execution traces,
for the NYCLU stop-and-frisk data set with ${M = 46}$ antecedents,
for the same three regularization parameters (${\Reg = .04, .01, .025}$)
as in Figure~\ref{fig:sparsity-weapon}.

\begin{table}[t!]
\centering
CORELS with different regularization parameters (NYCLU stop-and-frisk data set) \\
\vspace{2mm}
\begin{tabular}{l | c | c | c | c}
& Total & Time to & Max evaluated & Optimal \\
$\lambda$ & time (s) & optimum (s) & prefix length & prefix length \\
\hline
.04 & ~.61 (.03) & .002 (.001) & 6 & 2 \\
.01 & 70 (6) & .008 (.002) & 11 & 3 \\
.0025 & 1600 (100) & 56 (74) & 16-17 & 6-10 \\
\hline
\end{tabular}
\begin{tabular}{l | c | c | c}
\hline
& Lower bound & Total queue &  Max queue \\
$\lambda$ &~ evaluations ($\times 10^6$) ~&~ insertions ($\times 10^3$) ~&~ size ($\times 10^3$) \\
\hline
.04 & ~.070 (.004) & 2.2 (.1) & ~.9 (.1) \\
.01 & 7.5 (.6) & 210 (20) & 130 (10) \\
.0025 & 150 (10) & 4400 (300) & 2500 (170) \\
\end{tabular}
\caption{Summary of CORELS executions, for the NYCLU stop-and-frisk data set (${M = 46}$),
for same three regularization parameter ($\Reg$) values as in Figure~\ref{fig:sparsity-weapon}.
The columns report the total execution time,
time to optimum, maximum evaluated prefix length, optimal prefix length,
number of times we completely evaluate a prefix~$\Prefix$'s lower bound~$b(\Prefix, \x, \y)$,
total number of queue insertions (this number is equal to the number of cache insertions),
and the maximum queue size.
For prefix lengths, we report single values or ranges corresponding to the minimum and maximum observed values;
in the other columns, we report means (and standard deviations) over 10 cross-validation folds.
See also Figures~\ref{fig:weapon-reg-execution} and~\ref{fig:queue-weapon-reg}.
}
\vspace{4mm}
\label{tab:weapon-reg}
\end{table}

Table~\ref{tab:weapon-reg} summarizes execution traces across all 10 cross-validation folds.
For each value of~$\Reg$, CORELS achieves the optimum in a small fraction of the total execution time.
As~$\Reg$ decreases, these times increase because the search problems become more difficult,
as is summarized by the observation that CORELS must evaluate longer prefixes;
consequently, our data structures grow in size.
We report the total number of elements inserted into the queue and the maximum queue size;
recall from~\S\ref{sec:implementation} that the queue elements correspond to the trie's leaves,
and that the symmetry-aware map elements correspond to the trie's nodes.

\begin{figure}[t!]
\begin{center}
\includegraphics[trim={35mm 0mm 35mm 15mm},
width=0.81\textwidth]{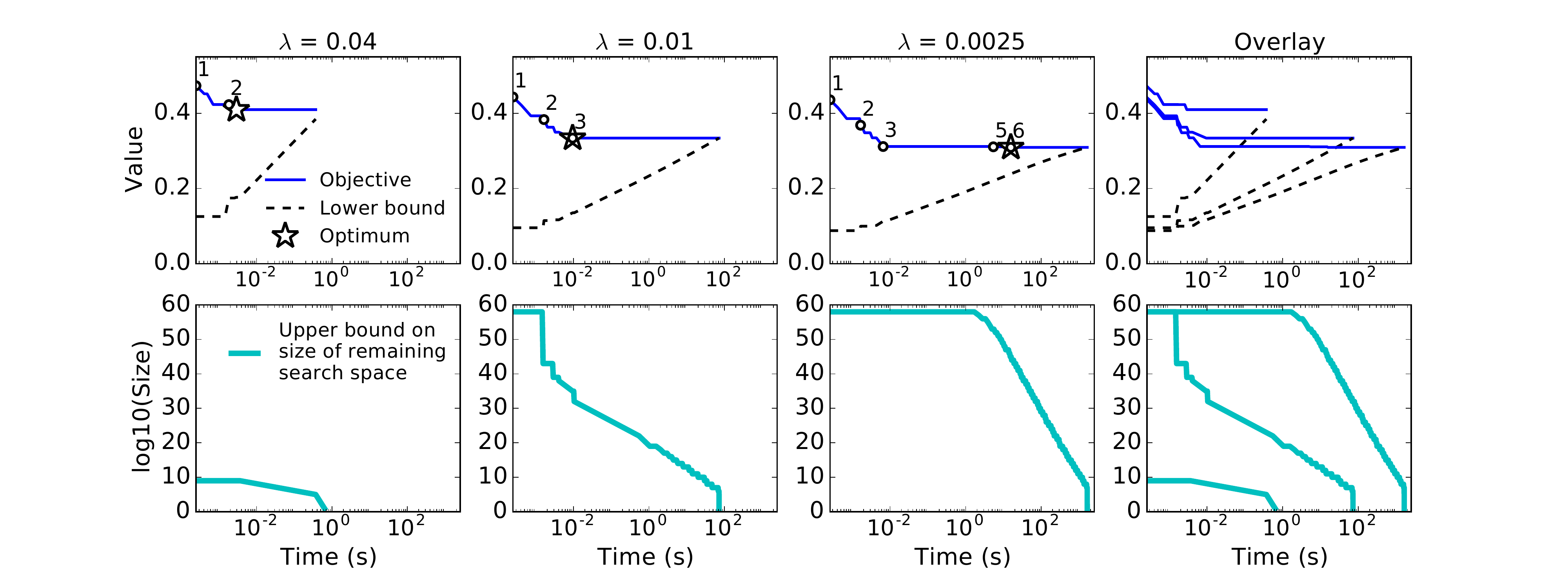}
\end{center}
\vspace{-5mm}
\caption{Example executions of CORELS, for the NYCLU stop-and-frisk data set (${M = 46}$).
See also Table~\ref{tab:weapon-reg} and Figure~\ref{fig:queue-weapon-reg}.
Top: Objective value (solid line) and lower bound (dashed line) for CORELS,
as a function of wall clock time (log scale).
Numbered points along the trace of the objective value
indicate when the length of the best known rule list changes
and are labeled by the new length.
For each value of~$\Reg$, a star marks the optimum objective value
and time at which it was achieved.
Bottom: $\lfloor \log_{10} \Remaining(\CurrentObj, \Queue) \rfloor$,
as a function of wall clock time (log scale),
where~$\Remaining(\CurrentObj, \Queue)$
is the upper bound on remaining search space size
(Theorem~\ref{thm:remaining-eval-fine}).
Rightmost panels: For visual comparison, we overlay the execution traces
from the panels to the left, for the three different values of~$\Reg$.
}
\label{fig:weapon-reg-execution}
\end{figure}
\begin{figure}[t!]
\begin{center}
\includegraphics[trim={30mm 0mm 30mm 3mm},
width=0.92\textwidth]{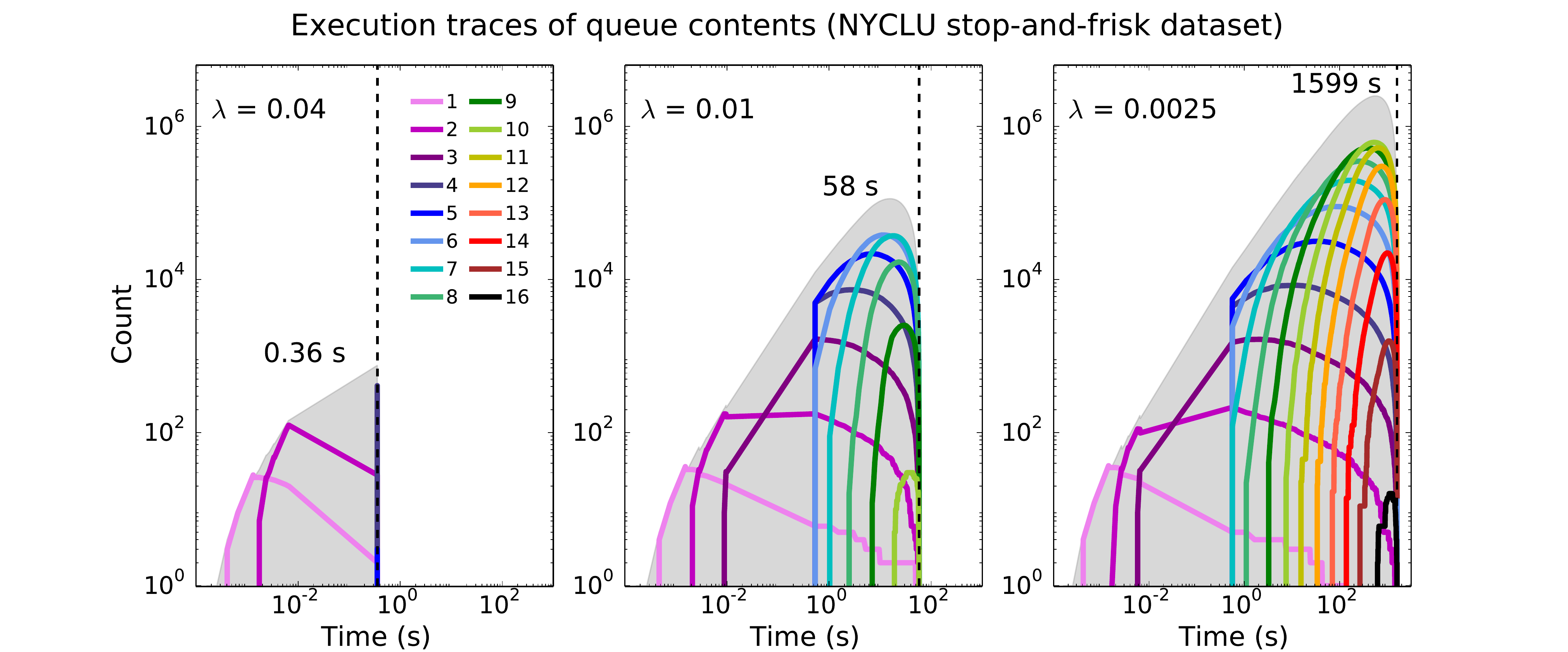}
\end{center}
\vspace{-5mm}
\caption{Summary of CORELS' logical queue,
for the NYCLU stop-and-frisk data set (${M = 46}$),
for same three regularization parameters as in Figure~\ref{fig:sparsity-weapon}
and Table~\ref{tab:weapon-reg}.
Solid lines plot the numbers of prefixes in the logical queue (log scale), colored by length (legend),
as a function of wall clock time (log scale).
All plots are generated using a single, representative cross-validation training set.
For each execution, the gray shading fills in the area beneath the total number
of queue elements, \ie the sum over all lengths;
we also annotate the total time in seconds, marked with a dashed vertical line.
}
\label{fig:queue-weapon-reg}
\end{figure}

The upper panels in Figure~\ref{fig:weapon-reg-execution} plot example execution traces,
from a single cross-validation fold, of both the current best objective value~$\CurrentObj$
and the lower bound~$b(\Prefix, \x, \y)$ of the prefix~$\Prefix$ being evaluated.
These plots illustrate that CORELS certifies optimality
when the lower bound matches the objective value.
The lower panels in Figure~\ref{fig:weapon-reg-execution} plot corresponding traces of
an upper bound on the size of the remaining search space (Theorem~\ref{thm:remaining-eval-fine}),
and illustrate that as~$\Reg$ decreases, it becomes more difficult to eliminate regions of the search~space.
For Figure~\ref{fig:weapon-reg-execution}, we dynamically and incrementally
calculate~$\lfloor \log_{10} \Remaining(\CurrentObj, \Queue) \rfloor$,
which adds some computational overhead; we do not calculate this elsewhere unless noted.

Figure~\ref{fig:queue-weapon-reg} visualizes the elements in CORELS' logical queue,
for each of the executions in Figure~\ref{fig:weapon-reg-execution}.
Recall from~\S\ref{sec:gc} that the logical queue corresponds to elements in the
(physical) queue that have not been garbage collected from the trie; these are prefixes that
CORELS has already evaluated and whose children the algorithm plans to evaluate next.
As an execution progresses, longer prefixes are placed in the queue;
as~$\Reg$ decreases, the algorithm must spend more time evaluating longer and longer prefixes.

\subsection{Efficacy of CORELS Algorithm Optimizations}
\label{sec:ablation}

\begin{table}[t!]
\centering
Per-component performance improvement (ProPublica data set) \\
\vspace{1mm}
\begin{tabular}{l | c  r | c | c}
& Total time & Slow- & Time to & Max evaluated \\
Algorithm variant & (min) & down & optimum (s) & prefix length \\
\hline
CORELS & ~.98 (.6) & --- & ~~1 (1) & 5 \\
No priority queue (BFS) & 1.03 (.6) & 1.1$\times$ & ~~2 (4) & 5 \\
No support bounds & ~1.5 (.9) & 1.5$\times$ & ~~1 (2) & 5 \\
No lookahead bound & ~12.3 (6.2) & 13.3$\times$ & ~~1 (1) & 6 \\
No symmetry-aware map & ~~~9.1 (6.4) & 8.4$\times$ & ~~2 (3) & 5 \\
No equivalent points bound* & $>$130 (2.6) & $>$180$\times$ & $>$1400 (2000) & $\ge$11 \\
\hline
\end{tabular}
\begin{tabular}{l | c | c | c}
\hline
 & Lower bound & Total queue &  Max queue \\
Algorithm variant & evaluations ($\times 10^6$) & insertions ($\times 10^6$) &~ size ($\times 10^6$) \\
\hline
CORELS & 26 (15) & .29 (.2) & .24 (.1) \\
No priority queue (BFS) & 27 (16) & .33 (.2) & .20 (.1) \\
No support bounds & 42 (25) & .40 (.2) & .33 (.2) \\
No lookahead bound & 320 (160) & 3.6 (1.8) & 3.0 (1.5) \\
No symmetry-aware map & 250 (180) & 2.5 (1.7) & 2.4 (1.7) \\
No equivalent points bound* & $>$940 (5)~~~~~ & $>$510 (1.1)~~~ & $>$500 (1.2)~~~ \\
\end{tabular}
\caption{Per-component performance improvement, for the ProPublica data set
(${\Reg = 0.005}$, ${M = 122}$).
The columns report the total execution time,
time to optimum, maximum evaluated prefix length,
number of times we completely evaluate a prefix~$\Prefix$'s lower bound~$b(\Prefix, \x, \y)$,
total number of queue insertions (which is equal to the number of cache insertions),
and maximum logical queue size.
The first row shows CORELS; subsequent rows show variants
that each remove a specific implementation optimization or bound.
(We are not measuring the cumulative effects of removing a sequence of components.)
All rows represent complete executions that certify optimality,
except those labeled `No equivalent points bound,'
for which each execution was terminated due to memory constraints,
once the size of the cache reached ${5 \times 10^8}$ elements,
after consuming $\sim$250GB RAM.
In all but the final row and column, we report means
(and standard deviations) over 10 cross-validation folds.
We also report the  mean slowdown in total execution time,
with respect to CORELS.
In the final row, we report the mean (and standard deviation) of the
incomplete execution time and corresponding slowdown,
and a lower bound on the mean time to optimum;
in the remaining fields, we report minimum values across folds.
See also Figure~\ref{fig:queue}. \\
*~Only 7 out of 10 folds achieve the optimum before being terminated.
}
\vspace{4mm}
\label{tab:ablation}
\end{table}

This section examines the efficacy of each of our bounds and data structure optimizations.
We remove a single bound or data structure optimization from our final implementation and measure
how the performance of our algorithm changes.
We examine these performance traces on both the NYCLU and the ProPublica data sets,
and highlight the result that on different problems, the relative performance improvements
of our optimizations can vary.

\begin{figure}[t!]
\begin{center}
\includegraphics[trim={0mm 0mm 0mm 15mm}, width=\textwidth]{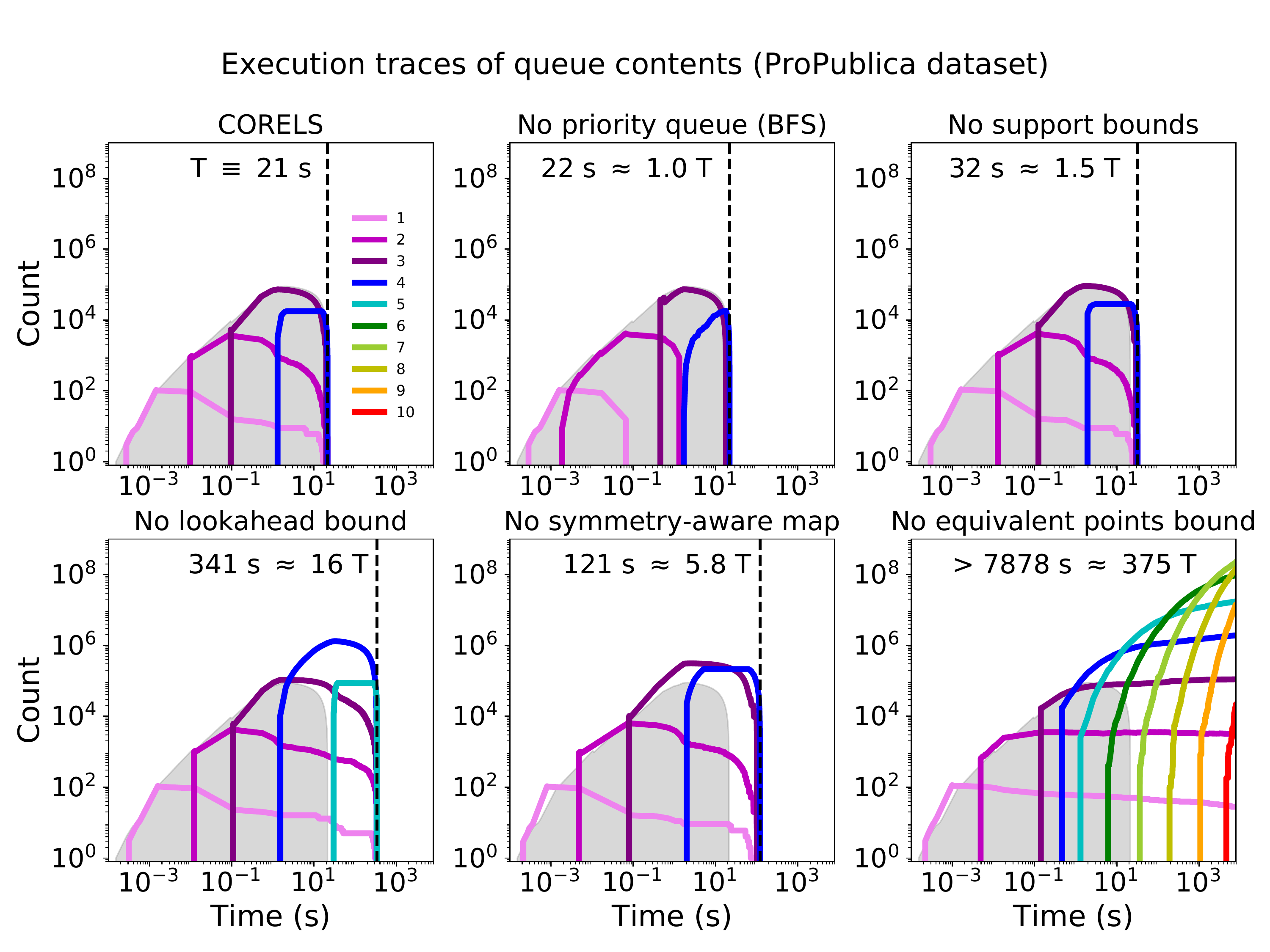}
\end{center}
\vspace{-5mm}
\caption{Summary of the logical queue's contents, for full CORELS (top left)
and five variants that each remove a specific implementation optimization or bound,
for the ProPublica data set (${\Reg = 0.005}$, ${M = 122}$).  See also Table~\ref{tab:ablation}.
Solid lines plot the numbers of prefixes in the logical queue (log scale), colored by length (legend),
as a function of wall clock time (log scale).
All plots are generated using a single, representative cross-validation training set.
The gray shading fills in the area beneath the total number of
queue elements for CORELS,
\ie the sum over all lengths in the top left figure.
For comparison, we replicate the same gray region
in the other five subfigures.
For each execution, we indicate the total time in seconds,
relative to the full CORELS implementation (T = 21 s),
and with a dashed vertical line.
The execution without the equivalent points bound (bottom right) is incomplete.
}
\label{fig:queue}
\end{figure}

Table~\ref{tab:ablation} provides summary statistics for experiments using the full
CORELS implementation (first row) and five variants (subsequent rows) that each remove a specific optimization:
(1)~Instead of a priority queue (\S\ref{sec:queue}) ordered by the objective lower bound,
we use a queue that implements breadth-first search (BFS).
(2)~We remove checks that would trigger pruning via
our lower bounds on antecedent support (Theorem~\ref{thm:min-capture})
and accurate antecedent support (Theorem~\ref{thm:min-capture-correct}).
(3)~We remove the effect of our lookahead bound (Lemma~\ref{lemma:lookahead}),
which otherwise tightens the objective lower bound by an amount equal to the regularization parameter~$\Reg$.
(4)~We disable the symmetry-aware map (\S\ref{sec:pmap}), our data structure
that enables pruning triggered by the permutation bound (Corollary~\ref{thm:permutation}).
(5)~We do not identify sets of equivalent points, which we otherwise use to tighten the
objective lower bound via the equivalent points bound (Theorem~\ref{thm:identical}).

\begin{table}[t!]
\centering
Per-component performance improvement (NYCLU stop-and-frisk data set) \\
\vspace{1mm}
\begin{tabular}{l | c  r | c | c}
& Total & Slow- & Time to & Max evaluated \\
Algorithm variant & time (min) & down & optimum ($\mu$s) & prefix length \\
\hline
CORELS & ~~~1.1 (.1) & --- & 8.9 (.1) & 11 \\
No priority queue (BFS) & ~~~2.2 (.2) & 2.0$\times$ & 110 (10) & 11 \\
No support bounds & ~~~1.2 (.1) & 1.1$\times$ & 8.8 (.8) & 11 \\
No lookahead bound & ~~~1.7 (.2) & 1.6$\times$ & ~7.3 (1.8) & 11-12 \\
No symmetry-aware map & $>$ 73 (5) & $>$ 68$\times$ & $>$ 7.6 (.4)~~~~ & $>$ 10~~~ \\
No equivalent points bound & ~~~~~4 (.3) & 3.8$\times$ & 6.4 (.9) & 14 \\
\hline
\end{tabular}
\begin{tabular}{l | c | c | c}
\hline
 & Lower bound & Total queue &  Max queue \\
Algorithm variant & evaluations ($\times 10^6$) & insertions ($\times 10^5$) &~ size ($\times 10^5$) \\
\hline
CORELS & ~7 (1) & 2.0 (.2) & 1.3 (.1) \\
No priority queue (BFS) & 14 (1) & 4.1 (.4) & 1.4 (.1) \\
No support bounds & ~8 (1) & 2.1 (.2) & 1.3 (.1) \\
No lookahead bound & 11 (1) & 3.2 (.3) & 2.1 (.2) \\
No symmetry-aware map & $>$ 390 (40)~~~~ & $>$ 1000 (0)~~~~~~ & $>$ 900 (10)~~~ \\
No equivalent points bound & 33 (2) & 9.4 (.7) & 6.0 (.4) \\
\end{tabular}
\caption{Per-component performance improvement, as in Table~\ref{tab:ablation},
for the NYCLU stop-and-frisk data set (${\Reg = 0.01}$, ${M = 46}$).
All rows except those labeled `No symmetry-aware map' represent complete executions.
A single fold running without a symmetry-aware map required over 2 days to complete,
so in order to run all 10 folds above, we terminated execution after the prefix tree~(\S\ref{sec:trie})
reached~$10^8$ nodes.
See Table~\ref{tab:ablation} for a detailed caption,
and also Figure~\ref{fig:queue-weapon}.
}
\vspace{4mm}
\label{tab:ablation-weapon}
\end{table}
\begin{figure}[t!]
\begin{center}
\includegraphics[trim={0mm 0mm 0mm 15mm}, width=\textwidth]{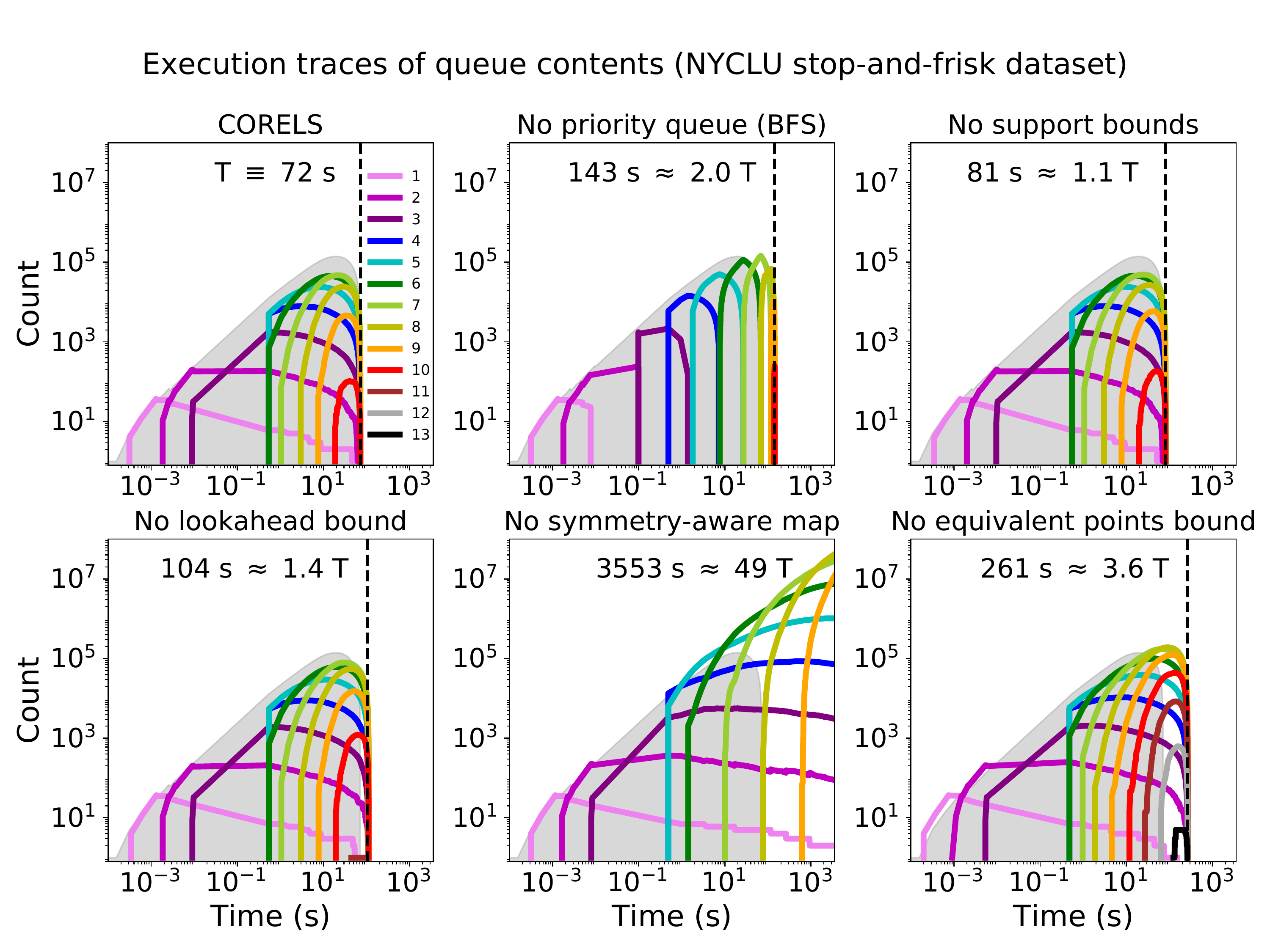}
\end{center}
\vspace{-5mm}
\caption{Summary of the logical queue's contents, for full CORELS (top left)
and five variants that each remove a specific implementation optimization or bound,
for the NYCLU stop-and-frisk data set (${\Reg = 0.01}$, ${M = 46}$), as in Table~\ref{tab:ablation-weapon}.
The execution without the symmetry-aware map (bottom center) is incomplete.
See Figure~\ref{fig:queue} for a detailed caption.
}
\label{fig:queue-weapon}
\end{figure}

Removing any single optimization increases total execution time,
by varying amounts across these optimizations.
Similar to our experiments in~\S\ref{sec:reg-param}, we always encounter the
optimal rule list in far less time than it takes to certify optimality.
As in Table~\ref{tab:weapon-reg}, we report metrics that are all proxies
for how much computational work our algorithm must perform;
these metrics thus scale with the overall slowdown with respect to CORELS execution time
(Table~\ref{tab:ablation}, first column).

Figure~\ref{fig:queue} visualizes execution traces of the elements in CORELS' logical queue,
similar to Figure~\ref{fig:queue-weapon-reg},
for a single, representative cross-validation fold.
Panels correspond to different removed optimizations, as in Table~\ref{tab:ablation}.
These plots demonstrate that our optimizations reduce the number of evaluated prefixes
and are especially effective at limiting the number of longer evaluated prefixes.
For the ProPublica data set, the most important optimization is the equivalent points
bound---without it, we place prefixes of at least length~10 in our queue,
and must terminate these executions before they are complete.
In contrast, CORELS and most other variants evaluate only prefixes up to at most length~5,
except for the variant without the lookahead bound, which evaluates prefixes up to length~6.

Table~\ref{tab:ablation-weapon} and Figure~\ref{fig:queue-weapon} summarize an
analogous set of experiments for the NYCLU data set.
Note that while the equivalent points bound proved to be the most important optimization for the ProPublica data set,
the symmetry-aware map is the crucial optimization for the NYCLU data set.

\begin{figure}[t!]
\begin{center}
\includegraphics[trim={30mm, 20mm, 30mm, 20mm},
width=0.95\textwidth]{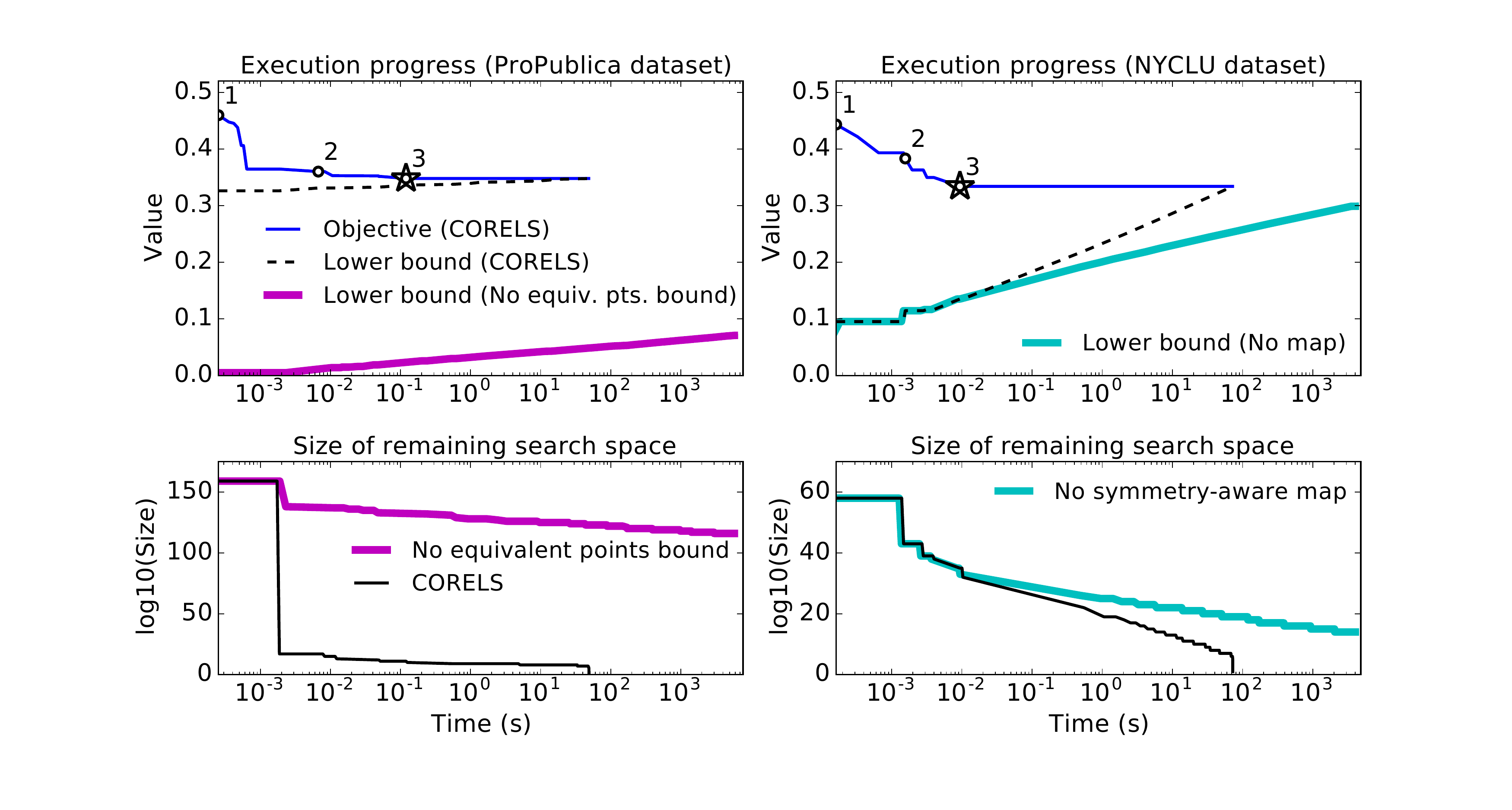}
\end{center}
\caption{Execution progress of CORELS and selected variants,
for the ProPublica (${\Reg = 0.005}$, ${M = 122}$) (left)
and NYCLU (${\Reg = 0.01}$, ${M = 46}$) (right) data sets.
Top: Objective value (thin solid lines) and lower bound (dashed lines) for CORELS,
as a function of wall clock time (log scale).
Numbered points along the trace of the objective value
indicate when the length of the best known rule list changes,
and are labeled by the new length.
CORELS quickly achieves the optimal value (star markers),
and certifies optimality when the lower bound matches the objective value.
On the left, a separate and significantly longer execution of CORELS
without the equivalent points  (Theorem~\ref{thm:identical}) bound remains
far from complete, and its lower bound (thick solid line) far from the optimum.
On the right, a separate execution of CORELS without the permutation bound
(Corollary~\ref{thm:permutation}), and thus the symmetry-aware map,
requires orders of magnitude more time to complete.
Bottom: $\lfloor \log_{10} \Remaining(\CurrentObj, \Queue) \rfloor$,
as a function of wall clock time (log scale),
where~$\Remaining(\CurrentObj, \Queue)$
is the upper bound~\eqref{eq:remaining} on remaining search space size
(Theorem~\ref{thm:remaining-eval-fine}).
For these problems, the equivalent points (left) and
permutation (right) bounds are responsible for the ability of
CORELS to quickly eliminate most of the search space (thin solid lines);
the remaining search space decays much more slowly without these bounds (thick solid lines).
}
\label{fig:objective}
\end{figure}

Finally, Figure~\ref{fig:objective} highlights the most significant
algorithm optimizations for our prediction problems:
the equivalent points bound for the ProPublica data set (left)
and the symmetry-aware map for the NYCLU data set (right).
For CORELS (thin lines) with the ProPublica recidivism data set (left),
the objective drops quickly, achieving the optimal value within a second.
CORELS certifies optimality in about a minute---the objective lower bound
steadily converges to the optimal objective (top) as the search space shrinks (bottom).
As in Figure~\ref{fig:weapon-reg-execution}, we dynamically and incrementally
calculate~$\lfloor \log_{10} \Remaining(\CurrentObj, \Queue) \rfloor$,
where~$\Remaining(\CurrentObj, \Queue)$
is the upper bound~\eqref{eq:remaining} on remaining search space size
(Theorem~\ref{thm:remaining-eval-fine}); this adds some computational overhead.
In the same plots (left), we additionally highlight
a separate execution of CORELS without the equivalent points bound
(Theorem~\ref{thm:identical}) (thick lines).
After more than 2 hours, the execution is still far from complete;
in particular, the lower bound is far from the optimum objective value (top)
and much of the search space remains unexplored~(bottom).
For the NYCLU stop-and-frisk data set (right),
CORELS achieves the optimum objective in well under a second,
and certifies optimality in a little over a minute.
CORELS without the permutation bound (Corollary~\ref{thm:permutation}),
and thus the symmetry-aware map,
requires more than an hour, \ie orders of magnitude more time, to complete (thick lines).

\subsection{Algorithmic Speedup}
\label{sec:speedup}

\begin{table}[t!]
\centering
\begin{tabular}{l|c|c|r l}
Algorithmic & Max evaluated & Lower bound & Predicted&runtime \\
approach & prefix length & evaluations & \\
\hline
CORELS & 5  & $ 2.8 \times 10^7$ & $36$ seconds& \\
Brute force & 5 & ~$2.5 \times 10^{10}$ & $9.0$ hours&$\approx 3.3 \times 10^4$ s \\
Brute force & 10 & ~$5.0 \times 10^{20}$ & $21 \times 10^6$ years&$\approx 6.5 \times 10^{14}$ s \\
CORELS (1984) & 5 & $2.8 \times 10^7$ & $13.5$ days&$\approx 1.2 \times 10^6$ s \\
\end{tabular}
\caption{Algorithmic speedup for the ProPublica data set (${\Reg = 0.005}$, ${M = 122}$).
Solving this problem using brute force is impractical due to the inability to explore rule lists of reasonable lengths.
Removing only the equivalent points bound requires exploring prefixes of up to length 10 (see Table \ref{tab:ablation}), a clearly intractable problem.
Even with all of our improvements, however, it is only recently that processors have been fast enough for this type of discrete optimization algorithm to succeed.
}
\label{tab:speedup}
\end{table}

Table \ref{tab:speedup} shows the overall speedup of CORELS compared to a na\"ive implementation
and demonstrates the infeasibility of running our algorithm 30 years ago.
Consider an execution of CORELS for the ProPublica data set, with ${M = 122}$ antecedents,
that evaluates prefixes up to length~5 in order to certify optimality (Table~\ref{tab:ablation}).
A brute force implementation that na\"ively considers all prefixes of up to length~5
would evaluate ${2.5 \times 10^{10}}$ prefixes.
As shown in Figure~\ref{fig:recidivism-all-folds}, the optimal rule list has prefix length~3,
thus the brute force algorithm would identify the optimal rule list.
However, for this approach to certify optimality, it would have to consider far longer prefixes.
Without our equivalent points bound, but with all of our other optimizations,
we evaluate prefixes up to at least length~10
(see Table~\ref{tab:ablation} and Figure~\ref{fig:queue})---thus a brute force algorithm
would have to evaluate prefixes of length~10 or longer.
Na\"ively evaluating all prefixes up to length~10 would require looking at ${5.0 \times 10^{20}}$ different prefixes.

However, CORELS examines only 28 million prefixes in total---a reduction of~893$\times$ compared to examining
all prefixes up to length~5 and a reduction of ${1.8 \times 10^{13}}$ for the case of length~10.
On a laptop, we require about 1.3~$\mu$s to evaluate a single prefix (given by dividing the number of lower bound evaluations by the total time
in Table~\ref{tab:ablation}).
Our runtime is only about 36 seconds, but the na\"ive solutions of examining all prefixes up to
lengths~5 and~10 would take 9 hours and 21 million years, respectively.
It is clear that brute force would not scale to larger problems.

We compare our current computing circumstances to those of 1984, the year when CART was published.
Moore's law holds that computing power doubled every 18 months from 1984 to 2006.
This is a period of 264 months, which means computing power has gone up by at least a factor of 32,000 since 1984.
Thus, even with our algorithmic and data structural improvements,
CORELS would have required about 13.5 days in 1984---an unreasonable amount of time.
Our advances are meaningful only because we can run them on a modern system.
Combining our algorithmic improvements with the increase in modern processor speeds,
our algorithm runs more than $10^{13}$ times faster than a na\"ive implementation would have in 1984.
This helps explain why neither our algorithm, nor other branch-and-bound variants, had been developed before now.

\begin{arxiv}

\section{Summary and Future Work on Bounds}
\label{sec:practical}

Here, we highlight our most significant bounds, as well as directions for future work
based on bounds that we have yet to leverage in practice.

In empirical studies, we found our equivalent support (\S\ref{sec:equivalent}, Theorem~\ref{thm:equivalent})
and equivalent points (\S\ref{sec:identical}, Theorem~\ref{thm:identical}) bounds
to yield the most significant improvements in algorithm performance.
In fact, they sometimes proved critical for finding solutions and proving optimality,
even on small problems.

Accordingly, we would hope that our similar support bound (\S\ref{sec:similar}, Theorem~\ref{thm:similar}) could be useful;
understanding how to efficiently exploit this result in practice
represents an important direction for future work. In particular, this type of bound may lead to
principled approximate variants of our~approach.

We presented several sets of bounds in which at least one bound was strictly tighter than the other(s).
For example, the lower bound on accurate antecedent support (Theorem~\ref{thm:min-capture-correct})
is strictly tighter than the lower bound on accurate support (Theorem~\ref{thm:min-capture}).
It might seem that we should only use this tighter bound, but in practice, we can use
both---the looser bound can be checked before completing the calculation required to
check the tighter bound.
Similarly, the equivalent support bound (Theorem~\ref{thm:equivalent})
is more general than the special case of the permutation bound (Corollary~\ref{thm:permutation}).
We have implemented data structures, which we call symmetry-aware maps,
to support both of these bounds, but have not yet identified an efficient approach
for supporting the more general equivalent points bound.
A good solution may be related to the challenge of designing an efficient data structure
to support the similar support bound.

We also presented results on antecedent rejection
that unify our understanding of our lower~(\S\ref{sec:lb-support})
and upper bounds (\S\ref{sec:ub-support}) on antecedent support.
In a preliminary implementation (not described here), we experimented with special data structures
to support the direct use of our observation that antecedent rejection propagates
(\S\ref{sec:reject}, Theorem~\ref{thm:reject}).
We leave the design of efficient data structures for this task as future work.

During execution, we find it useful to calculate an upper bound on the
size of the remaining search space---\eg via Theorem~\ref{thm:remaining-eval-fine},
or the looser Proposition~\ref{prop:remaining-eval-coarse}, which incurs less
computational overhead---since these provide meaningful information about algorithm
progress and allow us to estimate the remaining execution time.
As we illustrated in Section~\ref{sec:ablation},
these calculations also help us qualify the impact of different algorithmic bounds,
\eg by comparing executions that keep or remove~bounds.

When our algorithm terminates, it outputs an optimal solution of the training optimization problem, with a certificate of optimality.
On a practical note, our approach can also provide useful results even for incomplete executions.
As shown earlier, we have empirically observed that our algorithm often identifies the optimal rule list
very quickly, compared to the total time required to prove optimality, \eg in seconds,
versus minutes, respectively.
Furthermore, our objective's lower bounds allow us to place an upper bound on the size of the remaining search space,
and provides guarantees on the quality of a solution output by an incomplete execution.

The order in which we evaluate prefixes can impact the rate at which we
prune the search space, and thus the total runtime. We think that it is possible to design search policies that significantly improve performance.

\end{arxiv}

\section{Conclusion and More Possible Directions for Future Work}

\begin{kdd}
CORELS is an efficient and accurate algorithm for constructing provably optimal rule lists.
Optimality is particularly important in domains where model interpretability
has social consequences, \eg recidivism prediction.
While achieving optimality on such discrete optimization problems is
computationally hard in general, we aggressively prune our problem's search space
via a suite of bounds.
This makes realistically sized problems tractable.
CORELS is amenable to parallelization, which should allow it to scale to
even larger problems than those we presented here.
\end{kdd}

\begin{arxiv}

Finally, we would like to clarify some limitations of CORELS.
As far as we can tell, CORELS is the current best algorithm for solving a
specialized optimal decision tree problem.
While our approach scales well to large numbers of observations,
it could have difficulty proving optimality
for problems with many possibly relevant features that are highly correlated,
when large regions of the search space might be challenging to exclude.

CORELS is not designed for raw image processing or other problems where the
features themselves are not interpretable.
It could instead be used as a final classifier for image processing problems
where the features were created beforehand; for instance, one could create classifiers
for each part of an image, and use CORELS to create a final combined classifier. The notions of interpretability used in image classification tend to be completely different from those for structured data where each feature is separately meaningful \citep[e.g., see][]{LiEtAl18}. For structured data, decision trees, along with scoring systems, tend to be popular forms of transparent models. Scoring systems are sparse linear models with integer coefficients, and they can also be created from data \citep{UstunRu2017KDD,UstunRu2016SLIM}.

In some of our experiments, CORELS produces a DNF formula by coincidence, but it might be possible to create a much simpler version of CORELS that only produces DNF formulae. This could build off previous algorithms for creating an optimal DNF formula \citep{Rijnbeek10,WangEtAl16,WangEtAl2017}.

CORELS does not automatically rank the subgroups in order of the likelihood of a
positive outcome; doing so would require an algorithm such as Falling Rule Lists \citep{WangRu15,ChenRu18}, which forces the estimated probabilities to decrease along the list.
Furthermore, while CORELS does not technically produce estimates of ${\P(Y=1 \given x)}$,
one could form such an estimate by computing the empirical
proportion ${\hat{\P}(Y=1 \given x \textrm{ obeys } p_k)}$ for each antecedent~$p_k$.
CORELS is also not designed to assist with causal inference applications, since
it does not estimate the effect of a treatment via the conditional difference
${\P(Y=1 \given \textrm{treatment} =}$ ${\textrm{True}, x)}$ $-$
${\P(Y=1 \given \textrm{treatment} =}$ ${\textrm{False}, x)}$.
Alternative algorithms that estimate conditional differences with interpretable
rule lists include Causal Falling Rule Lists \citep{WangRu15CFRL},
Cost-Effective Interpretable Treatment Regimes (CITR) \citep{LakkarajuRu17},
and an approach by \citet{ZhangEtAl15} for constructing
interpretable and parsimonious treatment regimes. Alternatively, one could use a complex machine learning model to predict outcomes for the treatment group and a separate complex model for the control group that would allow counterfactuals to be estimated for each observation; from there, CORELS could be applied to produce a transparent model for personalized treatment effects. A similar approach to this was taken by \citet{GohRu18causal}, who use CORELS to understand a black box causal model.

CORELS could be adapted to handle cost-sensitive learning or weighted regularization.
This would require creating more general versions of our theorems,
which would be an extension of this work.
%

While CORELS does not directly handle continuous variables, we have found that it is not difficult in practice to construct a rule set that is sufficient for creating a useful model. It may be possible to use techniques such as Fast Boxes \citep{GohRu14} to discover useful and interpretable rules for continuous data that can be used within CORELS.

An interesting direction for future research would be to create a hybrid interpretable/ black box model in the style of \citet{Wang2018}, where the rule list would eliminate large parts of the space away from the decision boundary, and the observations that remain are evaluated by a black box model rather than a default rule.

Lastly, CORELS does not create generic single-variable-split decision trees.
CORELS optimizes over rule lists, which are one-sided decision trees;
in our setting, the leaves of these `trees' are conjunctions.
It may be possible to generalize ideas from our approach to handle generic
decision trees, which could be an interesting project for future work. There are more symmetries to handle in that case, since there would be many equivalent decision trees, leading to challenges in developing symmetry-aware data structures.

\end{arxiv}

\acks{E.A. conducted most of this work while supported by the Miller Institute for Basic Research
in Science, University of California, Berkeley, and hosted by Prof. M.I. Jordan at RISELab.
C.D.R. is supported in part by MIT-Lincoln Labs and the National Science Foundation under IIS-1053407.
E.A. would like to thank E.~Jonas, E.~Kohler, and S.~Tu for early implementation
guidance, A.~D'Amour for pointing out the work by~\citet{Goel16}, V.~Kanade, S.~McCurdy,
J.~Schleier-Smith and E.~Thewalt for helpful conversations, and members of RISELab,
SAIL, and the UC Berkeley Database Group for their support and feedback.
We thank H.~Yang and B.~Letham for sharing advice and code for processing data
and mining rules, B.~Coker for his critical advice on using the ProPublica COMPAS data set,
as well as V.~Kaxiras and A.~Saligrama for their recent contributions to our implementation
and for creating the CORELS website. We are very grateful to our editor and anonymous reviewers.
}

\vskip 0.2in

\renewcommand{\theHsection}{A\arabic{section}}
\appendix

\section{Excessive Antecedent Support}
\label{appendix:ub-supp}

\begin{theorem}[Upper bound on antecedent support]
\label{thm:ub-support}
Let ${\OptimalRL = (\Prefix, \Labels, \Default, K)}$
be any optimal rule list with objective~$\OptimalObj$, \ie
${\OptimalRL \in \argmin_\RL \Obj(\RL, \x, \y)}$,
and let ${\Prefix = (p_1, \dots, p_{k-1},}$
${p_k, \dots, p_K)}$ be its prefix.
For each ${k \le K}$, antecedent~$p_k$ in~$\Prefix$
has support less than or equal to
the fraction of all data not captured by preceding antecedents,
by an amount greater than the regularization parameter~$\Reg$:
\begin{align}
\Supp(p_k, \x \given \Prefix) \le 1 - \Supp(\Prefix^{k-1}, \x) - \Reg,
\label{eq:ub-support}
\end{align}
where ${\Prefix^{k-1} = (p_1, \dots, p_{k-1})}$.
For the last antecedent, \ie when ${p_k = p_K}$, equality implies
that there also exists a shorter optimal rule list
${\RL' = (\Prefix^{K-1}, \Labels', \Default', K - 1) \in}$ ${\argmin_\RL \Obj(\RL, \x, \y)}$.
\end{theorem}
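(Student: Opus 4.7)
The plan is to mirror the deletion argument used for Theorem~\ref{thm:min-capture}, but instead of simply removing a single antecedent, I will truncate $\OptimalRL$ at index $k-1$ and fold the role of $p_k$ into a chosen default rule. The intuition: if $p_k$ in context captures almost all of the data still uncaptured by $\Prefix^{k-1}$, then after truncation we can emulate $p_k$'s predictions with a single default rule, while also saving the regularization penalty for antecedents $p_k, p_{k+1}, \ldots, p_K$. Optimality of $\OptimalRL$ must then limit how much data $p_k$ can claim.

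First I would construct the following comparator. Given the optimal rule list ${\OptimalRL = (\Prefix, \Labels, \Default, K)}$ with ${\Prefix = (p_1, \ldots, p_K)}$ and ${\Labels = (q_1, \ldots, q_K)}$, define ${\tilde\RL = (\Prefix^{k-1}, (q_1, \ldots, q_{k-1}), q_k, k-1)}$, i.e., the rule list of length $k-1$ whose prefix is the first $k-1$ antecedents of $\OptimalRL$ (with the same labels) and whose default rule predicts $q_k$. Because $\tilde\RL$'s labels need not be the empirically optimal choice for its prefix, we have $\Loss(\RL', \x, \y) \le \Loss(\tilde\RL, \x, \y)$ where $\RL'$ is the rule list of length $k-1$ with prefix $\Prefix^{k-1}$ and optimal labels.

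Next I would compare $\tilde\RL$ and $\OptimalRL$ sample-by-sample. The data captured by $\Prefix^{k-1}$ receive identical predictions in both. The data captured by $p_k$ in the context of $\Prefix$ are, by definition, not captured by $\Prefix^{k-1}$, so in $\tilde\RL$ they fall to the default and are labeled $q_k$; in $\OptimalRL$ they are also labeled $q_k$. Hence the predictions on these points also agree. The only discrepancy is on the fraction $u_k \equiv 1 - \Supp(\Prefix^k, \x)$ of data not captured by $\Prefix^k$: in $\OptimalRL$ they are classified by $p_{k+1}, \ldots, p_K$ or its default, while in $\tilde\RL$ they are uniformly classified as $q_k$. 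The worst-case discrepancy is that $\OptimalRL$ classifies all of them correctly and $\tilde\RL$ classifies all of them incorrectly, giving $\Loss(\tilde\RL, \x, \y) \le \Loss(\OptimalRL, \x, \y) + u_k$. Therefore
\begin{align*}
\Obj(\RL', \x, \y) \le \Obj(\tilde\RL, \x, \y) \le \OptimalObj + u_k - \Reg(K - k + 1).
\end{align*}
Optimality of $\OptimalRL$ forces $\OptimalObj \le \Obj(\RL', \x, \y)$, so $u_k \ge \Reg(K - k + 1) \ge \Reg$, which after rewriting $u_k = 1 - \Supp(\Prefix^{k-1}, \x) - \Supp(p_k, \x \given \Prefix)$ yields the desired inequality~\eqref{eq:ub-support}.

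For the equality case, specialize to $k = K$, where $\Reg(K - k + 1) = \Reg$. If equality holds in~\eqref{eq:ub-support}, then $u_K = \Reg$, and the inequality chain above collapses to $\Obj(\RL', \x, \y) \le \OptimalObj$; combining with optimality gives $\Obj(\RL', \x, \y) = \OptimalObj$, so $\RL' = (\Prefix^{K-1}, \Labels', \Default', K-1)$ is itself an optimal rule list, shorter than $\OptimalRL$. The main obstacle I anticipate is verifying cleanly that the two rule lists truly agree on data captured by $p_k$ in context: this relies on the careful choice $\tilde\Default = q_k$, and on noting that the context of an antecedent $p_j$ with $j \le k-1$ is identical in $\Prefix$ and in $\Prefix^{k-1}$, so the first $k-1$ labels remain empirically valid. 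Once this bookkeeping is in place, the rest reduces to arithmetic on the objective.
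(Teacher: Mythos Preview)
Your argument is correct, but it takes a genuinely different route from the paper's proof. The paper first treats the last antecedent separately: it compares the optimal rule list to its parent (the rule list obtained by dropping $p_K$), bounds the loss discrepancy by $\epsilon = 1 - \Supp(\Prefix^{K-1},\x) - \Supp(p_K,\x\given\Prefix)$, and shows $\Obj(\RL') \ge \Obj(\RL) - \epsilon + \Reg$; optimality then forces $\epsilon \ge \Reg$. For a general index~$k$, the paper does not repeat this comparison but instead argues by contradiction via Theorem~\ref{thm:min-capture}: if the uncaptured fraction after $\Prefix^k$ were at most~$\Reg$, every subsequent antecedent $p_{k+1},\dots,p_K$ would have insufficient support, so the rule list could not be optimal. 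Your approach is more uniform: by truncating at $k-1$ and deliberately setting the comparator's default to $q_k$, you make the truncated rule list agree with $\OptimalRL$ on all data captured by $\Prefix^k$, so the single inequality $\Loss(\tilde\RL) \le \Loss(\OptimalRL) + u_k$ suffices for every~$k$. This buys you the sharper bound $u_k \ge \Reg(K-k+1)$ in one stroke, which in particular shows that equality in~\eqref{eq:ub-support} is only possible at $k=K$. The paper's route, on the other hand, highlights the link between the upper and lower support bounds by invoking Theorem~\ref{thm:min-capture} explicitly. Both arguments are valid; yours is self-contained and slightly stronger, while the paper's makes the dependency structure among the bounds more visible.
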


\begin{proof}
First, we focus on the last antecedent~$p_{K+1}$ in a rule list~$\RL'$.
Let ${\RL = (\Prefix, \Labels, \Default, K)}$
be a rule list with prefix ${\Prefix = (p_1, \dots, p_K)}$
and objective ${\Obj(\RL, \x, \y) \ge \OptimalObj}$, where
${\OptimalObj \equiv}$ ${\min_{\RLB} \Obj(\RLB, \x, \y)}$
is the optimal objective.
Let ${\RL' = (\Prefix', \Labels', \Default', K + 1)}$
be a rule list whose prefix ${\Prefix' = (p_1, \dots, p_K, p_{K+1})}$
starts with~$\Prefix$ and ends with a new antecedent~$p_{K+1}$.
Suppose~$p_{K+1}$ in the context of~$\Prefix'$ captures nearly all
data not captured by~$\Prefix$, except for a fraction~$\epsilon$
upper bounded by the regularization parameter~$\Reg$:
\begin{align*}
1 - \Supp(\Prefix, \x) - \Supp(p_{K+1}, \x \given \Prefix') \equiv \epsilon \le \Reg.
\end{align*}
Since~$\Prefix'$ starts with~$\Prefix$,
its prefix misclassification error is at least as great;
the only discrepancy between the misclassification errors
of~$\RL$ and~$\RL'$ can come from the difference between the support of
the set of data not captured by~$\Prefix$ and the support of~$p_{K+1}$:
\begin{align*}
| \Loss(\RL', \x, \y) - \Loss(\RL, \x, \y) | \le
1 - \Supp(\Prefix, \x) - \Supp(p_{K+1}, \x \given \Prefix') = \epsilon.
\end{align*}
The best outcome for~$\RL'$ would occur if its misclassification
error were smaller than that of~$\RL$ by~$\epsilon$, therefore
\begin{align*}
\Obj(\RL', \x, \y) &= \Loss(\RL', \x, \y) + \Reg (K+1) \\
&\ge \Loss(\RL, \x, \y) - \epsilon + \Reg(K+1)
= \Obj(\RL, \x, \y) - \epsilon + \Reg \ge \Obj(\RL, \x, \y) \ge \OptimalObj.
\end{align*}
$\RL'$ is an optimal rule list,
\ie ${\RL' \in \argmin_{\RLB} \Obj(\RLB, \x, y)}$,
if and only if ${\Obj(\RL', \x, \y) = \Obj(\RL, \x, \y) =}$ ${\OptimalObj}$,
which requires ${\epsilon = \Reg}$.
Otherwise, ${\epsilon < \Reg}$, in which case
\begin{align*}
\Obj(\RL', \x, \y) \ge \Obj(\RL, \x, \y) - \epsilon + \Reg
> \Obj(\RL, \x, \y) \ge \OptimalObj,
\end{align*}
therefore $\RL'$ is not optimal, \ie  ${\RL' \notin \argmin_{\RLB} \Obj(\RLB, \x, \y)}$.
This demonstrates the desired result for ${k = K}$.

In the remainder, we prove the bound in~\eqref{eq:ub-support} by contradiction,
in the context of a rule list~$\RL''$.
Let~$\RL$ and~$\RL'$ retain their definitions from above,
thus as before, that the data not captured by~$\Prefix'$
has normalized support~${\epsilon \le \Reg}$, \ie
\begin{align*}
1 - \Supp(\Prefix', \x) = 1 - \Supp(\Prefix, \x) - \Supp(p_{K+1}, \x \given \Prefix') = \epsilon \le \Reg.
\end{align*}
Thus for any rule list~$\RL''$ whose prefix
$\Prefix'' = (p_1, \dots, p_{K+1}, \dots, p_{K'})$ starts
with~$\Prefix'$ and ends with one or more additional rules,
each additional rule~$p_k$ has support
${\Supp(p_k, \x \given \Prefix'') \le}$ ${\epsilon \le \Reg}$,
for all~${k > K+1}$.
By Theorem~\ref{thm:min-capture},
all of the additional rules have insufficient support,
therefore~$\Prefix''$ cannot be optimal,
\ie ${\RL'' \notin \argmin_{\RLB} \Obj(\RLB, \x, \y)}$.
\end{proof}

Similar to Theorem~\ref{thm:min-capture}, our lower bound on
antecedent support, we can apply Theorem~\ref{thm:ub-support}
in the contexts of both constructing rule lists and
rule mining~(\S\ref{sec:setup}).
Theorem~\ref{thm:ub-support} implies that if we only seek a single
optimal rule list, then during branch-and-bound execution,
we can prune a prefix if we ever add an antecedent with support
too similar to the support of the set of data not captured by the
preceding antecedents.
One way to view this result is that if
${\RL = (\Prefix, \Labels, \Default, K)}$
and ${\RL' = (\Prefix', \Labels', \Default', K + 1)}$
are rule lists such that~$\Prefix'$ starts with~$\Prefix$
and ends with an antecedent that captures all or nearly all
data not captured by~$\Prefix$, then the new rule in~$\RL'$
behaves similar to the default rule of~$\RL$.
As a result, the misclassification error of~$\RL'$ must be
similar to that of~$\RL$, and any reduction may not be
sufficient to offset the penalty for longer prefixes.

\begin{proposition}[Excessive antecedent support propagates]
\label{prop:ub-support}
Define~$\StartContains(\Prefix)$ as in~\eqref{eq:start-contains},
and let ${\Prefix = (p_1, \dots, p_{K})}$ be a prefix,
such that its last antecedent~$p_{K}$ has excessive support,
\ie the opposite of the bound in~\eqref{eq:ub-support}:
\begin{align*}
\Supp(p_K, \x \given \Prefix) > 1 - \Supp(\Prefix^{K-1}, \x) - \Reg,
\end{align*}
where ${\Prefix^{K-1} = (p_1, \dots, p_{K-1})}$.
Let ${\RLB = (\PrefixB, \LabelsB, \DefaultB, \kappa)}$
be any rule list with prefix
${\PrefixB =}$ ${(P_1, \dots, P_{\kappa})}$
such that~$\PrefixB$ starts with ${\PrefixB^{K'-1} =}$
${(P_1, \dots, P_{K'-1}) \in \StartContains(\Prefix^{K-1})}$
and~${P_{K'} = p_{K}}$.
It follows that~$P_{K'}$ has excessive support in prefix~$\PrefixB$,
and furthermore, ${\RLB \notin \argmin_{\RL} \Obj(\RL, \x, \y)}$.
\end{proposition}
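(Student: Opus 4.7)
The plan is to mirror the structure of Propositions~\ref{prop:min-capture} and~\ref{prop:min-capture-correct}: first show that the relevant support condition propagates from $\Prefix$ to $\PrefixB$, and then invoke the original antecedent-level theorem (here Theorem~\ref{thm:ub-support}) to rule out optimality of~$\RLB$.

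The key reformulation is that the hypothesis
\[
\Supp(p_K, \x \given \Prefix) > 1 - \Supp(\Prefix^{K-1}, \x) - \Reg
\]
is equivalent to the statement that the fraction of data not captured by the antecedent set $\{p_1, \dots, p_{K-1}, p_K\}$ of $\Prefix$ is strictly less than~$\Reg$, namely
\[
\frac{1}{N} \sum_{n=1}^N \left( \bigwedge_{k=1}^{K} \neg\, \Cap(x_n, p_k) \right) < \Reg.
\]
Likewise, the desired conclusion that $P_{K'}$ has excessive support in $\PrefixB$ is equivalent to the fraction of data not captured by the antecedent set $\{P_1, \dots, P_{K'-1}, P_{K'}\}$ being strictly less than~$\Reg$.

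First I would expand $\Supp(P_{K'}, \x \given \PrefixB)$ and $\Supp(\PrefixB^{K'-1}, \x)$ in the manner of~\eqref{ineq:supp} and collect the complementary indicator. The hypothesis $\PrefixB^{K'-1} \in \StartContains(\Prefix^{K-1})$ combined with $P_{K'} = p_K$ implies that the antecedent set of~$\PrefixB^{K'}$ contains that of~$\Prefix^K$; hence any datum not captured by~$\PrefixB^{K'}$ is also not captured by~$\Prefix^K$. This yields the chain
\[
1 - \Supp(\PrefixB^{K'-1}, \x) - \Supp(P_{K'}, \x \given \PrefixB)
\le 1 - \Supp(\Prefix^{K-1}, \x) - \Supp(p_K, \x \given \Prefix) < \Reg,
\]
so $P_{K'}$ has strictly excessive support in~$\PrefixB$. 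Applying Theorem~\ref{thm:ub-support} with $k = K'$ then gives ${\RLB \notin \argmin_{\RL} \Obj(\RL, \x, \y)}$, since the theorem provides a strict upper bound that every antecedent of an optimal rule list must satisfy, and the final-antecedent caveat (about equality implying existence of a shorter optimal list) does not apply because our inequality is strict.

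The main obstacle is bookkeeping the indicator-function manipulation carefully enough to justify the set-inclusion step, particularly making sure that the condition $P_{K'} = p_K$ is used to replace the capture indicator for $P_{K'}$ with that for $p_K$ at exactly the right place, just as in Proposition~\ref{prop:min-capture}'s transition in~\eqref{ineq:supp}. Once that is handled, the conclusion is immediate from Theorem~\ref{thm:ub-support}.
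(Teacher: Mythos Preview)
Your proposal is correct and follows essentially the same route as the paper. The paper's proof observes that $\PrefixB^{K'}$ contains all antecedents of $\Prefix$, hence $\Supp(\PrefixB^{K'}, \x) \ge \Supp(\Prefix, \x)$, decomposes both sides as you do, and rearranges to obtain the excessive-support inequality before invoking Theorem~\ref{thm:ub-support}; your complement formulation (fraction not captured is $<\Reg$) is the same argument viewed from the other side, so no additional indicator bookkeeping beyond this single set-containment step is actually needed.
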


\begin{proof}
Since ${\PrefixB^{K'} = (P_1, \dots, P_{K'})}$
contains all the antecedents in~$\Prefix$, we have that
\begin{align*}
\Supp(\PrefixB^{K'}, \x) \ge \Supp(\Prefix, \x).
\end{align*}
Expanding these two terms gives
\begin{align*}
\Supp(\PrefixB^{K'}, \x)
&= \Supp(\PrefixB^{K'-1}, \x) + \Supp(P_{K'}, \x \given \PrefixB) \\
&\ge \Supp(\Prefix, \x)
= \Supp(\Prefix^{K-1}, \x) + \Supp(p_K, \x \given \Prefix)
> 1 - \Reg.
\end{align*}
Rearranging gives
\begin{align*}
\Supp(P_{K'}, \x \given \PrefixB)
> 1 - \Supp(\PrefixB^{K'-1}, \x) - \Reg,
\end{align*}
thus~$P_{K'}$ has excessive support in~$\PrefixB$.
By Theorem~\ref{thm:ub-support},
${\RLB \notin \argmin_{\RL} \Obj(\RL, \x, \y)}$.
\end{proof}

\section{Proof of Theorem~\ref{thm:equivalent} (Equivalent Support Bound)}
\label{appendix:equiv-supp}

We begin by defining four related rule lists.
First, let ${\RL = (\Prefix, \Labels, \Default, K)}$
be a rule list with prefix ${\Prefix = (p_1, \dots, p_K)}$
and labels ${\Labels = (q_1, \dots, q_K)}$.
Second, let ${\RLB = (\PrefixB, \LabelsB, \DefaultB, \kappa)}$
be a rule list with prefix ${\PrefixB = (P_1, \dots, P_\kappa)}$
that captures the same data as~$\Prefix$,
and labels ${\LabelsB = (Q_1, \dots, Q_\kappa)}$.
Third, let ${\RL' = (\Prefix', \Labels', \Default', K') \in}$
${\StartsWith(\Prefix)}$ be any rule list
whose prefix starts with~$\Prefix$, such that~${K' \ge K}$.
Denote the prefix and labels of~$\RL'$ by
${\Prefix' = (p_1, \dots, p_K,}$ ${p_{K+1}, \dots, p_{K'})}$
and ${\Labels = (q_1, \dots, q_{K'})}$, respectively.
Finally, define ${\RLB' = (\PrefixB', \LabelsB', \DefaultB', \kappa') \in}$
${\StartsWith(\PrefixB)}$ to be the `analogous' rule list, \ie whose prefix
${\PrefixB' = (P_1, \dots, P_\kappa, P_{\kappa+1}, \dots, P_{\kappa'}) =}$
${(P_1, \dots, P_\kappa, p_{K+1}, \dots, p_{K'})}$
starts with~$\PrefixB$ and ends with the same ${K'-K}$
antecedents as~$\Prefix'$.
Let ${\LabelsB' = (Q_1, \dots, Q_{\kappa'})}$
denote the labels of~$\RLB'$.

Next, we claim that the difference in the objectives
of rule lists~$\RL'$ and~$\RL$ is the same as the difference
in the objectives of rule lists~$\RLB'$ and~$\RLB$.
Let us expand the first difference as
\begin{align}
&\Obj(\RL', \x, \y) - \Obj(\RL, \x, \y)
  = \Loss(\RL', \x, \y) + \Reg K' - \Loss(\RL, \x, \y) - \Reg K \nn \\
&= \Loss_p(\Prefix', \Labels', \x, \y) + \Loss_0(\Prefix', \Default', \x, \y)
  - \Loss_p(\Prefix, \Labels, \x, \y) - \Loss_0(\Prefix, \Default, \x, \y)
  + \Reg (K' - K). \nn
\end{align}
Similarly, let us expand the second difference as
\begin{align}
&\Obj(\RLB', \x, \y) - \Obj(\RLB, \x, \y)
  = \Loss(\RLB', \x, \y) + \Reg \kappa' - \Loss(\RLB, \x, \y) - \Reg \kappa \nn \\
&= \Loss_p(\PrefixB', \LabelsB', \x, \y) + \Loss_0(\PrefixB', \DefaultB', \x, \y)
  - \Loss_p(\PrefixB, \LabelsB, \x, \y) - \Loss_0(\PrefixB, \DefaultB, \x, \y)
  + \Reg (K' - K), \nn
\end{align}
where we have used the fact that ${\kappa' - \kappa = K' - K}$.

The prefixes~$\Prefix$ and~$\PrefixB$ capture the same data.
Equivalently, the set of data that is not captured by~$\Prefix$
is the same as the set of data that is not captured by~$\PrefixB$, \ie
\begin{align}
\{x_n : \neg\, \Cap(x_n, \Prefix)\} = \{x_n : \neg\, \Cap(x_n, \PrefixB)\}. \nn
\end{align}
Thus, the corresponding rule lists~$\RL$ and~$\RLB$
share the same default rule, \ie ${\Default = \DefaultB}$,
yielding the same default rule misclassification error:
\begin{align}
\Loss_0(\Prefix, \Default, \x, \y) = \Loss_0(\PrefixB, \DefaultB, \x, \y). \nn
\end{align}
Similarly, prefixes~$\Prefix'$ and~$\PrefixB'$ capture
the same data, and thus rule lists~$\RL'$ and~$\RLB'$
have the same default rule misclassification error:
\begin{align}
\Loss_0(\Prefix, \Default, \x, \y) = \Loss_0(\PrefixB, \DefaultB, \x, \y). \nn
\end{align}

At this point, to demonstrate our claim relating the objectives
of~$\RL$, $\RL'$, $\RLB$, and~$\RLB'$, what remains is to
show that the difference in the misclassification errors
of prefixes~$\Prefix'$ and~$\Prefix$ is the same as that
between~$\PrefixB'$ and~$\PrefixB$.
We can expand the first difference as
\begin{align}
\Loss_p(\Prefix', \Labels', \x, \y) - \Loss_p(\Prefix, \Labels, \x, \y)
&= \frac{1}{N} \sum_{n=1}^N \sum_{k=K+1}^{K'}
  \Cap(x_n, p_k \given \Prefix') \wedge \one [ q_k \neq y_n ], \nn
\end{align}
where we have used the fact that since~$\Prefix'$
starts with~$\Prefix$, the first~$K$ rules in~$\Prefix'$
make the same mistakes as those in~$\Prefix$.
Similarly, we can expand the second difference as
\begin{align}
\Loss_p(\PrefixB', \LabelsB', \x, \y) - \Loss_p(\PrefixB, \LabelsB, \x, \y)
&= \frac{1}{N} \sum_{n=1}^N \sum_{k=\kappa+1}^{\kappa'}
  \Cap(x_n, P_k \given \PrefixB') \wedge \one [ Q_k \neq y_n ] \nn \\
&= \frac{1}{N} \sum_{n=1}^N \sum_{k=K+1}^{K'}
  \Cap(x_n, p_k \given \PrefixB') \wedge \one [ Q_k \neq y_n ] \nn \\
&= \frac{1}{N} \sum_{n=1}^N \sum_{k=K+1}^{K'}
  \Cap(x_n, p_k \given \Prefix') \wedge \one [ q_k \neq y_n ] \label{eq:third} \\
&= \Loss_p(\Prefix', \Labels', \x, \y) - \Loss_p(\Prefix, \Labels, \x, \y) \nn.
\end{align}
To justify the equality in~\eqref{eq:third}, we observe first that
prefixes~$\PrefixB'$ and~$\Prefix'$ start with~$\kappa$ and~$K$
antecedents, respectively, that capture the same data.
Second, prefixes~$\PrefixB'$ and~$\Prefix'$ end with exactly
the same ordered list of~${K' - K}$ antecedents,
therefore for any~${k = 1, \dots, K' - K}$,
antecedent ${P_{\kappa + k} = p_{K + k}}$ in~$\PrefixB'$
captures the same data as~$p_{K + k}$ captures in~$\Prefix'$.
It follows that the corresponding labels are all equivalent, \ie
${Q_{\kappa + k} = q_{K + k}}$, for all~${k = 1, \dots, K' - K}$,
and consequently, the prefix misclassification error associated
with the last~${K' - K}$ antecedents of~$\Prefix'$ is the same
as that of~$\PrefixB'$.
We have therefore shown that the difference between the objectives
of~$\RL'$ and~$\RL$ is the same as that between~$\RLB'$ and~$\RLB$, \ie
\begin{align}
\Obj(\RL', \x, \y) - \Obj(\RL, \x, \y)
= \Obj(\RLB', \x, \y) - \Obj(\RLB, \x, \y).
\label{eq:equiv-analogous}
\end{align}

Next, suppose that the objective lower bounds of~$\RL$ and~$\RLB$
obey ${b(\Prefix, \x, \y) \le b(\PrefixB, \x, \y)}$, therefore
\begin{align}
\Obj(\RL, \x, \y)
&= \Loss_p(\Prefix, \Labels, \x, \y) + \Loss_0(\Prefix, \Default, \x, \y) + \Reg K \nn \\
&= b(\Prefix, \x, \y) + \Loss_0(\Prefix, \Default, \x, \y) \nn \\
&\le b(\PrefixB, \x, \y) + \Loss_0(\Prefix, \Default, \x, \y)
= b(\PrefixB, \x, \y) + \Loss_0(\PrefixB, \DefaultB, \x, \y)
= \Obj(\RLB, \x, \y).
\label{eq:equiv-ineq}
\end{align}
Now let~$\RL^*$ be an optimal rule list with prefix
constrained to start with~$\Prefix$,
\begin{align}
\RL^* \in \argmin_{\RL^\dagger \in \StartsWith(\Prefix)} \Obj(\RL^\dagger, \x, \y), \nn
\end{align}
and let~$K^*$ be the length of~$\RL^*$.
Let~$\RLB^*$ be the analogous $\kappa^*$-rule list whose prefix starts
with~$\PrefixB$ and ends with the same~${K^* - K}$ antecedents as~$\RL^*$,
where~${\kappa^* = \kappa + K^* - K}$.
By~\eqref{eq:equiv-analogous},
\begin{align}
\Obj(\RL^*, \x, \y) - \Obj(\RL, \x, \y)
= \Obj(\RLB^*, \x, \y) - \Obj(\RLB, \x, \y).
\label{eq:equiv-diff}
\end{align}
Furthermore, we claim that~$\RLB^*$ is an optimal rule list
with prefix constrained to start with~$\PrefixB$,
\begin{align}
\RLB^* \in \argmin_{\RLB^\dagger \in \StartsWith(\PrefixB)} \Obj(\RLB^\dagger, \x, \y).
\label{eq:equiv-analogous-optimal}
\end{align}

To demonstrate~\eqref{eq:equiv-analogous-optimal},
we consider two separate scenarios.
In the first scenario, prefixes~$\Prefix$ and~$\PrefixB$
are composed of the same antecedents,
\ie the two prefixes are equivalent up to a permutation of
their antecedents, and as a consequence,
${\kappa = K}$ and~${\kappa^* = K^*}$.
Here, every rule list~${\RL'' \in \StartsWith(\Prefix)}$
that starts with~$\Prefix$
has an analogue~${\RLB'' \in \StartsWith(\PrefixB)}$
that starts with~$\PrefixB$,
such that~$\RL''$ and~$\RLB''$ obey~\eqref{eq:equiv-analogous},
and vice versa, and thus~\eqref{eq:equiv-analogous-optimal}
is a direct consequence of~\eqref{eq:equiv-diff}.

In the second scenario, prefixes~$\Prefix$ and~$\PrefixB$
are not composed of the same antecedents.
Define~${\phi = \{p_k : (p_k \in \Prefix) \wedge (p_k \notin \PrefixB)\}}$
to be the set of antecedents in~$\Prefix$ that are not in~$\PrefixB$,
and define~${\Phi = \{P_k : (P_k \in \PrefixB) \wedge (P_k \notin \Prefix)\}}$
to be the set of antecedents in~$\PrefixB$ that are not in~$\Prefix$;
either~${\phi \neq \emptyset}$, or~${\Phi \neq \emptyset}$, or both.

Suppose~${\phi \neq \emptyset}$, and let~${p \in \phi}$
be an antecedent in~$\phi$.
It follows that there exists a subset of rule lists
in~$\StartsWith(\PrefixB)$ that do not have analogues
in~$\StartsWith(\Prefix)$.
Let~${\RLB'' \in \StartsWith(\PrefixB)}$ be such a rule list,
such that its prefix ${\PrefixB'' = (P_1, \dots, P_\kappa, \dots, p, \dots)}$
starts with~$\PrefixB$ and contains~$p$ among its remaining antecedents.
Since~$p$ captures a subset of the data that~$\Prefix$ captures,
and~$\PrefixB$ captures the same data as~$\Prefix$,
it follows that~$p$ does not capture any data in~$\PrefixB''$, \ie
\begin{align}
\frac{1}{N} \sum_{n=1}^N \Cap(x_n, p \given \PrefixB'') = 0 \le \Reg. \nn
\end{align}
By Theorem~\ref{thm:min-capture}, antecedent~$p$ has insufficient
support in~$\RLB''$, and thus~$\RLB''$ cannot be optimal, \ie
${\RLB'' \notin}$ ${\argmin_{\RLB^\dagger \in \StartsWith(\PrefixB)} \Obj(\RLB^\dagger, \x, \y)}$.
By a similar argument, if~${\Phi \neq \emptyset}$
and~${P \in \Phi}$, and~${\RL'' \in \StartsWith(\Prefix)}$
is any rule list whose prefix starts with~$\Prefix$
and contains antecedent~$P$, then~$\RL''$ cannot be optimal, \ie
${\RL'' \notin \argmin_{\RL^\dagger \in \StartsWith(\Prefix)} \Obj(\RL^\dagger, \x, \y)}$.

To finish justifying claim~\eqref{eq:equiv-analogous-optimal}
for the second scenario, first define
\begin{align}
\tau(\Prefix, \Phi) \equiv
  \{\RL'' = (\Prefix'', \Labels'', \Default'', K'') :
    \RL'' \in \StartsWith(\Prefix) \textnormal{ and }
    p_k \notin \Phi, \forall p_k \in \Prefix''\} \subset \StartsWith(\Prefix) \nn
\end{align}
to be the set of all rule lists whose prefixes start with~$\Prefix$
and do not contain any antecedents in~$\Phi$.
Now, recognize that the optimal prefixes in~$\tau(\Prefix, \Phi)$
and~$\StartsWith(\Prefix)$ are the same, \ie
\begin{align}
\argmin_{\RL^\dagger \in \tau(\Prefix, \Phi)} \Obj(\RL^\dagger, \x, \y)
= \argmin_{\RL^\dagger \in \StartsWith(\Prefix)} \Obj(\RL^\dagger, \x, \y), \nn
\end{align}
and similarly, the optimal prefixes in~$\tau(\PrefixB, \phi)$
and~$\StartsWith(\PrefixB)$ are the same, \ie
\begin{align}
\argmin_{\RLB^\dagger \in \tau(\PrefixB, \phi)} \Obj(\RLB^\dagger, \x, \y)
= \argmin_{\RLB^\dagger \in \StartsWith(\PrefixB)} \Obj(\RLB^\dagger, \x, \y). \nn
\end{align}
Since we have shown that every~${\RL'' \in \tau(\Prefix, \Phi)}$
has a direct analogue~${\RLB'' \in \tau(\PrefixB, \phi)}$,
such that~$\RL''$ and~$\RLB''$ obey~\eqref{eq:equiv-analogous},
and vice versa, we again have~\eqref{eq:equiv-analogous-optimal}
as a consequence of~\eqref{eq:equiv-diff}.

We can now finally combine~\eqref{eq:equiv-ineq}
and~\eqref{eq:equiv-analogous-optimal} to obtain the desired inequality in~\eqref{eq:equivalent}:
\begin{align}
\min_{\RL' \in \StartsWith(\Prefix)} \Obj(\RL', \x, \y)
= \Obj(\RL^*, \x, \y) \le \Obj(\RLB^*, \x, \y)
= \min_{\RLB' \in \StartsWith(\PrefixB)} \Obj(\RLB', \x, \y). \nn
\end{align}

\section{Proof of Theorem~\ref{thm:similar} (Similar Support Bound)}
\label{appendix:similar-supp}

We begin by defining four related rule lists.
First, let ${\RL = (\Prefix, \Labels, \Default, K)}$
be a rule list with prefix ${\Prefix = (p_1, \dots, p_K)}$
and labels ${\Labels = (q_1, \dots, q_K)}$.
Second, let ${\RLB = (\PrefixB, \LabelsB, \DefaultB, \kappa)}$
be a rule list with prefix ${\PrefixB = (P_1, \dots, P_\kappa)}$
and labels ${\LabelsB = (Q_1, \dots, Q_\kappa)}$.
Define~$\omega$ as in~\eqref{eq:omega}
and~$\Omega$ as in~\eqref{eq:big-omega},
and require that~${\omega, \Omega \le \Reg}$.
Third, let ${\RL' = (\Prefix', \Labels', \Default', K') \in}$
${\StartsWith(\Prefix)}$ be any rule list
whose prefix starts with~$\Prefix$, such that~${K' \ge K}$.
Denote the prefix and labels of~$\RL'$ by
${\Prefix' = (p_1, \dots, p_K, p_{K+1}, \dots, p_{K'})}$
and ${\Labels = (q_1, \dots, q_{K'})}$,
respectively.
Finally, define
${\RLB' = (\PrefixB', \LabelsB', \DefaultB', \kappa') \in \StartsWith(\PrefixB)}$
to be the `analogous' rule list, \ie whose prefix
${\PrefixB' =}$ ${(P_1, \dots, P_\kappa, P_{\kappa+1}, \dots, P_{\kappa'})
= (P_1, \dots, P_\kappa, p_{K+1}, \dots, p_{K'})}$
starts with~$\PrefixB$ and ends with the same ${K'-K}$
antecedents as~$\Prefix'$.
Let ${\LabelsB' = (Q_1, \dots, Q_{\kappa'})}$
denote the labels of~$\RLB'$.

%
The smallest possible objective for~$\RLB'$, in relation
to the objective of~$\RL'$, reflects both the difference
between the objective lower bounds of~$\RLB$ and~$\RL$
and the largest possible discrepancy between the
objectives of~$\RL'$ and~$\RLB'$.
The latter would occur if~$\RL'$ misclassified all the data
corresponding to both~$\omega$ and~$\Omega$ while~$\RLB'$
correctly classified this same data, thus
\begin{align}
\Obj(\RLB', \x, \y) \ge \Obj(\RL', \x, \y)
  + b(\PrefixB, \x, \y) - b(\Prefix, \x, \y) - \omega - \Omega.
\label{eq:similar-analogous}
\end{align}
Now let~$\RLB^*$ be an optimal rule list with prefix
constrained to start with~$\PrefixB$,
\begin{align*}
\RLB^* \in \argmin_{\RLB^\dagger \in \StartsWith(\PrefixB)} \Obj(\RLB^\dagger, \x, \y),
\end{align*}
and let~$\kappa^*$ be the length of~$\RLB^*$.
Also let~$\RL^*$ be the analogous $K^*$-rule list whose prefix
starts with~$\Prefix$ and ends with the same~${\kappa^* - \kappa}$
antecedents as~$\RLB^*$, where~${K^* = K + \kappa^* - \kappa}$.
By~\eqref{eq:similar-analogous}, we obtain the desired inequality in~\eqref{eq:similar}:
\begin{align*}
\min_{\RLB^\dagger \in \StartsWith(\PrefixB)} \Obj(\RLB^\dagger, \x, \y)
&= \Obj(\RLB^*, \x, \y) \\
&\ge \Obj(\RL^*, \x, \y)
  + b(\PrefixB, \x, \y) - b(\Prefix, \x, \y) - \omega - \Omega \\
&\ge \min_{\RL^\dagger \in \StartsWith(\Prefix)} \Obj(\RL^\dagger, \x, \y)
  + b(\PrefixB, \x, \y) - b(\Prefix, \x, \y) - \omega - \Omega.
\end{align*}

\section{Proof of Theorem~\ref{thm:identical} (Equivalent Points Bound)}
\label{appendix:equiv-pts}

We derive a lower bound on the default rule
misclassification error~$\Loss_0(\Prefix, \Default, \x, \y)$,
analogous to the lower bound~\eqref{eq:lb-equiv-pts} on the misclassification
error~$\Loss(\RL, \x, \y)$ in the proof of Proposition~\ref{prop:identical}.
As before, we sum over all sets of equivalent points, and then for each such set,
we count differences between class labels and the minority class label of the set,
instead of counting mistakes made by the default rule:
\begin{align}
\Loss_0(\Prefix, \Default, \x, \y)
&= \frac{1}{N} \sum_{n=1}^N \neg\, \Cap(x_n, \Prefix) \wedge \one [ q_0 \neq y_n ] \nn \\
&= \frac{1}{N} \sum_{u=1}^U \sum_{n=1}^N \neg\, \Cap(x_n, \Prefix) \wedge
  \one [ q_0 \neq y_n ]\, \one [ x_n \in e_u ] \nn \\
&\ge \frac{1}{N} \sum_{u=1}^U \sum_{n=1}^N \neg\, \Cap(x_n, \Prefix) \wedge
  \one [ y_n = q_u ]\, \one [ x_n \in e_u ] = b_0(\Prefix, \x, \y),
\label{eq:lb-equiv-pts-uncap}
\end{align}
where the final equality comes from the definition of~$b_0(\Prefix, \x, \y)$ in~\eqref{eq:lb-b0}.
Since we can write the objective~$\Obj(\RL, \x, \y)$
as the sum of the objective lower bound~$b(\Prefix, \x, \y)$ and
default rule misclassification error~$\Loss_0(\Prefix, \Default, \x, \y)$,
applying~\eqref{eq:lb-equiv-pts-uncap} gives a lower bound on~$\Obj(\RL, \x, \y)$:
\begin{align}
\Obj(\RL, \x, \y)
= \Loss_p(\Prefix, \Labels, \x, \y) + \Loss_0(\Prefix, \Default, \x, \y) + \Reg K
&= b(\Prefix, \x, \y) + \Loss_0(\Prefix, \Default, \x, \y) \nn \\
&\ge b(\Prefix, \x, \y) + b_0(\Prefix, \x, \y).
\label{eq:equiv-pts-base}
\end{align}
It follows that for any rule list~${\RL' \in \StartsWith(\RL)}$ whose prefix~$\Prefix'$
starts with~$\Prefix$, we have
\begin{align}
\Obj(\RL', \x, \y) \ge b(\Prefix', \x, \y) + b_0(\Prefix', \x, \y).
\label{eq:equiv-pts-extend}
\end{align}

Finally, we show that the lower bound
on~${\Obj(\RL, \x, \y)}$ in~\eqref{eq:equiv-pts-base} is not greater than
the lower bound on~${\Obj(\RL', \x, \y)}$ in~\eqref{eq:equiv-pts-extend}.
First, let us define
\begin{align}
\Upsilon(\Prefix', K, \x, \y) \equiv \frac{1}{N} \sum_{u=1}^U \sum_{n=1}^N
    \sum_{k=K+1}^{K'} \Cap(x_n, p_k \given \Prefix') \wedge \one [ x_n \in e_u ]\, \one [ y_n = q_u ].
\label{eq:upsilon}
\end{align}
Now, we write a lower bound on~${b(\Prefix', \x, \y)}$ with respect to~${b(\Prefix, \x, \y)}$:
\begin{align}
&b(\Prefix', \x, \y) = \Loss_p(\Prefix', \Labels, \x, \y) + \Reg K'
= \frac{1}{N} \sum_{n=1}^N \sum_{k=1}^{K'} \Cap(x_n, p_k \given \Prefix') \wedge \one [ q_k \neq y_n ] + \Reg K' \nn \\
&= \Loss_p(\Prefix, \Labels, \x, \y) + \Reg K + \frac{1}{N} \sum_{n=1}^N \sum_{k=K}^{K'} \Cap(x_n, p_k \given \Prefix') \wedge \one [ q_k \neq y_n ] + \Reg (K' - K) \nn \\
&= b(\Prefix, \x, \y) + \frac{1}{N} \sum_{n=1}^N \sum_{k=K+1}^{K'} \Cap(x_n, p_k \given \Prefix') \wedge \one [ q_k \neq y_n ] + \Reg (K' - K) \nn \\
&= b(\Prefix, \x, \y) + \frac{1}{N} \sum_{u=1}^U \sum_{n=1}^N \sum_{k=K+1}^{K'} \Cap(x_n, p_k \given \Prefix')
  \wedge \one [ q_k \neq y_n ]\, \one [ x_n \in e_u ] + \Reg (K' - K) \nn \\
&\ge b(\Prefix, \x, \y) + \frac{1}{N} \sum_{u=1}^U \sum_{n=1}^N \sum_{k=K+1}^{K'} \Cap(x_n, p_k \given \Prefix')
  \wedge \one [ y_n = q_u ]\, \one [ x_n \in e_u ] + \Reg (K' - K) \nn \\
&= b(\Prefix, \x, \y) + \Upsilon(\Prefix', K, \x, \y) + \Reg (K' - K),
\label{eq:equiv-pts-b}
\end{align}
where the last equality uses~\eqref{eq:upsilon}.
Next, we write ${b_0(\Prefix, \x, \y)}$ with respect to~${b_0(\Prefix', \x, \y)}$,
\begin{align}
&b_0(\Prefix, \x, \y) = \frac{1}{N} \sum_{u=1}^U \sum_{n=1}^N
    \neg\, \Cap(x_n, \Prefix) \wedge \one [ x_n \in e_u ]\, \one [ y_n = q_u ] \nn \\
&= \frac{1}{N} \sum_{u=1}^U \sum_{n=1}^N
    \left(\neg\, \Cap(x_n, \Prefix')+ \sum_{k=K+1}^{K'} \Cap(x_n, p_k \given \Prefix') \right)
    \wedge \one [ x_n \in e_u ]\, \one [ y_n = q_u ] \nn \\
&= b_0(\Prefix', \x, \y) + \frac{1}{N} \sum_{u=1}^U \sum_{n=1}^N
    \sum_{k=K+1}^{K'} \Cap(x_n, p_k \given \Prefix') \wedge \one [ x_n \in e_u ]\, \one [ y_n = q_u ].
\label{eq:b0}
\end{align}
Rearranging~\eqref{eq:b0} gives
\begin{align}
b_0(\Prefix', \x, \y) &= b_0(\Prefix, \x, y) - \Upsilon(\Prefix', K, \x, \y).
\label{eq:equiv-pts-b0}
\end{align}
Combining~\eqref{eq:equiv-pts-extend} with first~\eqref{eq:equiv-pts-b0}
and then~\eqref{eq:equiv-pts-b} gives the desired inequality in~\eqref{eq:identical}:
\begin{align}
\Obj(\RL', \x, \y) &\ge b(\Prefix', \x, \y) + b_0(\Prefix', \x, \y) \nn \\
&= b(\Prefix', \x, \y) + b_0(\Prefix, \x, y) - \Upsilon(\Prefix', K, \x, \y) \nn \\
&\ge b(\Prefix, \x, \y) + \Upsilon(\Prefix', K, \x, \y) + \Reg (K' - K) + b_0(\Prefix, \x, y) - \Upsilon(\Prefix', K, \x, \y) \nn \\
&= b(\Prefix, \x, \y) + b_0(\Prefix, \x, y) + \Reg (K' - K)
\ge b(\Prefix, \x, \y) + b_0(\Prefix, \x, \y). \nn
\end{align}

\section{Data Processing Details and Antecedent Mining}
\label{appendix:data}

In this appendix, we provide details regarding datasets used in our experiments
(Section~\ref{sec:experiments}).

\subsection{ProPublica Recidivism Data Set}
Table~\ref{tab:recidivism-data} shows the~6 attributes
and corresponding~17 categorical values
that we use for the ProPublica data set.
From these, we construct~17 single-clause antecedents, for example,
${(age = 23-25)}$.
We then combine pairs of these antecedents as conjunctions to form
two-clause antecedents, \eg ${(age = 23-25) \wedge (priors = 2-3)}$.
By virtue of our lower bound on antecedent support,
(Theorem~\ref{thm:min-capture},~\S\ref{sec:lb-support}),
we eliminate antecedents with support less than~0.005 or greater than~0.995,
since ${\Reg = 0.005}$ is the smallest regularization parameter value
we study for this problem.
With this filtering step, we generate between~121 and~123 antecedents for each fold;
without it, we would instead generate about~130 antecedents as input to our algorithm.

Note that we exclude the `current charge' attribute (which has two categorical values,
`misdemeanor' and `felony'); for individuals in the data set booked on multiple charges,
this attribute does not appear to consistently reflect the most serious charge.

\begin{table}[h!]
\centering
\begin{tabular}{l  | c  c  c}
Feature & Value range & Categorical values & Count \\
\hline
sex & --- & male, female & 2 \\
age & 18-96 & 18-20, 21-22, 23-25, 26-45, $>$45  & 5 \\
juvenile felonies & 0-20 & 0, $>$0 & 2 \\
juvenile misdemeanors & 0-13 & 0, $>$0 & 2 \\
juvenile crimes & 0-21 & 0, $>$0 & 2 \\
priors & 0-38 & 0, 1, 2-3, $>$3 & 4
\end{tabular}
\caption{Categorical features (6 attributes, 17 values) from the ProPublica data set.
We construct the feature \emph{juvenile crimes} from the sum of
\emph{juvenile felonies}, \emph{juvenile misdemeanors}, and
the number of juvenile crimes that were neither felonies nor misdemeanors (not shown).
}
\vspace{4mm}
\label{tab:recidivism-data}
\end{table}

\subsection{NYPD Stop-and-frisk Data Set}
This data set is larger than, but similar to the NYCLU stop-and-frisk data set, 
described next.

\subsection{NYCLU Stop-and-frisk Data Set}
The original data set contains 45,787 records,
each describing an incident involving a stopped person; the individual
was frisked in 30,345 (66.3\%) of records and and searched in 7,283 (15.9\%).
In 30,961 records, the individual was frisked and/or searched (67.6\%); of those,
a criminal possession of a weapon was identified 1,445 times (4.7\% of these records).
We remove 1,929 records with missing data, as well as a small number with extreme values
for the individual's age---we eliminate those with age~${< 12}$ or~${>89}$.
%
This yields a set of 29,595 records in which the individual was frisked and/or searched.
To address the class imbalance for this problem, we sample records from the
smaller class with replacement.
We generate cross-validation folds first, and then resample within each fold.
In our 10-fold cross-validation experiments, each training set contains 50,743 observations.
Table~\ref{tab:frisk-data} shows the 5 categorical attributes that we use,
corresponding to a total of 28 values.
Our experiments use these antecedents,
as well as negations of the 18 antecedents corresponding to the two features
\emph{stop reason} and \emph{additional circumstances},
which gives a total of 46 antecedents.

\begin{table}[h!]
\centering
\begin{tabular}{l | c  c}
Feature & Values & Count \\
\hline
stop reason & suspicious object, fits description, casing, & 9 \\
& acting as lookout, suspicious clothing, & \\
& drug transaction, furtive movements, & \\
& actions of violent crime, suspicious bulge \\
\hline
additional & proximity to crime scene, evasive response,  & 9 \\
circumstances & associating with criminals, changed direction, & \\
& high crime area, time of day,  & \\
& sights and sounds of criminal activity, & \\
& witness report, ongoing investigation & \\
\hline
city & Queens,  Manhattan, Brooklyn, Staten Island, Bronx & 5 \\
\hline
location & housing authority, transit authority, & 3 \\
& neither housing nor transit authority & \\
\hline
inside or outside & inside, outside & 2 \\
\end{tabular}
\caption{Categorical features (5 attributes, 28 values) from the NYCLU data set.}
\vspace{4mm}
\label{tab:frisk-data}
\end{table}

\section{Example Optimal Rule Lists, for Different Values of~$\Reg$}
\label{appendix:examples}

For each of our prediction problems, we provide listings of
optimal rule lists found by CORELS, across 10 cross-validation folds,
for different values of the regularization parameter~$\Reg$.
These rule lists correspond to the results for CORELS summarized
in Figures~\ref{fig:sparsity-compas} and~\ref{fig:sparsity-weapon}~(\S\ref{sec:sparsity}).
Recall that as~$\Reg$ decreases, optimal rule lists tend to grow in length. \\

\subsection{ProPublica Recidivism Data Set}
We show example optimal rule lists that predict two-year recidivism.
Figure~\ref{fig:recidivism-rule-list-02-01} shows examples for
regularization parameters~${\Reg = 0.02}$ and~0.01.
Figure~\ref{fig:recidivism-rule-list-005} shows examples for~${\Reg = 0.005}$;
Figure~\ref{fig:recidivism-all-folds}~(\S\ref{sec:examples}) showed two representative examples.

For the largest regularization parameter~${\Reg = 0.02}$~(Figure~\ref{fig:recidivism-rule-list-02-01}),
we observe that all folds identify the same length-1 rule list.
For the intermediate value~${\Reg = 0.01}$ (Figure~\ref{fig:recidivism-rule-list-02-01}),
the folds identify optimal 2-rule or 3-rule lists that contain the nearly same prefix rules,
up to permutations.
For the smallest value~${\Reg = 0.005}$~(Figure~\ref{fig:recidivism-rule-list-005}),
the folds identify optimal 3-rule or 4-rule lists that contain the nearly same prefix rules,
up to permutations.
Across all three regularization parameter values and all folds,
the prefix rules always predict the positive class label,
and the default rule always predicts the negative class label.
We note that our objective is not designed to enforce any of these properties.

%
\begin{figure}[h!]
\textbf{Two-year recidivism prediction $(\Reg = 0.02)$}
\vspace{1mm}
\begin{algorithmic}
\State \bif $(priors > 3)$ \bthen $yes$ \Comment{Found by all 10 folds}
\State \belse $no$
\end{algorithmic}
\vspace{5mm}
\textbf{Two-year recidivism prediction $(\Reg = 0.01)$}
\vspace{1mm}
\begin{algorithmic}
\State \bif $(priors > 3)$ \bthen $yes$ \Comment{Found by 3 folds}
\State \belif $(sex = male) \band (juvenile~crimes > 0)$ \bthen $yes$
\State \belse $no$
\end{algorithmic}
\vspace{1mm}
\begin{algorithmic}
\State \bif $(sex = male) \band (juvenile~crimes > 0)$ \bthen $yes$ \Comment{Found by 2 folds}
\State \belif $(priors > 3)$ \bthen $yes$
\State \belse $no$
\end{algorithmic}
\vspace{1mm}
\begin{algorithmic}
\State \bif $(age = 21-22) \band (priors = 2-3)$ \bthen $yes$ \Comment{Found by 2 folds}
\State \belif $(priors > 3)$ \bthen $yes$
\State \belif $(age = 18-20) \band (sex = male)$ \bthen $yes$
\State \belse $no$
\end{algorithmic}
\vspace{1mm}
\begin{algorithmic}
\State \bif $(age = 18-20) \band (sex = male)$ \bthen $yes$ \Comment{Found by 2 folds}
\State \belif $(priors > 3)$ \bthen $yes$
\State \belse $no$
\end{algorithmic}
\vspace{1mm}
\begin{algorithmic}
\State \bif $(priors > 3)$ \bthen $yes$ \Comment{Found by 1 fold}
\State \belif $(age = 18-20) \band (sex = male)$ \bthen $yes$
\State \belse $no$
\end{algorithmic}
\caption{Example optimal rule lists for the ProPublica data set,
found by CORELS with regularization parameters~${\Reg = 0.02}$~(top),
and~0.01~(bottom) across 10 cross-validation folds.
}
\label{fig:recidivism-rule-list-02-01}
\end{figure}

\begin{figure}[h!]
\textbf{Two-year recidivism prediction $(\Reg = 0.005)$}
\vspace{1mm}
\begin{algorithmic}
\State \bif $(age = 18-20) \band (sex = male)$ \bthen $yes$ \Comment{Found by 4 folds}
\State \belif $(age = 21-22) \band (priors = 2-3)$ \bthen $yes$
\State \belif $(priors > 3)$ \bthen $yes$
\State \belse $no$
\end{algorithmic}
\vspace{1mm}
\begin{algorithmic}
\State \bif $(age = 21-22) \band (priors = 2-3)$ \bthen $yes$  \Comment{Found by 2 folds}
\State \belif $(priors > 3)$ \bthen $yes$
\State \belif $(age = 18-20) \band (sex = male)$ \bthen $yes$
\State \belse $no$
\end{algorithmic}
\vspace{1mm}
\begin{algorithmic}
\State \bif $(age = 18-20) \band (sex = male)$ \bthen $yes$ \Comment{Found by 1 fold}
\State \belif $(priors > 3)$ \bthen $yes$
\State \belif $(age = 21-22) \band (priors = 2-3)$ \bthen $yes$
\State \belse $no$
\end{algorithmic}
\vspace{1mm}
\begin{algorithmic}
\State \bif $(age = 18-20) \band (sex = male)$ \bthen $yes$ \Comment{Found by 1 fold}
\State \belif $(age = 21-22) \band (priors = 2-3)$ \bthen $yes$
\State \belif $(age = 23-25) \band (priors = 2-3)$ \bthen $yes$
\State \belif $(priors > 3)$ \bthen $yes$
\State \belse $no$
\end{algorithmic}
\vspace{1mm}
\begin{algorithmic}
\State \bif $(age = 18-20) \band (sex = male)$ \bthen $yes$ \Comment{Found by 1 fold}
\State \belif $(age = 21-22) \band (priors = 2-3)$ \bthen $yes$
\State \belif $(priors > 3)$ \bthen $yes$
\State \belif $(age = 23-25) \band (priors = 2-3)$ \bthen $yes$
\State \belse $no$
\end{algorithmic}
\vspace{1mm}
\begin{algorithmic}
\State \bif $(age = 21-22) \band (priors = 2-3)$ \bthen $yes$ \Comment{Found by 1 fold}
\State \belif $(age = 23-25) \band (priors = 2-3)$ \bthen $yes$
\State \belif $(priors > 3)$ \bthen $yes$
\State \belif $(age = 18-20) \band (sex = male)$ \bthen $yes$
\State \belse $no$
\end{algorithmic}
\caption{Example optimal rule lists for the ProPublica data set,
found by CORELS with regularization parameters~${\Reg = 0.005}$,
across 10 cross-validation folds.
}
\label{fig:recidivism-rule-list-005}
\end{figure}

\clearpage
\subsection{NYPD Stop-and-frisk Data Set}
We show example optimal rule lists that predict whether a weapon
will be found on a stopped individual who is frisked or searched, learned from the NYPD data set.

\begin{figure}[b!]
\textbf{Weapon prediction $(\Reg = 0.01, \text{Feature Set~C})$}
\vspace{1mm}
\begin{algorithmic}
\State \bif $(stop~reason = suspicious~object)$ \bthen $yes$ \Comment{Found by 8 folds}
\State \belif $(location = transit~authority)$ \bthen $yes$
\State \belse $no$
\end{algorithmic}
\vspace{1mm}
\begin{algorithmic}
\State \bif $(location = transit~authority)$ \bthen $yes$ \Comment{Found by 2 folds}
\State \belif $(stop~reason = suspicious~object)$ \bthen $yes$
\State \belse $no$
\end{algorithmic}
\vspace{5mm}
\textbf{Weapon prediction $(\Reg = 0.005, \text{Feature Set~C})$}
\begin{algorithmic}
\State \bif $(stop~reason = suspicious~object)$ \bthen $yes$ \Comment{Found by 7 folds}
\State \belif $(location = transit~authority)$ \bthen $yes$
\State \belif $(location = housing~authority)$ \bthen $no$
\State \belif $(city = Manhattan)$ \bthen $yes$
\State \belse $no$
\end{algorithmic}
\vspace{1mm}
\begin{algorithmic}
\State \bif $(stop~reason = suspicious~object)$ \bthen $yes$ \Comment{Found by 1 fold}
\State \belif $(location = housing~authority)$ \bthen $no$
\State \belif $(location = transit~authority)$ \bthen $yes$
\State \belif $(city = Manhattan)$ \bthen $yes$
\State \belse $no$
\end{algorithmic}
\vspace{1mm}
\begin{algorithmic}
\State \bif $(stop~reason = suspicious~object)$ \bthen $yes$ \Comment{Found by 1 fold}
\State \belif $(location = housing~authority)$ \bthen $no$
\State \belif $(city = Manhattan)$ \bthen $yes$
\State \belif $(location = transit~authority)$ \bthen $yes$
\State \belse $no$
\end{algorithmic}
\vspace{1mm}
\begin{algorithmic}
\State \bif $(stop~reason = suspicious~object)$ \bthen $yes$ \Comment{Found by 1 fold}
\State \belif $(location = transit~authority)$ \bthen $yes$
\State \belif $(city = Bronx)$ \bthen $no$
\State \belif $(location = housing~authority)$ \bthen $no$
\State \belif $(stop~reason = furtive~movements)$ \bthen $no$
\State \belse $yes$
\end{algorithmic}
\caption{Example optimal rule lists for the NYPD stop-and-frisk data set,
found by CORELS with regularization parameters~${\Reg = 0.01}$~(top) and~0.005~(bottom),
across 10 cross-validation folds.
}
\label{fig:cpw-rule-list}
\end{figure}

\begin{figure}[b!]
\textbf{Weapon prediction $(\Reg = 0.01, \text{Feature Set~D})$}
\vspace{1mm}
\begin{algorithmic}
\State \bif $(stop~reason = suspicious~object)$ \bthen $yes$ \Comment{Found by 7 folds}
\State \belif $(inside~or~outside = outside)$ \bthen $no$
\State \belse $yes$
\end{algorithmic}
\vspace{1mm}
\begin{algorithmic}
\State \bif $(stop~reason = suspicious~object)$ \bthen $yes$ \Comment{Found by 3 folds}
\State \belif $(inside~or~outside = inside)$ \bthen $yes$
\State \belse $no$
\end{algorithmic}
\vspace{5mm}
%
\textbf{Weapon prediction $(\Reg = 0.005, \text{Feature Set~D})$}
\vspace{1mm}
\begin{algorithmic}
\State \bif $(stop~reason = suspicious~object)$ \bthen $yes$ \Comment{Found by 2 folds}
\State \belif $(stop~reason = acting~as~lookout)$ \bthen $no$
\State \belif $(stop~reason = fits~description)$ \bthen $no$
\State \belif $(stop~reason = furtive~movements)$ \bthen $no$
\State \belse $yes$
\end{algorithmic}
\vspace{1mm}
\begin{algorithmic}
\State \bif $(stop~reason = suspicious~object)$ \bthen $yes$ \Comment{Found by 2 folds}
\State \belif $(stop~reason = furtive~movements)$ \bthen $no$
\State \belif $(stop~reason = acting~as~lookout)$ \bthen $no$
\State \belif $(stop~reason = fits~description)$ \bthen $no$
\State \belse $yes$
\end{algorithmic}
\vspace{1mm}
\begin{algorithmic}
\State \bif $(stop~reason = suspicious~object)$ \bthen $yes$ \Comment{Found by 1 fold}
\State \belif $(stop~reason = acting~as~lookout)$ \bthen $no$
\State \belif $(stop~reason = furtive~movements)$ \bthen $no$
\State \belif $(stop~reason = fits~description)$ \bthen $no$
\State \belse $yes$
\end{algorithmic}
\begin{algorithmic}
\vspace{1mm}
\State \bif $(stop~reason = suspicious~object)$ \bthen $yes$ \Comment{Found by 1 fold}
\State \belif $(stop~reason = fits~description)$ \bthen $no$
\State \belif $(stop~reason = acting~as~lookout)$ \bthen $no$
\State \belif $(stop~reason = furtive~movements)$ \bthen $no$
\State \belse $yes$
\end{algorithmic}
\vspace{1mm}
\begin{algorithmic}
\State \bif $(stop~reason = suspicious~object)$ \bthen $yes$ \Comment{Found by 1 fold}
\State \belif $(stop~reason = furtive~movements)$ \bthen $no$
\State \belif $(stop~reason = fits~description)$ \bthen $no$
\State \belif $(stop~reason = acting~as~lookout)$ \bthen $no$
\State \belse $yes$
\end{algorithmic}
\caption{Example optimal rule lists for the NYPD stop-and-frisk data set (Feature Set~D)
found by CORELS with regularization parameters~${\Reg = 0.01}$~(top) and~0.005~(bottom),
across 10 cross-validation folds.
For~${\Reg = 0.005}$, we show results from~7 folds; the remaining~3 folds were equivalent,
up to a permutation of the prefix rules, and started with the same first prefix rule.
}
\label{fig:cpw-noloc-rule-list}
\end{figure}

\clearpage
\subsection{NYCLU Stop-and-frisk Data Set}
We show example optimal rule lists that predict whether a weapon
will be found on a stopped individual who is frisked or searched, learned from the NYCLU data set.
Figure~\ref{fig:weapon-rule-list-04-01} shows
regularization parameters~${\Reg = 0.04}$ and~0.01,
and Figure~\ref{fig:weapon-rule-list-0025} shows~${\Reg = 0.0025}$.
We showed a representative solution for~${\Reg = 0.01}$ in
Figure~\ref{fig:weapon-rule-list}~(\S\ref{sec:examples}).

For each of the two larger regularization parameters in Figure~\ref{fig:weapon-rule-list-04-01},
${\Reg = 0.04}$~(top) and 0.01~(bottom), we observe that across the folds,
all the optimal rule lists contain the same or equivalent rules, up to a permutation.
With the smaller regularization parameter~${\Reg = 0.0025}$ (Figure~\ref{fig:weapon-rule-list-0025}),
we observe a greater diversity of longer optimal rule lists, though they share similar structure.

%
\begin{figure}[b!]
\textbf{Weapon prediction $(\Reg = 0.04)$}
\vspace{1mm}
\begin{algorithmic}
\State \bif $(stop~reason = suspicious~object)$ \bthen $yes$ \Comment{Found by 7 folds}
\State \belif $(stop~reason \neq suspicious~bulge)$ \bthen $no$
\State \belse $yes$
\end{algorithmic}
\vspace{1mm}
\begin{algorithmic}
\State \bif $(stop~reason = suspicious~bulge)$ \bthen $yes$ \Comment{Found by 3 folds}
\State \belif $(stop~reason \neq suspicious~object)$ \bthen $no$
\State \belse $yes$
\end{algorithmic}
\vspace{5mm}
\textbf{Weapon prediction $(\Reg = 0.01)$}
\vspace{1mm}
\begin{algorithmic}
\State \bif $(stop~reason = suspicious~object)$ \bthen $yes$ \Comment{Found by 4 folds}
\State \belif $(location = transit~authority)$ \bthen $yes$
\State \belif $(stop~reason \neq suspicious~bulge)$ \bthen $no$
\State \belse $yes$
\end{algorithmic}
\vspace{1mm}
\begin{algorithmic}
\State \bif $(location = transit~authority)$ \bthen $yes$ \Comment{Found by 3 folds}
\State \belif $(stop~reason = suspicious~bulge)$ \bthen $yes$
\State \belif $(stop~reason = suspicious~object)$ \bthen $yes$
\State \belse $no$
\end{algorithmic}
\vspace{1mm}
\begin{algorithmic}
\State \bif $(location = transit~authority)$ \bthen $yes$ \Comment{Found by 2 folds}
\State \belif $(stop~reason = suspicious~object)$ \bthen $yes$
\State \belif $(stop~reason = suspicious~bulge)$ \bthen $yes$
\State \belse $no$
\end{algorithmic}
\vspace{1mm}
\begin{algorithmic}
\State \bif $(location = transit~authority)$ \bthen $yes$ \Comment{Found by 1 fold}
\State \belif $(stop~reason = suspicious~object)$ \bthen $yes$
\State \belif $(stop~reason \neq suspicious~bulge)$ \bthen $no$
\State \belse $yes$
\end{algorithmic}
\caption{Example optimal rule lists for the NYCLU stop-and-frisk data set,
found by CORELS with regularization parameters~${\Reg = 0.04}$~(top) and~0.01~(bottom),
across 10 cross-validation folds.
}
\label{fig:weapon-rule-list-04-01}
\end{figure}

\begin{figure}[h!]
\textbf{Weapon prediction $(\Reg = 0.0025)$}
\vspace{0.5mm}
\scriptsize
\begin{algorithmic}
\State \bif $(stop~reason = suspicious~object)$ \bthen $yes$ \Comment{Found by 4 folds $(K=7)$}
\State \belif $(stop~reason = casing)$ \bthen $no$
\State \belif $(stop~reason = suspicious~bulge)$ \bthen $yes$
\State \belif $(stop~reason = fits~description)$ \bthen $no$
\State \belif $(location = transit~authority)$ \bthen $yes$
\State \belif $(inside~or~outside = inside)$ \bthen $no$
\State \belif $(city = Manhattan)$ \bthen $yes$
\State \belse $no$
\end{algorithmic}
\vspace{0.5mm}
\begin{algorithmic}
\State \bif $(stop~reason = suspicious~object)$ \bthen $yes$ \Comment{Found by 1 fold $(K=6)$}
\State \belif $(stop~reason = casing)$ \bthen $no$
\State \belif $(stop~reason = suspicious~bulge)$ \bthen $yes$
\State \belif $(stop~reason = fits~description)$ \bthen $no$
\State \belif $(location = housing~authority)$ \bthen $no$
\State \belif $(city = Manhattan)$ \bthen $yes$
\State \belse $no$
\end{algorithmic}
\vspace{0.5mm}
\begin{algorithmic}
\State \bif $(stop~reason = suspicious~object)$ \bthen $yes$ \Comment{Found by 1 fold $(K=6)$}
\State \belif $(stop~reason = suspicious~bulge)$ \bthen $yes$
\State \belif $(location = housing~authority)$ \bthen $no$
\State \belif $(stop~reason = casing)$ \bthen $no$
\State \belif $(stop~reason = fits~description)$ \bthen $no$
\State \belif $(city = Manhattan)$ \bthen $yes$
\State \belse $no$
\end{algorithmic}
\vspace{0.5mm}
\begin{algorithmic}
\State \bif $(stop~reason = suspicious~object)$ \bthen $yes$ \Comment{Found by 1 fold $(K=6)$}
\State \belif $(stop~reason = casing)$ \bthen $no$
\State \belif $(stop~reason = suspicious~bulge)$ \bthen $yes$
\State \belif $(stop~reason = fits~description)$ \bthen $no$
\State \belif $(location = housing~authority)$ \bthen $no$
\State \belif $(city = Manhattan)$ \bthen $yes$
\State \belse $no$
\end{algorithmic}
\vspace{0.5mm}
\begin{algorithmic}
\State \bif $(stop~reason = drug~transaction)$ \bthen $no$ \Comment{Found by 1 fold $(K=8)$}
\State \belif $(stop~reason = suspicious~object)$ \bthen $yes$
\State \belif $(stop~reason = suspicious~bulge)$ \bthen $yes$
\State \belif $(location = housing~authority)$ \bthen $no$
\State \belif $(stop~reason = fits~description)$ \bthen $no$
\State \belif $(stop~reason = casing)$ \bthen $no$
\State \belif $(city = Manhattan)$ \bthen $yes$
\State \belif $(city = Bronx)$ \bthen $yes$
\State \belse $no$
\end{algorithmic}
\vspace{0.5mm}
\begin{algorithmic}
\State \bif $(stop~reason = suspicious~object)$ \bthen $yes$ \Comment{Found by 1 fold $(K=9)$}
\State \belif $(stop~reason = casing)$ \bthen $no$
\State \belif $(stop~reason = suspicious~bulge)$ \bthen $yes$
\State \belif $(stop~reason = fits~description)$ \bthen $no$
\State \belif $(location = transit~authority)$ \bthen $yes$
\State \belif $(inside~or~outside = inside)$ \bthen $no$
\State \belif $(city = Manhattan)$ \bthen $yes$
\State \belif $(additional~circumstances = changed~direction)$ \bthen $no$
\State \belif $(city = Bronx)$ \bthen $yes$
\State \belse $no$
\end{algorithmic}
\vspace{0.5mm}
\begin{algorithmic}
\State \bif $(stop~reason = suspicious~object)$ \bthen $yes$ \Comment{Found by 1 fold $(K=10)$}
\State \belif $(stop~reason = casing)$ \bthen $no$
\State \belif $(stop~reason = suspicious~bulge)$ \bthen $yes$
\State \belif $(stop~reason = actions~of~violent~crime)$ \bthen $no$
\State \belif $(stop~reason = fits~description)$ \bthen $no$
\State \belif $(location = transit~authority)$ \bthen $yes$
\State \belif $(inside~or~outside = inside)$ \bthen $no$
\State \belif $(city = Manhattan)$ \bthen $yes$
\State \belif $(additional~circumstances = evasive~response)$ \bthen $no$
\State \belif $(city = Bronx)$ \bthen $yes$
\State \belse $no$
\end{algorithmic}
\caption{Example optimal rule lists for the NYCLU stop-and-frisk data set~${\Reg = 0.0025}$.
}
\label{fig:weapon-rule-list-0025}
\end{figure}

\clearpage
\section{Additional Results on Predictive Performance and Model Size for
CORELS and Other Algorithms}
\label{appendix:cpw}

In this appendix, we plot TPR, FPR, and model size for CORELS and three other
algorithms, using the NYPD data set (Feature Set~D).

\begin{figure}[htb!]
\begin{center}
\includegraphics[trim={17mm, 0mm, 27mm, 0mm},
width=0.7\textwidth]{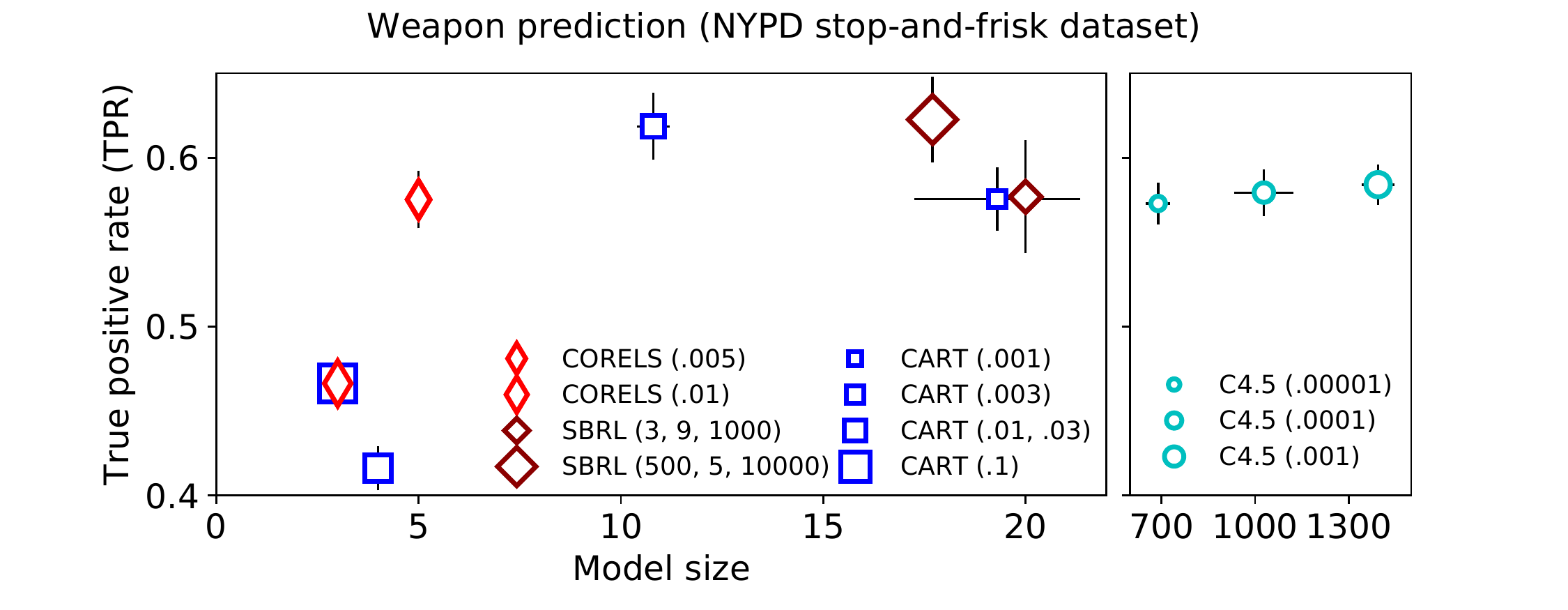}
\includegraphics[trim={17mm, 10mm, 27mm, 4mm},
width=0.7\textwidth]{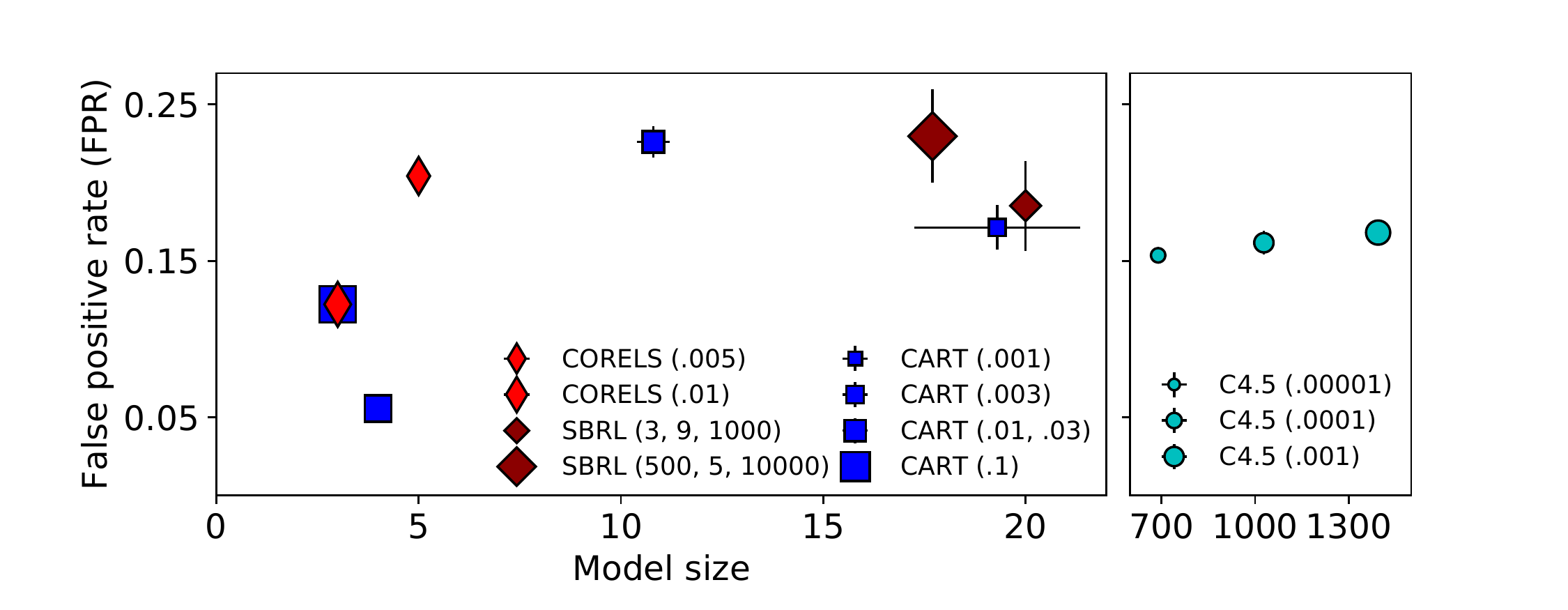}
\end{center}
\caption{TPR (top) and FPR (bottom)
for the test set, as a function of model size, across different methods,
for weapon prediction with the NYPD stop-and-frisk data set (Feature Set~D).
In the legend, numbers in parentheses are algorithm parameters,
as in Figure~\ref{fig:sparsity-weapon}.
Legend markers and error bars indicate means and standard deviations,
respectively, across cross-validation folds.
%
%
C4.5 finds large models for all tested parameters.
}
\label{fig:sparsity-cpw}
\end{figure}

\bibliography{17-716}

\end{document}